\documentclass[11pt,a4paper]{article}
\usepackage[utf8]{inputenc}
\usepackage[english]{babel}
\usepackage[margin=0.8in]{geometry}
\usepackage{mathrsfs} 
\usepackage{amsmath}
\usepackage{amssymb}
\usepackage{amsthm}
\usepackage[english]{babel}
\usepackage{natbib}
\bibliographystyle{abbrvnat}
\setcitestyle{authoryear}

\usepackage{indentfirst}
%packages for algorithms

%\usepackage{algorithmic}
%\usepackage[linesnumbered,ruled,vlined]{algorithm2e}
\usepackage[linesnumbered,ruled,vlined]{algorithm2e}

\usepackage{setspace}
\usepackage{wasysym}
\usepackage{color}
\usepackage{blindtext}

\usepackage{hyperref}
\hypersetup{
	colorlinks=true,
	linkcolor=blue,
	filecolor=red,      
	urlcolor=red,
	citecolor=blue
}

\usepackage{textalpha}

\usepackage{graphics}
\usepackage{graphicx}
\usepackage{authblk}
\RestyleAlgo{ruled}
\urlstyle{same}

\DeclareMathOperator*{\argmin}{arg\,min}

\DeclareMathOperator{\cov}{Cov}

%some new  commands
\newcommand{\fb}{\textbf{f}}
\newcommand{\Rb}{\mathbb{R}} 
 %parameter space
%letters with subscripts and superscripts

\newcommand{\Ab}{\bar{A}}
\newcommand{\At}{\tilde{A}}
\newcommand{\xt}{\tilde{\xi}}
\newcommand{\xb}{\bar{\xi}}
\newcommand{\Ah}{\hat{A}}

\newcommand{\Lc}{\mathcal{L}}
\newcommand{\Ct}{\tilde{C}}

\newcommand{\cR}{\mathcal{R}}
\newcommand{\cC}{\mathcal{C}}

\newcommand{\Sc}{\mathcal{S}}
\newcommand{\Eb}{\mathbb{E}}
\newcommand{\Pb}{\mathbb{P}}
\newcommand{\Rm}{\mathbb R^{m_1\times m_2}}
%The sums
\newcommand{\sumi}{\sum_{i=1}^{n_0}}
\newcommand{\sumii}{\sum_{k=0}^{K}\sum_{i=1}^{n_k}}
\newcommand{\sumo}{\dfrac{1}{N}\sum_{k=0}^{K} \sum_{i=1}^{n_k} \Big(Y_{i}^{(k)}-\langle X_{i}^{(k)}, \tilde{A}\rangle  \Big)^2}

\newcommand{\sumx}{\dfrac{1}{N}\sum_{k=0}^{K} \sum_{i=1}^{n_k}  \langle X_i^{(k)}, \tilde{A}-\bar{A}\rangle^2}
%norms
\newcommand{\VAb}{\Vert\bar{A}\Vert_*}
\newcommand{\VAt}{\Vert\tilde{A}\Vert_*}
\newcommand{\mav}{\max(a^2,v^2)}

\newcommand{\VertF}{\Vert_{w(F)}}
\newcommand{\diffA}{\tilde{A}-\bar{A}   }
\newcommand{\ind}{\textbf{1}}
\newcommand{\Yi}{Y_i^{(k)}}
\newcommand{\X}{X_i^{(k)}}
\newcommand{\xik}{\xi^{(k)}_{i}}
\newcommand{\xbk}{\bar{\xi}^{(k)}_{i}}
 %Xzero
 %Yzero
\newcommand{\lb}{\left(}
\newcommand{\rb}{\right)}
\newcommand{\lab}{\left \{}
\newcommand{\rab}{\right\}}
\newcommand{\la}{\langle}
\newcommand{\ra}{\rangle}
\newcommand{\lnorm}{\left\|}
\newcommand{\rnorm}{\right\|}
%%%%%%%%%%%%%%%%%%%%%%%%%
%\newtheorem{theorem}{Theorem}
\newtheorem{theorem}{Theorem}[section]

\newtheorem{lemma}[theorem]{Lemma}
\theoremstyle{remark}
\newtheorem*{remark}{Remark}

%%%my new command
\newtheorem{assumption}{Assumption}

%author title and date
\title{Transfer Learning for Matrix Completion }
\author{Dali Liu \footnote{Email:liudali@msu.edu}, Haolei Weng\footnote{Email:wenghaol@msu.edu}}
\affil{Department of Statistics and Probability, Michigan State University}
\date{}

\begin{document}
\maketitle

\begin{abstract} 
In this paper, we explore the knowledge  transfer under the setting of matrix completion, which aims to enhance the estimation of a low-rank target matrix with auxiliary data available. We propose a transfer learning procedure given prior information on which source datasets are favorable. We study its convergence rates and prove its minimax optimality. Our analysis reveals that with the source matrices close enough to the target matrix, out method outperforms the traditional method using the single target data. In particular, we leverage the advanced  sharp concentration inequalities introduced in \cite{brailovskaya2024universality} to eliminate a logarithmic factor in the convergence rate, which is crucial for proving the minimax optimality. When the relevance of source datasets is unknown, we develop an efficient detection procedure to identify informative sources and establish its selection consistency. Simulations and real data analysis are conducted to support the validity of our methodology. 

\end{abstract}

\section{Introduction}

In the era of big data, collecting and sharing information has been much easier than ever before with the advances of technology. When the data on hand are insufficient, it is natural to consider borrowing information from external sources. One prominent conception motivated by this purpose is \textit{transfer learning},  which aims to enhance the learning performance on a target task by transferring knowledge from some related sources. Transfer learning has achieved great success across various fields including speech recolonization \citep{huang2013cross}, genre classification \citep{choi2017transfer},  computer vision \citep{liu2011cross,pan2008transfer}, natural language processing \citep{houlsby2019parameter,adams2017cross}, and  genetic studies \citep{lotfollahi2022mapping,theodoris2023transfer}. We refer to the survey papers \citep{pan2009survey,weiss2016survey,tan2018survey,zhuang2020comprehensive} and references therein for comprehensive overviews of the progresses in transfer learning.

Despite its practical success, theoretical developments in transfer learning have only emerged more recently. A variety of  guarantees and frameworks have been established for  domains  such as  high-dimensional linear models \citep{cai2021transfer,tian2023transfer,li2024estimation}, functional data analysis \citep{cai2024transfer, cai2024nonparametric}, classification models \citep{cai2021transfer,reeve2021adaptive}, Gaussian graphical models \citep{li2022transfer}, estimating equations \citep{yan2024transfer}, and multi-armed bandits \citep{cai2024multibandits}, to name just a few.

In this paper, we consider a transfer learning problem in the context of matrix completion.
Matrix completion refers to recovering a large matrix from a small subset of entries. It finds important applications in various fields such as
%\textcolor{red}{(mention a few more fields with classical papers)}
localization \citep{so2007theory}, collaborative filtering \citep{rennie2005fast},computational biology \citep{kapur2016gene}, recommendation systems \citep{jannach2016recommender} and signal processing \citep{weng2012low}.
The matrix completion problem can be formulated as follows. 
Let $A_0\in \Rb^{m_1\times m_2}$ be the unknown low-rank matrix we aim to recover. We observe independent samples $\{(X^{(0)}_i, Y^{(0)}_i)\}_{i=1}^{n_0}$ from the model
\begin{equation}
\label{TL:model:mc target}
Y^{(0)}_i=\la X^{(0)}_i, A_0\ra +\xi^{(0)}_i, \quad i=1,\ldots, n_0,
\end{equation}
where $\langle \cdot ,\cdot \rangle$ is the Frobenius inner product, and $\xi^{(0)}_i$ are the noise variables. The sampling matrices $X^{(0)}_i$ are drawn from $\{ e_j(m_1)e_k^T(m_2), 1\leq j\leq m_1, 1\leq k\leq m_2\}$, where 
$\{e_j(m_1)\}_{j=1}^{m_1},\{e_k(m_2)\}_{k=1}^{m_2}$ denote standard basis vectors in $\mathbb{R}^{m_1}, \mathbb{R}^{m_2}$, respectively. Matrix completion on a single target data has been extensively studied. Various settings, methods, and analysis frameworks have been established. See, for example, \cite{candes2010power,candes2010matrix,cai2010singular,koltchinskii2011nuclear,negahban2012restricted,candes2012exact,klopp2014noisy,klopp2017robust,minsker2018sub,elsener2018robust,yu2024low}. In the context of transfer learning, we observe additional $K$ source datasets $\{(X^{(k)}_i, Y^{(k)}_i)\}_{i=1}^{n_k}$, $k\in [K]$, following the model
\begin{equation}
\label{TL:model:mc source}
    Y^{(k)}_i=\la X^{(k)}_i, A_k\ra +\xi^{(k)}_i, \quad i=1,\ldots,n_k.
\end{equation}
The source matrices $A_k$ are unknown and generally different from the target matrix $A_0$. We will study the estimation of $A_0$ utilizing both the target data $\{(X^{(0)}_i, Y^{(0)}_i)\}_{i=1}^{n_0}$ from \eqref{TL:model:mc target} and source data $\{(X^{(k)}_i, Y^{(k)}_i)\}_{i=1}^{n_k}$, $k\in [K]$ from \eqref{TL:model:mc source}.

There is a growing body of literature  regarding leveraging external source data in low-rank matrix estimation.
%\textcolor{red}{(any other papers except the three?)}.
\cite{alaya2019collective} study  the case where multiple matrix data sets are available,  with  observations that   can be count, binary or continuous, and  sampled from exponential families.
\cite{sun2022matrix} propose a non-convex method that uses Frobenius norm penalty for temporal smoothness and auxiliary data alignments. 
\cite{he2024representational} consider the situation  where the source matrices provide useful information on the singular vector space. 
\cite{chen2024dynamic} study the setting where the true matrices vary with time. Their estimator is derived from a kernel-weighted $L^2$ loss with a nuclear norm penalty. 
\cite{mcgrath2024learner} proposes a method that  aligns latent row and column spaces between populations through a penalized low-rank approximation.
\cite{jalan2025optimal}  propose a method that uses a noisy, incomplete source matrix and develop an efficient estimation framework—particularly in the active sampling setting—that avoids the incoherence assumption and achieves minimax lower bounds for entrywise error.

Our work is different from the above-mentioned ones in several aspects. First, our proposed transfer learning procedure is based on pooling and debiasing assuming the source matrices are close to the target matrix in nuclear norm - a similar idea has proven successful in transfer learning on high-dimensional regression problems \citep{bastani2021predicting, li2021transfer, tian2023transfer, li2024estimation}. Second, we establish that the convergence rate of our method is minimax optimal. This requires a delicate analysis by employing sharp concentration inequalities from \cite{brailovskaya2024universality} to eliminate the
logarithmic dimensional factor that often appears in the convergence rate of matrix completion. Third, we introduce an efficient detection algorithm with selection consistency guarantees, to identify informative sources in more practical scenarios where the usefulness of sources is unknown.

Our paper is organized as follows. Section \ref{TL:sec:pre} introduces the problem setting and the notations used throughout the paper. In Sections \ref{TL:sec:main} and \ref{TL:sec:theory}, we introduce our proposed transfer learning methods and their associated theory, respectively. Section \ref{TL:sec:numerical} presents numerical experiments to support our theoretical findings. In Section \ref{TL:sec: real data}, we illustrate the utility of our transfer learning methodology via an application to the recovery of total electron content (TEC) maps. Section \ref{TL:sec:discussion} summarizes our contributions and discusses future research directions. All the proofs are relegated to Section \ref{TL:sec:proofs}.

\section{Problem Setup}\label{TL:sec:pre}
\noindent We consider transfer learning for the matrix completion problem. Formally, the target data consists of $n_0$ independent samples $\{(X_i^{(0)},Y_i^{(0)})\}_{i\in [n_0]}$ from the model
	\begin{equation*}
	Y_i^{(0)}=\langle X_i^{(0)},A_{0}\rangle +\xi_i^{(0)},~i\in [n_0],
\end{equation*}
where $\langle \cdot ,\cdot \rangle$ is the Frobenius inner product. The matrix $A_0\in \mathbb{R}^{m_1\times m_2}$ is the target low-rank matrix we aim to recover. The sampling matrix $X_i^{(0)}$ takes values in the set $ \{ e_i(m_1) e_j(m_2)^T, i\in [m_1],j\in [m_2] \}$ where $ \{e_i(m_1)\}_{i\in [m_1]}, \{e_j(m_2)\}_{j\in [m_2]} $ are the canonical basis vectors of $ \Rb^{m_1} $ and $\Rb^{m_2}$ respectively. This means $X_i^{(0)}$ only has one non-zero entry, indicating the location of the sampled entry. The distribution of the noise $ \xi_i^{(0)} $ may depend on $X_i^{(0)}$. 

In the context of transfer learning, there are also $ K $ independent auxiliary datasets, referred to as source data. The $k$-th source data contains $n_k$ independent samples $\{(X_i^{(k)},Y_i^{(k)})\}_{i\in [n_k]}$ from the model
\begin{equation*}
	Y_i^{(k)}=\langle X_i^{(k)},A_k\rangle +\xi_i^{(k)}, ~i\in [n_k],
\end{equation*}
for $ k=1,2,\ldots,K$. The source matrices $\{A_k\}_{k\in [K]}$ are unknown and can be different from our target matrix $A_0$. Motivated by transfer learning for high-dimensional sparse regression problems \citep{li2021transfer, tian2023transfer, li2024estimation}, we characterize the relatedness of target and source data via the similarity between the target and source matrices as shown in the following assumption.

\begin{assumption} \label{TL:Assumption: target source}
The source matrices are close to the target matrix in the nuclear norm, i.e.,
\begin{equation*}
 \| A_k-A_0\|_*\leq h, ~k\in [K], 
\end{equation*}
 where $\Vert \cdot\Vert_*$ denotes the nuclear norm of a matrix.
\end{assumption}

Assumption \ref{TL:Assumption: target source} requires the contrast matrix $A_k-A_0$ to be approximately low rank, with eigenvalues having $\ell_1$ sparsity at most $h$. The smaller $h$ is, the more informative the source data are. As will be shown in Section \ref{TL:theo:transmc1}, as long as $h$ is relatively small, our proposed transfer learning methodology can improve the estimation of $A_0$ by transferring knowledge from sources. 

For each dataset, we make the following assumptions that are commonly found in the matrix completion literature \citep{koltchinskii2011nuclear, klopp2014noisy, elsener2018robust, yu2024low}.

\begin{assumption} \label{TL:standard:mc}
$ A_0 $ is of rank $ r $. The entries of target and source matrices are uniformly bounded:
 \begin{equation*}
     \Vert A_k\Vert_{\infty}\leq a, ~~k=0,1,...,K,
 \end{equation*}
 where $\Vert\cdot\Vert_{\infty}$ denotes the maximum absolute entry of a matrix. %\textcolor{red}{(I split the original Assumption 1 into two assumptions to separate assumption within each data from assumption between data)}
\end{assumption}

%\textcolor{red}{(mention a few matrix completion papers that make the same assumption as in Assumption 2)}.

Sampling distributions are specified as follows.

\begin{assumption}\label{TL:Assumption: mask matrices}
Denote $  P^{(k)}_{jl}=\mathbb P(X_i^{(k)}=e_j(m_1) e_l(m_2)^T) $. For each $k=0,1,\ldots, K$, the distribution of $ X_i^{(k)} $ satisfies  the following three conditions:
\begin{align*}
&(1)~\min_{jl}P^{(k)}_{jl}\geq \dfrac{1}{\mu m_1m_2},	\\
&(2)~\max_{j} \mathcal R^{(k)}_j\leq \dfrac{L_1}{m},~\max_{l} \mathcal C^{(k)}_l\leq \dfrac{L_1}{m},\\
&(3)~\max_{jl} P^{(k)}_{jl}\leq \dfrac{L_2}{m\log^{3} d},
\end{align*}
where $m=m_1\wedge m_2, d=m_1+m_2$, $ \mu\geq 1 $, and 
\begin{equation*}
\cR^{(k)}_j=\sum_{l=1}^{m_2}P^{(k)}_{jl},~1\leq j\leq m_1, \quad \cC^{(k)}_l =\sum_{j=1}^{m_1} P^{(k)}_{jl}, ~1\leq j\leq m_2.
\end{equation*}
Here,  $L_1$ and $L_2$ are positive constants.
\end{assumption}

Assumption \ref{TL:Assumption: mask matrices} allows for general non-uniform sampling distributions. Conditions (1)(2) are the same as in \cite{klopp2014noisy}. Condition (3) prevents sampling from concentrating on a few entries, and will be useful to remove a logarithmic factor from the convergence rate. Since $\max_{jl} \mathbb P^{(k)}_{jl}=O(\frac{1}{m}) $ under Conditions (1) and (2), Condition (3) is very mild.

Finally, the noises are mean-zero sub-gaussian variables.

\begin{assumption}\label{TL:Assumption: noise}
For every $k=0,1,\ldots, K$, given $\X$,  we have  $ \Eb[\xi_i^{(k)}]=0, {\rm Var}(\xi_i^{(k)})=v^2$, %\textcolor{red}{(do you need all of them have the same variance? or only need to be bounded by $\sigma^2$?)} 
and
\begin{equation*}
    \Eb[\exp(\lambda\xi_i^{(k)})]\leq \exp(L_3^2\lambda^2v^2), \quad \forall \lambda \in \mathbb{R},
\end{equation*} 
where $L_3>0$ is a constant. 
\end{assumption}

\subsection{Notation}

We collect the notations used throughout this paper for convenience. Unless otherwise specified, $A,B$ are matrices in $\Rm$.  
\begin{itemize}
\item 	$ M=m_1\vee m_2 $, $ m=m_1\wedge m_2 $,	$ d=m_1+m_2 $, $ N=\sum_{i=0}^{K} n_i $, $ \alpha_k=\dfrac{n_k}{N} $.
\item $\langle A,B\rangle=\operatorname{tr}(A^TB)$, $\bar{A}=\sum_{k=0}^{K} \alpha_k A_k$, $ \Delta=A_0-\bar{A} $.

\item For a vector $x$, $\|x\|_2=(\sum x_i^2)^{1/2}$. Let $A=(a_{ij})$, for $i\in [m_1],j\in [m_2]$, and let $  \{\sigma_l\}_{l=1}^m $ denote ordered singular values of $ A $: $ \sigma_1\geq \sigma_2\geq\cdots\geq \sigma_m $.
$\|A\|_F=\sqrt{\sum_{i=1}^{m_1}\sum_{j=1}^{m_2}a_{ij}^2} $,
$\| A\|_*=\sum_{l=1}^m \sigma_l$,
 $\|A\|_{\infty}=\max_{i\in [m_1],j\in[m_2]} |a_{ij}|$, $\|A\|=\sigma_1$. 
$\|A\|_{w_k(F)}=\sqrt{\sum_{i=1}^{m_1}\sum_{i=1}^{m_2} a^2_{ij}P^{(k)}_{ij}}$ is a weighted Frobenius norm associated with the distribution of the sampling matrices $\X$. 
We define the mean weighted Frobenius norm $ \Vert A\VertF=\sqrt{\sum_{k=0}^K\alpha_k \Vert A\Vert_{w_k(F)}^2 }$. 

\item $C, C_1, C_2,\ldots$ are positive constants whose value may vary at each occurrence. 
\item For two scalar sequences $ a_n $ and $ b_n $, $ a_n \apprle b_n$ ($a_n\apprge b_n$) means that there is a constant $C$ such that $ a_n\leq C b_n $ ($ a_n\geq C b_n $). We say $a_n \asymp b_n$ if $a_n\apprle b_n$ and $a_n\apprge b_n$. 

\item  $ (X^{(k)},Y^{(k)}) $ represents the $ k $-th dateset $ \{ (X_i^{(k)},Y_i^{(k)})\}_{i\in [n_k]} $ for $ k=0,1,...,K $.
\item For a matrix $ A $, denote the singular value decomposition of $ A $ by $ U\Sigma V^T $. Define the projection onto the column and row space of $ A $: $ P_A(B)=UU^T B VV^T $. And define  $ P^{\perp}_A(B)=B-P_A(B) $.  We have 
\begin{equation}\label{TL:in:lowrkprojection}
\operatorname{rank}(P_A(B))\leq 2\text{rank}(A).
\end{equation} 
See Section 10.2 in \cite{wainwright2019high}.

\item A random variable $ X $ is called $ \psi_\alpha $ random variable if it has finite   $ \psi_\alpha $ norm, 
\begin{equation*}
\Vert X\Vert_{\psi_\alpha}=\inf\left\{C>0; \Eb\exp\left(\left(\frac{|X|}{C}\right)^\alpha\right)\leq 2\right\}.
\end{equation*}
In particular, when $\alpha=2$, $\|\cdot\|_{\psi_2}$ is called the sub-gaussian norm and X is called a sub-gaussian random variable.
\end{itemize}

\section{Methodology}\label{TL:sec:main}

In this section, we discuss two scenarios in the context of transfer learning for matrix completion. In the first scenario, we assume that some prior knowledge of the source data is available, allowing us to directly apply a pooling and debiasing estimator. However, in practice, we may have limited information about the usefulness of the source datasets. Therefore, we consider a second scenario in which such knowledge is unavailable. To address this issue, we propose a selection procedure for identifying beneficial source data.
	
\subsection{Two-step transfer learning algorithm for matrix completion }\label{TL:two-step:app}
We propose a transfer learning algorithm, termed \textbf{TransMC}, to estimate the target matrix $ A_0 $. 
When  no additional source data are available, a classical approach for estimating the unknown low-rank matrix is the nuclear norm penalized regression \citep{koltchinskii2011nuclear, negahban2012restricted, klopp2014noisy, wainwright2019high}:
\begin{equation} \label{TL:Estimator: nuclear penalization}
\hat{A} =\argmin_{\|A\|_\infty\leq a} \left\{  \frac{1}{n_0}\sum_{i=1}^{n_0} (Y^{(0)}_i-\langle X_i^{(0)}, A\rangle)^2+\lambda\|A\|_*\right\}.
\end{equation}
When the source matrices are close to the target as specified by Assumption \ref{TL:Assumption: target source}, a direct approach to transferring information from sources is to run a nuclear norm penalized regression as in \eqref{TL:Estimator: nuclear penalization} using target data and all source data together. This forms the first step of our method given in \eqref{TL:trans1:step1}. The resulting estimator $\At$ can be viewed as an estimator for the minimizer of the population loss function,
\begin{align*}
A^*&=\mathop{\arg\min}_{A}\left\{\dfrac{1}{N}\sum_{k=0}^{K} \sum_{i=1}^{n_k} \mathbb{E}\big(Y_{i}^{(k)}-\langle X_{i}^{(k)}, A\rangle  \big)^2 \right\} \\
&\approx \sum_{k=0}^{K}\alpha_k A_k:=\Ab,~~{\rm~with~}\alpha_k=\frac{n_k}{N}, N=\sum_{k=0}^Kn_k,
\end{align*}
where the approximation above holds when the distributions of the sampling matrices $\{X^{(k)}_i: k=0,1,\ldots, K\}$ are close to each other. The (approximate) population minimizer $\Ab$ can be written as
\begin{equation}
\label{TL:decomp:Abar}
	\Ab=A_0+ \sum_{k=0}^{K}\alpha_k (A_k-A_0).
\end{equation} 
This combined with Assumptions \ref{TL:Assumption: target source} and \ref{TL:standard:mc} shows that $\Ab$ is nearly low rank, hence justifying the use of nuclear norm penalty in \eqref{TL:trans1:step1}. Equation \eqref{TL:decomp:Abar} also implies that the pooling step induces bias $\sum_{k=0}^{K}\alpha_k (A_k-A_0)$ for estimating $A_0$. Note that the bias term has nuclear norm bounded by $h$ under Assumption \ref{TL:Assumption: target source}. Hence, we fit another nuclear norm penalized regression in the second step for debiasing, as shown in \eqref{TL:trans1:step2} where the estimator $\hat \Delta$ serves as an approximation for $A_0-\Ab$. The final estimator $\Ah_T$ is obtained as the sum of $\At$ and $\hat \Delta$, which corrects the bias from the pooling step. The full procedure is summarized in Algorithm \ref{TL:Algorithm: 1}. We should mention that similar pooling and debiasing ideas have been studied in transfer learning for high-dimensional sparse regression problems \citep{bastani2021predicting, li2021transfer, tian2023transfer, li2024estimation}. In Section \ref{TL:theo:transmc1}, we will provide a detailed theoretical analysis of Algorithm \ref{TL:Algorithm: 1}, to clarify the conditions needed for improved estimation of $A_0$ and establish its optimal convergence rates.

\begin{algorithm}[t]\label{TL:Algorithm: 1}
\caption{TransMC}
\KwData{Target data $ (X^{(0)},Y^{(0)}) $, source data $ (X^{(k)},Y^{(k)}), k\in[K]$, penalty parameters $\lambda_1$ and $ \lambda_2 $.}
\KwOut{$ \Ah_T $.}
\textbf{Pooling step}: Compute 
 \begin{equation} \label{TL:trans1:step1}
 	\tilde{A}=\mathop{\arg\min}_{\Vert A\Vert_{\infty}  \leq a}\left\{\dfrac{1}{N}\sum_{k=0}^{K} \sum_{i=1}^{n_k} \Big(Y_{i}^{(k)}-\langle X_{i}^{(k)}, A\rangle  \Big)^2 +\lambda_1\Vert A\Vert_*\right\}.
 \end{equation} \\
\textbf{Debiasing step}: Compute 
\begin{equation}\label{TL:trans1:step2}
	\hat{\Delta}=\mathop{\arg \min}_{\Vert D+\At\Vert_{\infty}\leq a} \left\{ \dfrac{1}{n_0}\sum_{i=1}^{n_0} \Big(   Y_{i}^{(0)}-\langle X_{i}^{(0)}, D+\tilde{A}\rangle \Big)^2+\lambda_2\Vert D \Vert_*\right\}.
\end{equation}\\
Let $ \Ah_T=\At+\hat{\Delta} $. \\
%\textcolor{red}{(are you changing $\hat{A}_T$ to $\hat{A}$?)}\\
Output $ \Ah_T$.
\end{algorithm}

\subsection{Informative source selection}
\label{TL:inf:source:dec}
The TransMC algorithm is based on the assumption that aggregating all the source data is beneficial.
In other words, it assumes that all the underlying source matrices are sufficiently similar to the target matrix (see Assumption \ref{TL:Assumption: target source}). In practice, it is often unclear which source data sets are actually useful.
When the source matrices differ significantly from the target matrix, incorporating them can introduce excessive bias, ultimately deteriorating the estimation performance.

In this section, we propose a data-driven procedure to guide the selection of informative source data. We evenly split the target dataset into $J$ groups\footnote{For simplicity, we assume that $n_0$ is divisible by $J$. Otherwise, we can have $J$ groups whose sizes differ by at most one.}, denoted as $(X^{(0),[j]},Y^{(0),[j]})_{j=1}^J$. Let 
\begin{equation*}
(X^{(0),-[j]},Y^{(0),-[j]})=(X^{(0)},Y^{(0)}) \backslash (X^{(0),[j]},Y^{(0),[j]}) 
\end{equation*}
represent the parts of target data after removing the $j$-th group. We begin by performing a $J$-fold cross-validation on the target data. Define the loss function for a matrix $ A $ and index  $ j $:
\begin{equation*}
	\hat{L}_0^{[j]}(A)= \frac{1}{n_0/J}\sum_{i=1}^{n_0/J} ( Y^{(0),[j]}_i-\langle   X^{(0),[j]}_i,A\rangle)^2.
\end{equation*}
For each $j=1,\ldots, J$, we obtain the estimator $\hat{A}_0^{[j]}$ via \eqref{TL:Estimator: nuclear penalization} based on $(X^{(0),-[j]},Y^{(0),-[j]})$ and then compute the test error $\hat L^{[j]}(\hat{A}_0^{[j]})$. The overall cross-validation error is
\begin{equation*}
 \mathcal{L}_0=\frac{1}{J} \sum_{j=1}^J \hat L_0^{[j]}(\hat{A}_0^{[j]}). 
\end{equation*}
The error $\mathcal{L}_0$ will serve as the benchmark to assess the informative level of source data. In the next step, we run \eqref{TL:Estimator: nuclear penalization} on each source dataset to obtain the estimator $\hat{A}_k$, and then compute the test error on the target data: 
\begin{equation*}
 \mathcal{L}_k=\frac{1}{n_0}\sum_{i=1}^{n_0}   (Y_i^{(0)}-\langle X_i^{(0)},\hat{A}_k\rangle)^2, \quad k=1,2,\ldots, K.
\end{equation*}

\begin{algorithm}[t] \label{TL:Algorithm: 2}
\caption{S-TransMC}	
\KwData{$ (X^{(k)},Y^{(k)}),k=0,\ldots,K $,  constants $\epsilon_0$ and $ \Ct_0 $, penalty parameters $ \{\lambda_k, k=0,\ldots,K\}. $}
\KwOut{$\hat{\mathcal{S}}$, $\Ah_S$.}
Randomly divide $ (X^{(0)},Y^{(0)}) $ into $ J $ parts: $ (X^{(0),[j]},Y^{(0),[j]}), j=1,2,\ldots,J $. \\
\For{$ j=1,2,\ldots,J $}{
$ \Ah_0^{[j]} \leftarrow$ run (\ref{TL:Estimator: nuclear penalization}) for $ (X^{(0),-[j]},Y^{(0),-[j]}) $ with $ \lambda_0 $.
}
$\mathcal{L}_0\leftarrow \frac{1}{J}\sum_{j=1}^J \hat{L}^{[j]}(\Ah_0^{[j]})$\\
\For{$k=1,\ldots,K$}{
$\Ah_k\leftarrow $ run (\ref{TL:Estimator: nuclear penalization}) for $(X^{(k)},Y^{(k)})$ with $\lambda_k$.\\
$\mathcal{L}_k \leftarrow \frac{1}{J}\sum_{j=1}^J \hat{L}^{[j]}(\Ah_k), k=1,\ldots,K.$ \\
$\hat{\sigma}\leftarrow\sqrt{\sum_{j=1}^J(\hat{L}^{[j]}(\Ah_0^{[j]})-{L}_0)^2/(J-1)}$
}
$\hat{\mathcal{S}}\leftarrow \{k\in [K]: \mathcal{L}_k -\mathcal{L}_0\leq \Ct_0(\hat{\sigma}\vee \epsilon_0)\}$\\
$ \Ah_S\leftarrow  $ run TransMC using $ (X^{(k)},Y^{(k)})_{k\in \{0\}\cup\hat{\mathcal{S}}} $\\
Output $ \hat{\mathcal{S}} $ and $ \Ah_S$.
 \end{algorithm}
 
We consider a source dataset to be informative if the test error difference $\mathcal{L}_k-\mathcal{L}_0$ is below a threshold. Specifically, a source index $k$ is included in the selection index set $\hat{\mathcal{S}}$ if $\mathcal{L}_k-\mathcal{L}_0\leq \tilde C_0(\hat{\sigma}\vee \epsilon_0)$, where $\epsilon_0$ and $\tilde{C}_0$ are manually chosen constants and $\hat \sigma$ is the standard deviation of $\{\hat{ L}^{[j]}(\hat{A}_0^{[j]})\}_{j=1}^J$. Once the informative sources are identified, we run \textbf{TransMC} with data indexed by $\{0\}\cup\hat{\mathcal{S}}$ to obtain the estimator $\Ah_S$. The steps are summarized in Algorithm \ref{TL:Algorithm: 2} and the algorithm is referred to as \textbf{S-TransMC}. Our informative source selection method is motivated by \cite{tian2023transfer} for transfer learning on generalized linear models. A key difference from their method is that ours does not include any target data to compute $\hat{A}_k$, which not only saves some computational costs but also prevents from overselection especially when there are a large number of source datasets. As will be proved in Section \ref{TL:theo:strans2}, our selection method can detect informative sources with high probability.

\section{Theory}\label{TL:sec:theory}
In this section, we provide theoretical guarantees for the two algorithms \textbf{TransMC} and \textbf{S-TransMC} introduced in Section \ref{TL:sec:main}, including the convergence rate,  minimax rate optimality, and selection consistency. The proofs of all the theoretical results are deferred to Section \ref{TL:sec:proofs}.

\subsection{Theory on TransMC}
\label{TL:theo:transmc1}

As described in Section \ref{TL:two-step:app}, TransMC works best when the sampling distributions across different datasets are similar. As a result, we need the following assumption that controls the sampling heterogeneity. Similar assumptions can be found in \cite{tian2023transfer} and \cite{li2021transfer}.

\begin{assumption}\label{TL:Assumption:hetero source}
$A_k$ and $P_{ij}^{(k)}$ satisfy the following conditions
\begin{align}
    \label{TL:Condition: thm1 tech2} \left\|\sum_{k=0}^{K}\alpha_kP^{(k)}\odot(A_k-\Ab)\right\|\leq L_4\sqrt{\frac{a^2}{Nm}}, \\
\label{TL:Condition: thm1 tech3}\max_{i,j} \dfrac{\sum_{k=0}^{K}\alpha_k P^{(k)}_{ij}}{ P^{(0)}_{ij}}\leq L_5,
\end{align}
where $L_4,L_5>0$ are constants, and $A\odot B=(a_{ij}b_{ij})_{i\in[m_1],j\in[m_2]}$ is the Hadamard product of two compatible matrices $A=(a_{ij})_{i\in[m_1],j\in[m_2]}$ and $B=(b_{ij})_{i\in[m_1],j\in[m_2]}$.
\end{assumption}
It is direct to verify that If $X^{(k)}$ share the same distribution, $\sum_{k=0}^{K}\alpha_kP^{(k)}\odot(A_k-\Ab)=0$ holds, hence (\ref{TL:Condition: thm1 tech2}) and (\ref{TL:Condition: thm1 tech3}) are automatically satisfied.

\begin{theorem} \label{TL:thm:rate}
Suppose Assumptions \ref{TL:Assumption: target source}-\ref{TL:Assumption:hetero source} hold.
Assume that and $ N $ satisfies (\ref{TL:Condition:thm1-1}) and  $ n_0 $ satisfies  (\ref{TL:Condition:thm1-2}):% \textcolor{red}{(where is $n_0$ and $N$ in condition (6)?)}
\begin{align}
	\label{TL:Condition:thm1-1}     &N\geq m\cdot\log^4 d\cdot(\log N+\log d) \\
 \label{TL:Condition:thm1-2}  &n_0\geq m\cdot\log^4 d\cdot(\log n_0+\log d) 
\end{align}
We also require the following technical condition %\textcolor{red}{(what's $v$ in conditions (7) and other places?)}
\begin{align}
\label{TL:Condition: thm1 tech1}	\dfrac{n_0}{N}\geq \dfrac{a^2\log d}{\mav rM}.
\end{align}
%\textcolor{red}{(it might be better to first state and explain the above assumptions before Theorem 4.1)}
Set  $ \lambda_1=C_1\sqrt{\dfrac{\max (a^2,v^2)}{Nm} } $ and   $ \lambda_2=C_2\sqrt{ \dfrac{\mav}{n_0m}} $.  The estimator $\hat{A}_T$ from TransMC satisifes the following bound with  probability at least $ 1-\dfrac{1}{d} $,		\begin{equation}\label{TL:inequality: convergence rate thm1}
\dfrac{\Vert \hat{A}_T-A_0\Vert_F^2}{m_1m_2}\leq C_3 \lab  \dfrac{\mu^2\max (a^2,v^2)rM}{N}+C_h\wedge \dfrac{h^2}{m_1m_2} \rab,
\end{equation}
where $C_h=\sqrt{\dfrac{h^2}{m_1m_2} \cdot\dfrac{\mu^2\max (a^2,v^2)M}{n_0}}$. The constants $C_1\sim C_3$ may depend on the constants $L_1\sim L_5$ from Assumptions \ref{TL:Assumption: mask matrices}-\ref{TL:Assumption:hetero source}.
\end{theorem}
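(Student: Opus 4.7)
The plan is to bound $\|\hat A_T-A_0\|_F^2/(m_1m_2)$ by analyzing the two steps of \textbf{TransMC} separately and proving two upper bounds corresponding to the two terms inside the $C_h\wedge h^2/(m_1m_2)$ minimum; the theorem then follows by taking the better of the two.

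For the pooling step I would start from the optimality of $\tilde A$ in \eqref{TL:trans1:step1}. Writing $\tilde\xi_i^{(k)}=\xi_i^{(k)}+\langle X_i^{(k)},A_k-\bar A\rangle$ so that $Y_i^{(k)}-\langle X_i^{(k)},\bar A\rangle=\tilde\xi_i^{(k)}$, the basic inequality gives $\tfrac{1}{N}\sum_{k,i}\langle X_i^{(k)},\tilde A-\bar A\rangle^2 \leq \tfrac{2}{N}\sum_{k,i}\tilde\xi_i^{(k)}\langle X_i^{(k)},\tilde A-\bar A\rangle+\lambda_1(\|\bar A\|_*-\|\tilde A\|_*)$. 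The stochastic term splits into a noise piece dominated by $\|\tfrac{1}{N}\sum_{k,i}\xi_i^{(k)}X_i^{(k)}\|\cdot\|\tilde A-\bar A\|_*$ and a pooling-bias piece dominated by $\|\tfrac{1}{N}\sum_{k,i}\langle X_i^{(k)},A_k-\bar A\rangle X_i^{(k)}\|\cdot\|\tilde A-\bar A\|_*$. The crucial step is to bound both operator norms by $C\sqrt{\max(a^2,v^2)/(Nm)}$ without the customary $\log d$ factor: for the first, this uses the sharp matrix concentration of \cite{brailovskaya2024universality} under Assumption~\ref{TL:Assumption: mask matrices}(3) and condition \eqref{TL:Condition:thm1-1}; for the second, this combines \cite{brailovskaya2024universality} with the mean bound provided by Assumption~\ref{TL:Assumption:hetero source}, equation \eqref{TL:Condition: thm1 tech2}. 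With $\lambda_1$ chosen at the order of these stochastic bounds, a standard cone argument decomposing $\tilde A-\bar A$ relative to the column and row spaces of $A_0$, together with $\|\bar A-A_0\|_*\leq h$, yields $\|\tilde A-\bar A\|_*\lesssim\sqrt r\,\|\tilde A-\bar A\|_F+h$. Applying the restricted strong convexity of the weighted empirical quadratic form (adapting the RSC results of \cite{klopp2014noisy}, whose hypotheses are ensured by \eqref{TL:Condition:thm1-1}) together with Assumption~\ref{TL:Assumption: mask matrices}(1) then produces $\|\tilde A-\bar A\|_F^2/(m_1m_2)\lesssim\mu^2\max(a^2,v^2)rM/N+h^2/(m_1m_2)$. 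Combined with the crude bound $\|\bar A-A_0\|_F^2\leq\|\bar A-A_0\|_*^2\leq h^2$, this already delivers the first branch of \eqref{TL:inequality: convergence rate thm1} directly from $\tilde A$, without invoking the debiasing step.

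For the debiasing step I would rerun the same template on the target data with the feasible reference $D=A_0-\tilde A$ in \eqref{TL:trans1:step2}. The resulting basic inequality controls $\tfrac{1}{n_0}\sum_i\langle X_i^{(0)},\hat A_T-A_0\rangle^2$ by $2\|\tfrac{1}{n_0}\sum_i\xi_i^{(0)}X_i^{(0)}\|\cdot\|\hat\Delta-(A_0-\tilde A)\|_*+\lambda_2(\|A_0-\tilde A\|_*-\|\hat\Delta\|_*)$. A second invocation of \cite{brailovskaya2024universality}, enabled by \eqref{TL:Condition:thm1-2}, bounds the target noise operator norm by $C\sqrt{\max(a^2,v^2)/(n_0m)}$ without a log factor. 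The cone argument for $\hat A_T-A_0$ uses two pieces of low-rank structure: $A_0$ itself is exactly rank $r$, and $\|A_0-\tilde A\|_*\leq h+\|\tilde A-\bar A\|_*$, with the second term already controlled by Step~1. Condition \eqref{TL:Condition: thm1 tech3} is what lets us transfer the bound on the pooled weighted $L^2$ norm of $\tilde A-\bar A$ to the target weighted $L^2$ norm, and target RSC (via \eqref{TL:Condition:thm1-2}) then converts empirical $L^2$ to Frobenius. Condition \eqref{TL:Condition: thm1 tech1} guarantees that the residual Step-1 contribution stays dominated by the leading $rM/N$ term, so the final Step-2 bound reads $\|\hat A_T-A_0\|_F^2/(m_1m_2)\lesssim\mu^2\max(a^2,v^2)rM/N+C_h$, which is the second branch of \eqref{TL:inequality: convergence rate thm1}. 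Taking the better of the two branches completes the argument.

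The main obstacle will be carrying out the sharp concentration step in the non-i.i.d.\ pooled setting, where the sampling distributions $P^{(k)}$ vary across $k$ and the summands in $\tfrac{1}{N}\sum_{k,i}\xi_i^{(k)}X_i^{(k)}$ are only independent, not identically distributed. One must verify the variance, fourth-moment, and off-diagonal alignment inputs of the universality inequality of \cite{brailovskaya2024universality} uniformly over $k$; this is precisely the role of the refined bound $\max_{jl}P^{(k)}_{jl}\leq L_2/(m\log^3 d)$ in Assumption~\ref{TL:Assumption: mask matrices}(3), which is what lets the Gaussian approximation dominate and removes the $\log d$ correction. A secondary difficulty is that in the debiasing step the reference $A_0-\tilde A$ is random and only approximately low rank, so care is needed to avoid double-counting the Step-1 error when setting up the cone condition there.
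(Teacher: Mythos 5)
Your Step 1 outline matches the paper closely: the basic inequality, the split into the noise operator $\Sigma_1=\frac{1}{N}\sum_{k,i}\xi_i^{(k)}X_i^{(k)}$ and the pooling-bias operator $\Sigma_2=\frac{1}{N}\sum_{k,i}\langle X_i^{(k)},A_k-\bar A\rangle X_i^{(k)}$, the log-free spectral bounds via \cite{brailovskaya2024universality} using Assumption~\ref{TL:Assumption: mask matrices}(3) and \eqref{TL:Condition: thm1 tech2}, the cone inequality $\|\tilde A-\bar A\|_*\lesssim\sqrt r\|\tilde A-\bar A\|_F+h$, and the restricted strong convexity step are all present and correct.

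Your Step 2, however, contains a gap that prevents the argument from delivering the stated rate. You take the feasible reference $D=A_0-\tilde A$, which is clean, but then the basic inequality gives (after absorbing the noise term with $\lambda_2\gtrsim\|\frac{1}{n_0}\sum_i\xi_i^{(0)}X_i^{(0)}\|$) a bound of the form $\frac{1}{n_0}\sum_i\langle X_i^{(0)},\hat A_T-A_0\rangle^2\lesssim\lambda_2\|A_0-\tilde A\|_*$ with $\|\hat A_T-A_0\|_*\lesssim\|A_0-\tilde A\|_*$. The trouble is that $\|A_0-\tilde A\|_*\lesssim h+\sqrt r\,\|\tilde A-\bar A\|_F$ carries a contribution from Step~1 that is \emph{multiplied by} $\lambda_2\asymp\sqrt{\max(a^2,v^2)/(n_0m)}$ rather than added. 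Using \eqref{TL:in:AtAbnuclear}, $\sqrt r\,\|\tilde A-\bar A\|_F\lesssim r\sqrt{m_1m_2}\sqrt{\mu^2\max(a^2,v^2)M/N}$, and $\lambda_2\cdot r\sqrt{m_1m_2}\sqrt{\mu^2\max(a^2,v^2)M/N}\asymp \mu\max(a^2,v^2)\,rM\sqrt{1/(n_0N)}$; after dividing by $m_1m_2$ and multiplying by $\mu$ from $P^{(0)}_{ij}\geq 1/(\mu m_1m_2)$, this becomes $\mu^2\max(a^2,v^2)rM/\sqrt{n_0N}$, which exceeds the target $\mu^2\max(a^2,v^2)rM/N$ by a factor $\sqrt{N/n_0}$. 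Condition \eqref{TL:Condition: thm1 tech1} does not control this factor. The paper avoids the problem by taking the reference $\Delta=A_0-\bar A$, for which $\|\Delta\|_*\leq h$ exactly; the price is that the residual then contains the cross-term $\frac{2}{n_0}\sum_i\langle X_i^{(0)},\tilde A-\bar A\rangle\langle X_i^{(0)},\hat\Delta-\Delta\rangle$, which is handled additively via Young's inequality and controlled through the Step-1 error together with Assumption~\ref{TL:Assumption:hetero source}, \eqref{TL:Condition: thm1 tech3} (this, not your claimed role, is where \eqref{TL:Condition: thm1 tech3} is actually used). That additive treatment is what keeps the Step-1 residual at order $rM/N$ instead of inflating it.

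A second, smaller issue: you claim the $h^2/(m_1m_2)$ branch is "delivered directly from $\tilde A$, without invoking the debiasing step." The theorem bounds $\hat A_T=\tilde A+\hat\Delta$, not $\tilde A$; the algorithm always outputs the debiased estimate, so a bound on $\tilde A$ alone does not establish \eqref{TL:inequality: convergence rate thm1}. You still need to show the debiasing step cannot spoil the $h^2$ branch — the paper does this by also proving $\|\hat\Delta-\Delta\|_*\lesssim h+\epsilon_1/\lambda_2$ (equation \eqref{TL:pt111}) and hence $\|\hat\Delta-\Delta\|_F^2/(m_1m_2)\lesssim h^2/(m_1m_2)+\cdots$, then combining with the Step-1 bound on $\|\tilde A-\bar A\|_F$.
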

	
\noindent We make several remarks regarding the results in Theorem \ref{TL:thm:rate}:

\begin{itemize}
\item[1.] Our convergence rate does not include a $\log d$ factor which typically exists in the matrix completion literature (see, e.g., \cite{koltchinskii2011nuclear, negahban2012restricted,klopp2014noisy}). In the next theorem, we demonstrate that the convergence rate (\ref{TL:inequality: convergence rate thm1}) is minimax optimal, and the removal of the $\log d$ factor is essential. This improvement is primarily due to the application of a class of sharp concentration inequalities introduced in \cite{brailovskaya2024universality}. The successful removal of the $\log d$ factor utilizing results from \cite{brailovskaya2024universality} was first completed in \cite{liu2024sbmc} for the single-task matrix completion. We generalize the result to the transfer learning setting.

    \item[2.] Consider the common scenario where $\mu, a, v$ are constants. \cite{liu2024sbmc} proved that the baseline estimator (\ref{TL:Estimator: nuclear penalization}) achieves the minimax optimal rate $O(\frac{rM}{n_0})$ on the single target data. It is straightforward to confirm that as long as
    \begin{equation*}
\frac{h^2}{m_1m_2}\ll \frac{r^2M}{n_0}, \quad N \gg n_0,
\end{equation*}
(\ref{TL:inequality: convergence rate thm1}) provides a faster convergence rate than $O(\frac{rM}{n_0})$. That means, when the source matrices are close to the target matrix and the source sample size is large enough, our algorithm TransMC can effectively transfer knowledge from source data to improve the learning performance on the target data. In general, the first term in (\ref{TL:inequality: convergence rate thm1}) represents the ``oracle rate", the best rate possible, which can be achieved when all the source matrices are the same as the target matrix (i.e. $h=0$). The second term of (\ref{TL:inequality: convergence rate thm1}) quantifies the influence of related sources on the target task learning. The more related the source data are to the target data (i.e. smaller $h$), the less the second term contributes to the estimation error.

\item [3.] (\ref{TL:Condition: thm1 tech1}) is purely technical and easily to satisfy, since $rM\gg\log d$ in high dimension. %\textcolor{red}{(a bit more explanation for this condition? what is it used for?)}
\end{itemize}

The upper bound \eqref{TL:inequality: convergence rate thm1} reveals the effectiveness of our method in transferring knowledge from related sources. We now derive the complementary minimax lower bound in the next theorem to show that the convergence rate in \eqref{TL:inequality: convergence rate thm1} is in fact optimal. Denote the parameter space
\begin{equation*}
\Omega_h=\Big\{ A_0,A_1,..., A_K: \operatorname{rank}(A_0)\leq r, \max_{k\in [K]}\Vert A_k-A_0\Vert_*\leq h \Big\}.
\end{equation*}

\begin{theorem} \label{TL:thm:lower}
For each $k=0,1,\ldots, K$, assume $\xi_i^{(k)}$ and $X_i^{(k)}$ are independent, $\xi_i^{(k)}\sim \mathcal{N}(0,v)$, and $X_i^{(k)}$ follows {the uniform sampling distribution}. It holds that
\begin{equation}
			\inf_{\hat{A} } \sup_{\Omega_h}\mathbb P\left(\dfrac{\Vert \hat{A}-A_0\Vert^2_F}{m_1m_2}\geq C \left(\dfrac{v^2rM}{N}+\dfrac{v^2rM}{n_0}\wedge C_h\wedge \dfrac{h^2}{m_1m_2}\right) \right)\geq \dfrac{1}{4}, \label{TL:lower:minimax:one}
		\end{equation}
		where $ C $ is a universal constant, $C_h$ is the same as in \eqref{TL:inequality: convergence rate thm1}, and the infimum is taken over all possible estimators for $ A_0 $.
	\end{theorem}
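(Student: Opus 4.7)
The plan is to decompose $\sup_{\Omega_h}$ into two informative sub-families whose individual lower bounds combine to yield the claimed rate. Since $\sup_{\Omega_h} \geq \sup_{\Omega_h^{(i)}}$ for any subset $\Omega_h^{(i)} \subseteq \Omega_h$ and $a+b \leq 2(a\vee b)$, it suffices to establish separately lower bounds of orders $v^2 rM/N$ and $v^2 rM/n_0 \wedge C_h \wedge h^2/(m_1 m_2)$ and sum them. Under uniform sampling and Gaussian noise, the pairwise Kullback--Leibler divergence of the joint data likelihood between two parameter configurations factorizes neatly as $\sum_{k=0}^{K} n_k \Vert A_k - A_k' \Vert_F^2 / (2 v^2 m_1 m_2)$, so both bounds follow from Fano's inequality applied to Varshamov--Gilbert packings in $\Rb^{m_1 \times m_2}$.

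For the first bound, I would restrict to $\Omega_h^{(1)} = \{A_0 = A_1 = \cdots = A_K : \operatorname{rank}(A_0) \leq r, \Vert A_0 \Vert_\infty \leq a\} \subseteq \Omega_h$ (which trivially has $h=0$). The observation model then collapses to single-task matrix completion with $N$ effective uniform samples, and the standard replicated-row rank-$r$ packing of \cite{koltchinskii2011nuclear, negahban2012restricted}---matrices with $\pm\sigma$ entries and $\sigma \asymp v\sqrt{m_1 m_2 / N}$---produces $\exp(c r M)$ elements with pairwise squared Frobenius separations $\asymp \sigma^2 m_1 m_2$ and pairwise KL divergences of order $r M$. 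Fano's inequality then gives a lower bound of order $\sigma^2 = v^2 r M / N$ on $\Vert \hat A - A_0 \Vert_F^2 / (m_1 m_2)$ with probability at least $1/4$.

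For the second bound, restrict instead to $\Omega_h^{(2)} = \{A_1 = \cdots = A_K = 0,\ A_0 : \operatorname{rank}(A_0) \leq r, \Vert A_0 \Vert_* \leq h, \Vert A_0 \Vert_\infty \leq a\}$. The source data carries no information about $A_0$, so the KL reduces to $n_0 \Vert A_0 - A_0' \Vert_F^2 / (2 v^2 m_1 m_2)$. The idea is to build, for each effective rank $r' \in \{1,\ldots,r\}$, a rank-$r'$ replicated-row packing with entry magnitude $\sigma$; such a packing contains $\exp(c r' M)$ elements, pairwise squared Frobenius separations $\asymp \sigma^2 m_1 m_2$, and satisfies $\Vert A_0 \Vert_* \lesssim \sigma \sqrt{r' m_1 m_2}$. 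The binding constraints then read (i) $\sigma \leq a$, (ii) $\sigma^2 \lesssim h^2/(r' m_1 m_2)$ (nuclear-norm budget), (iii) $\sigma^2 \lesssim v^2 r' M/n_0$ (KL budget), and the resulting Fano error is $\asymp \sigma^2$. Optimizing over $(r',\sigma)$ produces three regimes: $r' = r$ with the KL constraint binding yields $\sigma^2 \asymp v^2 r M/n_0$, valid for $h^2 \gtrsim v^2 r^2 M m_1 m_2 / n_0$; $r' = 1$ with the nuclear-norm constraint binding yields $\sigma^2 \asymp h^2/(m_1 m_2)$, valid for $h^2 \lesssim v^2 M m_1 m_2 / n_0$; and the balanced intermediate choice $r' \asymp h\sqrt{n_0/(v^2 M m_1 m_2)} \in (1,r)$ equates the two budgets and yields $\sigma^2 \asymp C_h$. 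The envelope of the three exactly reproduces $v^2 r M/n_0 \wedge C_h \wedge h^2/(m_1 m_2)$.

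The main obstacle is to simultaneously enforce the entrywise, nuclear-norm, and KL constraints for the intermediate-rank packing underlying the $C_h$ regime while keeping enough elements for Fano to bite; the replicated-row construction handles this by embedding a standard binary Varshamov--Gilbert code into a rank-$r'$ tiling, so that $\Vert A_0 \Vert_*$, $\Vert A_0 \Vert_F$, and the singular structure can all be read off directly. Combining the two sub-problem lower bounds via $a + b \leq 2(a \vee b)$ and tracking the Fano probability $1/4$ throughout completes the proof.
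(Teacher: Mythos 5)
Your proposal is correct and mirrors the paper's proof: both use Fano's method with Negahban--Wainwright-style rank-$r'$ packings, split into the same sub-family $A_1=\cdots=A_K=0$ (for the $v^2rM/n_0 \wedge C_h \wedge h^2/(m_1m_2)$ piece, with the same three regimes in $r'$) and the sub-family $A_0=A_1=\cdots=A_K$ (for the $v^2rM/N$ piece), and combine. Your explicit framing via $a+b\le 2(a\vee b)$ is a slightly cleaner packaging of what the paper does by four-way case analysis.
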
    
\begin{remark}
Comparing the upper bound \eqref{TL:inequality: convergence rate thm1} and the lower bound \eqref{TL:lower:minimax:one} in the common regime where $\mu, a, v$ are constants, we can conclude that the two bounds match as long as $\frac{h^2}{m_1m_2}\lesssim \frac{r^2M}{n_0}$. As is clear from the second remark after Theorem \ref{TL:thm:rate}, $\frac{h^2}{m_1m_2}\lesssim \frac{r^2M}{n_0}$ is also the condition that ensures the favorable performance of our algorithm TransMC, compared to the benchmark using the single target data. 

\end{remark}
%In practice, the source datasets are not always beneficial for estimating the target parameter. Moreover, pooling all source datasets may increase both computational and data processing costs. Therefore, identifying truly informative source datasets is critically important.

\subsection{Theory on S-TransMC}
\label{TL:theo:strans2}

Define the set of informative source data,
\begin{align*}
\Sc=\Big\{1\leq k\leq K: \|A_k-A_0\|_*\leq h\Big\}.
\end{align*}
In practice, the set $\Sc$ is often unknown a priori. The S-TransMC algorithm, introduced in Section \ref{TL:inf:source:dec}, produces an estimator $\hat{\Sc}$. We will show in this section that $\hat{\Sc}=\Sc$ with high probability. To this end, we make the following assumptions.

\begin{assumption}\label{TL:Assumption:  selection conditions}
For $k \in \mathcal{S}^c$, $\text{rank}(A_k)\leq r'$ and
\begin{equation*}
  \frac{\|A_k-A_0\|^2_F}{m_1m_2}  >2\Ct_0\varepsilon_0,
\end{equation*}
where $\Ct_1$ is a constant. 

\end{assumption}	

\begin{assumption}\label{TL:assumption:detection error}
 For $ k\in \Sc $, 
\begin{align*}
\Gamma_k=\frac{\mu^2\max(a^2,v^2)rM}{n_k}+\dfrac{h^2}{m_1m_2}+\max(a^2,v^2)\sqrt{\dfrac{J\log n_0}{n_0}}=o(1).
\end{align*}
For $ k \in \Sc^c $,
\begin{align*}
\Upsilon_k=\frac{\mu^2\max(a^2,v^2)r'M}{n_k}
+\frac{\max(a^2,v^2)rM}{\frac{J-1}{J}n_0}+\max(a^2,v^2)\sqrt{\frac{J\log n_0}{n_0}}=o(1),
\end{align*}
For $ k=0 $,
\begin{align*}
\Gamma_0=\dfrac{\mu^2\max(a^2,v^2)rM}{\frac{J-1}{J}n_0}+\max(a^2,v^2)\sqrt{\frac{J\log n_0}{n_0}}=o(1).
\end{align*} 
\end{assumption}
Assumption \ref{TL:Assumption:  selection conditions} specifies a separation between the informative sources $\Sc$ and uninformative sources $\Sc^c$ to make the detection possible. Assumption \ref{TL:assumption:detection error} ensures that the recovery errors are small enough making it  less likely that $\Sc$/$\Sc^c$ is  excluded/included from $\hat\Sc$/$\hat\Sc$  by Algorithm \ref{TL:Algorithm: 2}. 
%\textcolor{red}{(better explanation for assumption 7; with respect to test error?)}

\begin{theorem} \label{TL:thm:selection}
Let  Assumptions \ref{TL:standard:mc} $\sim$ \ref{TL:assumption:detection error}  hold. Take $ \lambda_k=C_k\sqrt{\dfrac{\max (a^2,v^2)}{n_km}} \ $ and $ \lambda_0=C_0\sqrt{\dfrac{\max (a^2,v^2)}{\frac{J-1}{J}n_0m}}  $ with properly chosen $C_0,$, $C_k$.	
Then,
		\begin{equation*}
			\mathbb{P}\big(\hat{\mathcal{S}}=\mathcal{S}\big)\geq  1-\dfrac{K}{d}-\dfrac{K}{n_0} .
		\end{equation*}
	\end{theorem}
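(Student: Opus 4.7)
The plan is to show $\hat{\Sc}=\Sc$ by controlling the test error difference $\Lc_k-\Lc_0$ for each $k$ and comparing it to the random threshold $\Ct_0(\hat\sigma\vee\epsilon_0)$. First I would substitute $Y_i^{(0)}=\la X_i^{(0)}, A_0\ra+\xi_i^{(0)}$ and decompose
\begin{equation*}
\Lc_k=\frac{1}{n_0}\sum_{i=1}^{n_0}(\xi_i^{(0)})^2+\frac{2}{n_0}\sum_{i=1}^{n_0}\xi_i^{(0)}\la X_i^{(0)}, A_0-\Ah_k\ra+\frac{1}{n_0}\sum_{i=1}^{n_0}\la X_i^{(0)}, A_0-\Ah_k\ra^2,
\end{equation*}
and do the same for each fold of $\Lc_0$, so that the common noise-square piece cancels in $\Lc_k-\Lc_0$. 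Conditioning on $\Ah_k$ (which is independent of the target data) and on $\Ah_0^{[j]}$ (which is independent of the held-out fold $[j]$), the quadratic term has conditional mean $\|A_0-\Ah_k\|_{w_0(F)}^2$, and the cross term is sub-gaussian with variance dominated by it. A Bernstein/sub-gaussian concentration would then yield $|\Lc_k-v^2-\|A_0-\Ah_k\|_{w_0(F)}^2|\lesssim \mav\sqrt{\log n_0/n_0}$ uniformly with high probability, and a fold-wise analog for $\Lh^{[j]}(\Ah_0^{[j]})$ with held-out size $n_0/J$, which is exactly what produces the $\sqrt{J\log n_0/n_0}$ factor in Assumption \ref{TL:assumption:detection error}.

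Next I would bound the estimator errors using single-task versions of Theorem \ref{TL:thm:rate}. For $k\in\Sc$, applying the pooling-plus-debias analysis with only $(X^{(k)},Y^{(k)})$ (i.e.\ $K=0$ with an additional bias of nuclear norm at most $h$) gives $\|\Ah_k-A_0\|_F^2/(m_1m_2)\lesssim \mu^2\mav rM/n_k+h^2/(m_1m_2)$. For $k\in\Sc^c$, since $\operatorname{rank}(A_k)\le r'$, Theorem \ref{TL:thm:rate} applied to $A_k$ alone gives $\|\Ah_k-A_k\|_F^2/(m_1m_2)\lesssim \mu^2\mav r'M/n_k$. For the cross-validation fits, the training size $(J-1)n_0/J$ yields $\|\Ah_0^{[j]}-A_0\|_F^2/(m_1m_2)\lesssim \mu^2\mav rM/(\tfrac{J-1}{J}n_0)$. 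I would then convert these Frobenius bounds into bounds on $\|\cdot\|_{w_0(F)}^2$ using the two-sided sampling control from Assumption \ref{TL:Assumption: mask matrices}(1)--(2), transferring them into the quadratic terms of Step 1.

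Combining the pieces yields the case analysis. For $k\in\Sc$, both $\Lc_k-v^2$ and $\Lc_0-v^2$ are $O(\Gamma_k\vee\Gamma_0)=o(1)$, hence $|\Lc_k-\Lc_0|=o(1)$. For $k\in\Sc^c$, the triangle inequality gives
\begin{equation*}
\|A_0-\Ah_k\|_{w_0(F)}^2\ge \tfrac{1}{2}\|A_0-A_k\|_{w_0(F)}^2-\|A_k-\Ah_k\|_{w_0(F)}^2\gtrsim \frac{\|A_0-A_k\|_F^2}{\mu m_1m_2}-o(1)
\end{equation*}
by Assumption \ref{TL:Assumption: mask matrices}(1), so Assumption \ref{TL:Assumption:  selection conditions} delivers $\Lc_k-\Lc_0\ge 2\Ct_0\epsilon_0-o(1)$. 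Finally, $\hat\sigma$ is the fold-wise sample standard deviation of $\Lh^{[j]}(\Ah_0^{[j]})$; writing each fold as $\tfrac{J}{n_0}\sum_{i\in[j]}(\xi_i^{(0)})^2-v^2$ plus lower-order estimation contributions, its scale concentrates at $O(v^2\sqrt{J/n_0})$, so $\hat\sigma\vee\epsilon_0$ is of order $\epsilon_0$ up to constants and $\Ct_0(\hat\sigma\vee\epsilon_0)$ can be tuned strictly between the $o(1)$ upper bound from the $k\in\Sc$ case and the $\gtrsim 2\Ct_0\epsilon_0$ lower bound from the $k\in\Sc^c$ case. A union bound over at most $K+J$ estimator fits, each failing with probability at most $O(1/d)+O(1/n_0)$, would give the claimed $1-K/d-K/n_0$ guarantee.

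The main obstacle I foresee is the last step: controlling $\hat\sigma$ sharply enough so that $\Ct_0(\hat\sigma\vee\epsilon_0)$ simultaneously dominates the $o(1)$ gap in the $\Sc$-case and stays below the separation $2\Ct_0\epsilon_0$ in the $\Sc^c$-case. The weak coupling among the cross-validation estimators $\Ah_0^{[j]}$ (which share $(J-2)/(J-1)$ of their training samples) slightly complicates the independence argument needed for the fold-wise standard deviation, but conditioning on all $J$ estimators and then exploiting the independence of each held-out fold from its associated estimator should be enough to make the argument go through.
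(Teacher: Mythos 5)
Your proposal follows essentially the same route as the paper's proof: decompose $\Lc_k$ and each fold-wise loss into a noise-square term, a cross term, and a quadratic term; control the estimator errors via the single-task matrix-completion bound; apply sub-gaussian/sub-exponential concentration to each deviation piece ($\eta_{1,k},\eta_{2,k},\eta_{3,j},\eta_{4,j}$ and the noise-square sums, yielding the $\sqrt{J\log n_0/n_0}$ factor); bound $\hat\sigma$ from above; and separate the cases $k\in\Sc$ versus $k\in\Sc^c$ using Assumptions \ref{TL:Assumption:  selection conditions}--\ref{TL:assumption:detection error} with the $\epsilon_0$ floor in the threshold, finishing with a union bound over the $K$ sources. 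One minor refinement in your write-up is that you correctly identify the conditional mean of the quadratic term as $\|A_0-\hat A_k\|_{w_0(F)}^2$ under non-uniform sampling (then converting to a Frobenius bound via Assumption \ref{TL:Assumption: mask matrices}(1)), whereas the paper centers $\eta_{1,k}$ at $\|A_0-\hat A_k\|_F^2/(m_1m_2)$ and asserts it is mean zero, which is exact only for uniform sampling.
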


\section{Numerical studies}	\label{TL:sec:numerical}

In this section, we present our simulation results for TransMC and S-TransMC. As seen in Algorithms \ref{TL:Algorithm: 1}-\ref{TL:Algorithm: 2}, both algorithms involve solving nuclear norm penalized convex optimization of the following form
 \begin{equation}
 \label{TL:nuclear:convex:op}
  \Ah=\argmin_{A\in\Omega}  \left\{  L(A)+\lambda \|A\|_*\right\}, 
 \end{equation}
where $L(A)$ is a convex loss function and $\Omega$ is a convex set. To this end, we employ the local adaptive majorize-minimization (LAMM) algorithm, introduced by \cite{fan2018lamm, yu2024low}. Let $P_{\Omega} $ be the projection operator onto $\Omega$, and define a quadratic function
\begin{equation*}
  Q(A ; B,\phi)=L(B)+\langle \nabla L(B), A-B\rangle+\frac{\phi}{2}\|A-B\|_F^2,
\end{equation*}
where $A,B\in \mathbb R^{m_1\times m_2}$ and $\phi \in \mathbb R^+$ is a quadratic parameter. For a matrix $A\in\Rm $ with singular value decomposition $A=U\Sigma V^T$, the soft threshold operator is defined as
\begin{equation*}
 \text{Soft}_\lambda(A)= U\Sigma_\lambda V^T.
\end{equation*}
Here, $\Sigma_\lambda$ is a diagonal matrix with diagonal values $\{\max(\sigma_i-\lambda,0)\}_{i=1}^m$, and $\{\sigma_i\}_{i=1}^m$ are the singular values of $A$. The LAMM algorithm for \eqref{TL:nuclear:convex:op} is summarized in Algorithm \ref{TL:Algorithm: 3}. Here, $\phi_0$ and $\gamma$ are manually chosen parameters and $\varepsilon$ is the tolerance.
\begin{algorithm}\label{TL:Algorithm: 3}
\caption{LAMM algorithm for Matrix Optimization}	
\KwIn{$\Ah^{(0)}, \phi_0, \gamma, \lambda,\varepsilon$.}
\KwOut{$\Ah^{(T)}$.}
\For{$k=1,2,\ldots$ until $\|\Ah^{(k)}-\Ah^{(k-1)}\|_F\leq \varepsilon$}
{$\Ah^{(k)}\leftarrow \text{Soft}_{\lambda/\phi_k}(\Ah^{(k-1)}-\nabla L(\Ah^{(k-1)})/\phi_{k})$\\
$\Ah^{(k)}\leftarrow P_{\Omega}(\Ah^{(k)})$\\
\While{$Q(\Ah^{(k)};\Ah^{(k-1)},\phi_k)<L(\Ah^{(k)})$}
{$\phi_k\leftarrow \phi_k\cdot \gamma$}
}
Output $\Ah^{(T)}$.
 \end{algorithm}

\subsection{Transfer learning with known informative sources}
\label{TL:sim:known}
We first evaluate the empirical performance of the proposed TransMC algorithm given that all the source matrices are close to the target matrix. We consider the following settings:

\begin{itemize}

\item There are one target data with sample size $n_0=0.2m_1m_2$ and $K=10$ source datasets of which each has sample size $n_k=0.1m_1m_2$.

\item The target matrix $A_0$ is of rank $r=10$. The target matrix $A_0$ and the source matrices $\{A_k\}_{k=1}^{10}$ have the same dimensions $m_1=300, m_2=150$. These matrices are constructed as follows. We first obtain orthonormal matrices by performing SVD on a matrix with independent standard gaussian entries. 
We then generate random numbers from a uniform distribution on $[0,1]$ and map them by $\exp(5x)$ onto the y-axis.
The resulting numbers are used as the singular values of $A_0$.
The singular values of $A_0-A_k$ are generated from a uniform distribution on [0,1].
After normalization, we have $\|A_k\|_\infty\leq 30, k=0,1,\ldots, 10$, and $\|A_k-A_0\|_*= 400\pm 10, k=1,\ldots, 10$.
Once the target and source matrices are generated, they are fixed aross the simulations.
%\textcolor{red}{(fill in the question mark places)}

%\textcolor{red}{(fill in the question mark places)}

\item We consider two types of sampling distributions. The first one is uniform sampling, namely, 
\begin{equation}
    P_{jl}^{(k)}=\frac{1}{300 \times 150}, ~~j=1,\ldots, 300, l=1,\ldots, 150, k=0,1,\ldots, 10. \label{TL:SS1}
\end{equation}
The second one is a non-uniform sampling with independent row and column sampling: 
\begin{equation}
  P_{jl}^{(k)}=\mathcal{R}^{(k)}_j\mathcal{C}^{(k)}_l, ~~j=1,\ldots 300, l=1,\ldots, 150, k=0,1, \ldots, 10.\label{TL:SS2}  
\end{equation}
For each $k=0,1,\ldots, 10$, the row and column sampling probability values $\{\mathcal R^{(k)}\}_{j=1}^{300}, \{\mathcal C^{(k)}_l\}_{l=1}^{150}$ are obtained by first generating random numbers from a uniform distribution and then normalizing them. 
%\textcolor{red}{(i reorganized this part; double check if my understanding is correct)}
\item All the noise variables $\xi_i^{(k)}$ are i.i.d. gaussian with mean 0 and standard deviation 1.
\end{itemize}
\begin{figure}
    \centering
    \includegraphics[width=0.65\linewidth]{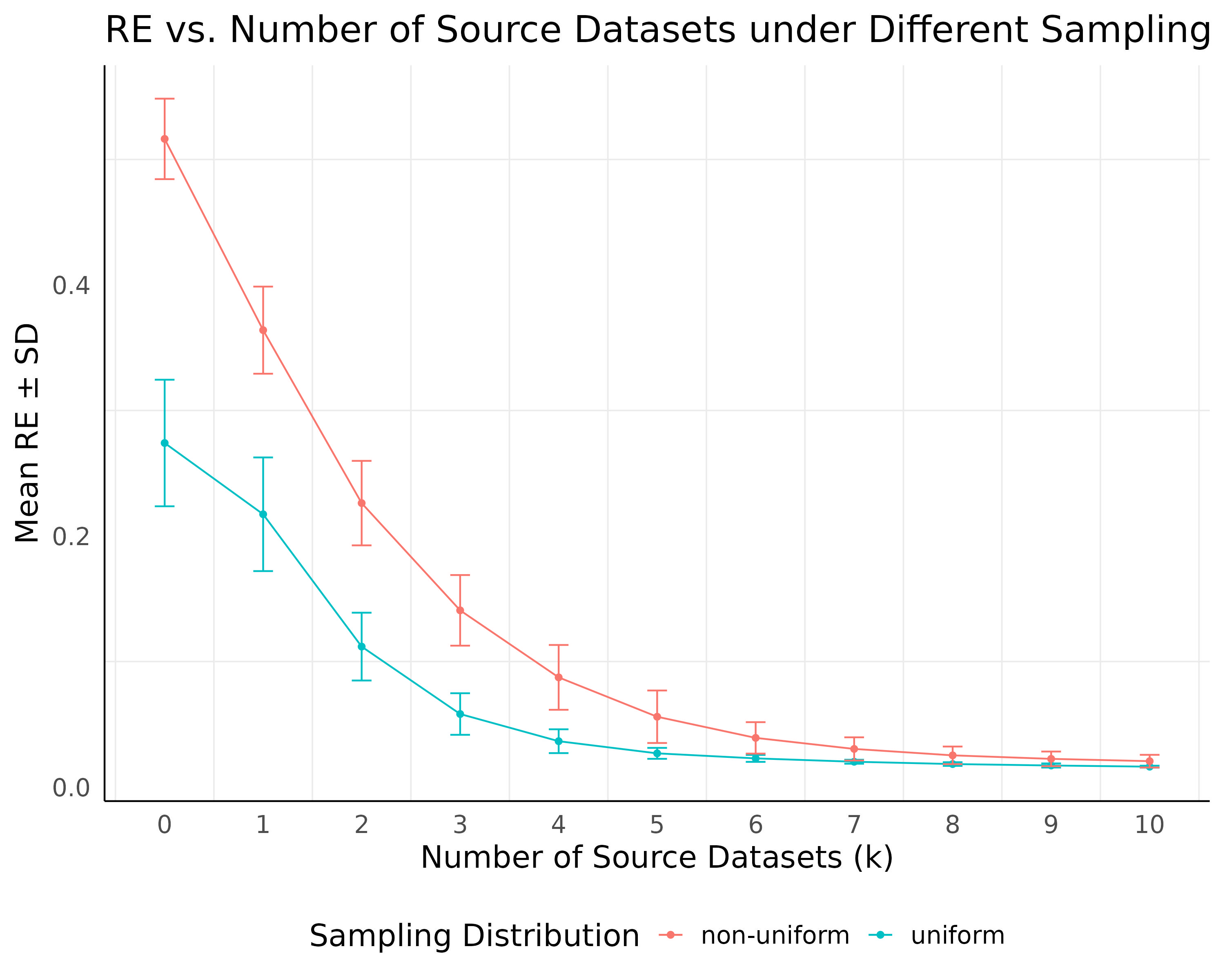}
    \caption{Estimation error of TransMC with known informative sources. }
    %\textcolor{red}{(make text sizes larger in the plot)}
    \label{TL:fig:exp1}
\end{figure}
We run TransMC and record the relative Frobenius norm error $\frac{\|\hat{A}_T-A_0\|_F}{\|A_0\|_F}  $ for the estimator $\hat{A}_T$, over %\textcolor{red}{(how many?)} 
200 repetitions. The simulation results are demonstrated in Figure \ref{TL:fig:exp1}. As we can see, as more and more source data are utilized, we can obtain smaller and smaller errors.
 By (\ref{TL:inequality: convergence rate thm1}), defining $N_k=\sum_{l=0}^k n_l$, the convergence rate is given by
\begin{equation*}
\dfrac{\mu^2\max (a^2,v^2)rM}{N_k}+C_h\wedge \dfrac{h^2}{m_1m_2}.    
\end{equation*}
as the $k$-th source dataset enters the procedure.
When $N_k$ is relatively small, the term $\frac{\mu^2\max (a^2,v^2)rM}{N_k}$ is dominant and the  error curve decreases rapidly. 
As $N_k$ increases, $C_h\wedge \frac{h^2}{m_1m_2}$ remains the same with fixed $h$ and $n_0$.
When $C_h\wedge \frac{h^2}{m_1m_2}$ starts to dominate, the curve becomes quite flat.
The simulation results are well aligned with our theoretical analysis and strongly support that the TransMC algorithm can effectively utilize informative sources to reduce the estimation error.

\subsection{Transfer learning with unknown informative sources} 
We now present our simulation results on the S-TransMC algorithm in the scenario where the informative sources are unknown. Most of the settings are kept the same as in Section \ref{TL:sim:known}. We only mention the differences:

\begin{itemize}
\item By different normalizations, the source data are divided into two groups: $ \|A_k-A_ 0\|_*=400\pm 10,k=1,\ldots, 5 $ and $ \|A_k-A_0\|_*=1200\pm 10,k=6,\ldots, 10 $. The first group can be viewed as informative sources and the second group has noninformative ones. 

\item Let $ n_0=0.2 m_1m_2 $ and $ n_k=0.15 m_1m_2,k=1,2,\cdots,10 $.
%\textcolor{red}{(is this the same as in Section 5.1? if so, we can remove it)}
\end{itemize}

\noindent We run S-TransMC for data $\{(X^{(k)},Y^{(k)})\}_{k=0}^{10}$ with $J=4$. As a comparison, we also run TransMC for the same data. The relative Frobenius norm errors over 200 repetitions are reported. Referring to Table \ref{TL:tab:summary_stats}, we see that S-TransMC outperforms TransMC by a large margin in the more practical scenario where not all sources are informative, showing the effectiveness of our informative source detection method.
\begin{table}[h!]
\centering
\begin{tabular}{c|ccccc}
\hline
     & Mean  & Median & Min   & Max   & SD    \\
\hline
TransMC(SS1)    & 0.135 & 0.135  & 0.132 & 0.138 & 0.001 \\
S-TransMC(SS1)    & 0.048 & 0.063  & 0.019 & 0.098 & 0.024 \\
TransMC(SS2)    & 0.146 & 0.146  & 0.141 & 0.153 & 0.002 \\
S-TransMC(SS2)    & 0.063 & 0.055  & 0.023 & 0.138 & 0.030 \\
\hline
\end{tabular}
\caption{Estimation errors of \textbf{TransMC} and \textbf{S-TransMC} with unknown informative sources. SS1 refers to sampling scheme (\ref{TL:SS1}); SS2 refers to sampling scheme (\ref{TL:SS2}).}
\label{TL:tab:summary_stats}
\end{table}

%\begin{figure}[t!]
%  \centering
%  \includegraphics[width=1\textwidth, height=0.8\textwidth, keepaspectratio]{exp2.jpeg}
%  \caption{RMSE of \textbf{TransMC} and \textbf{S-TransMC} under different sample schemes.}
%  \label{TL:fig:trans2}
%\end{figure}

\section{Application to TEC data}\label{TL:sec: real data}
In this section, we apply the proposed algorithms, TransMC and S-TransMC, to recover total electron content (TEC) maps. Total Electron Content (TEC) refers to the integrated electron density between
ground-based receivers and satellites and is of great significance in various scientific fields, including
atmospheric and space sciences, geophysics, and satellite-based technologies \citep{mendillo2006storms,prolss2008ionospheric,zou2013multi,zou2014generation,sun2022matrix}. The Madrigal Database provides global maps of vertical TEC measurements, which can be accessed at \hyperlink{https://cedar.openmadrigal.org/madrigalDownload}{https://cedar.openmadrigal.org/madrigalDownload}.
TEC measurements are recorded as partially observed $181 \times 360$ matrices every five minutes. For each day's data, there are 288 matrices, with an average missing rate of approximately 25\%.

%Total Electron Content (TEC) data represent a key parameter in characterizing the Earth's ionosphere, measuring the integrated number of electrons along the path between a satellite and a ground receiver. TEC plays a vital role in understanding and mitigating the effects of space weather on satellite-based systems. Variations in the ionosphere, influenced by solar activity and geomagnetic conditions, can disrupt signals used in Global Navigation Satellite Systems (GNSS), degrading positioning, navigation, and timing (PNT) accuracy \citep{mendillo2006storms,prolss2008ionospheric,zou2013multi,zou2014generation}. By analyzing TEC maps derived from dual-frequency GNSS data, researchers can monitor and forecast ionospheric disturbances. Practical applications include correcting signal delays in aviation navigation systems, such as the FAA’s WAAS, and detecting phenomena like equatorial plasma bubbles, which can cause severe signal degradation or even GPS lock loss. Consequently, TEC data are indispensable for both scientific research and the protection of critical technological infrastructure \citep{sun2022matrix}.
%\textcolor{red}{(mention a few papers for the background)}

We use the TEC data from September 3, 2017, to illustrate the validity of our methods. The partially observed matrices are denoted by $M_t$, with the set of observed entries given by $\mathcal{O}_t$, $t=1,\ldots,288$. 
Specifically, the observations can be expressed as
\begin{equation*}
 M_t(i,j)=\begin{cases}
 A_t(i,j)+\xi_{ij}, \text{ if } (i,j)\in \mathcal{O}_t,\\
 0, \text{ if } (i,j)\notin \mathcal{O}_t,
 \end{cases}   
\end{equation*}
where $A_t$ is the true matrix to recover and $\xi_{ij}$ is the noise. For each time stamp $t$, we treat $M_t$ as the target data and consider the datasets indexed by $\{t-10,\ldots,t-1,t+1,\ldots,t+10\}$ as sources. 

We evaluate the performance of three estimators: the classical estimator $\hat{A}_N$ defined in \eqref{TL:Estimator: nuclear penalization} based on only the target data; the TrasnMC estimator $\Ah_T$, derived from Algorithm \ref{TL:Algorithm: 1}; and the S-TransMC estimator $\Ah_S$ from  Algorithm \ref{TL:Algorithm: 2}. We evaluate the performances of the three estimators on the first 50 matrices (each of them is treated as target data once). We randomly select 20\% of the observed entries as the testing data, while the remaining observations are used as the training data. We compute the error and the  relative  error in Frobenius norm over the 50 matrices as follows: 
\begin{equation*}
 \text{E}(t)= \left(\sum_{(i,j)\in \mathcal{T}_t}  (\Ah_t(i,j)-M_t(i,j))^2\right)^{1/2}
\end{equation*}
and
\begin{equation*}
 \text{RE}(t)= \left( \frac{\sum_{(i,j)\in \mathcal{T}_t}  (\Ah_t(i,j)-M_t(i,j))^2}{\sum_{(i,j)\in \mathcal{T}_t}M_t^2(i,j)}\right)^{1/2},
\end{equation*}
where $\mathcal{T}_t$ is the test set for time $t$ and $\Ah_t$  represents any of the three estimators for time $t$. The averaged errors (with standard errors) are reported in  Table \ref{TL:table: TEC}. By leveraging source data, TransMC and S-TransMC achieve better performances compared to using only the target data on the TEC dataset. With the addition of an informative source selection procedure, S-TransMC further improves over TransMC.
\begin{table}[h!]
    \centering
    \begin{tabular}{|c|c|c|c|}
        \hline
         & \text{ Frobeniuos Error}   & \text{Relative Frobenious Error} \\  
        \hline
      $\Ah_N$   & 113.3 (1.38)  & 0.2532 ($3.45 \times 10^{-3}$)  \\  
        TransMC $\Ah_T$   & 64 (1.12)    & 0.1437 ($2.09 \times 10^{-3}$)   \\  
        S-TransMC $\Ah_S$   & 60.5 (1.06)  & 0.1351 ($2.32 \times 10^{-3}$) \\  
        \hline
    \end{tabular}
    \caption{Results for TEC data. }
    %\textcolor{red}{(the definition of MSE/MRSE is a bit inconsistent; what's sd for RSE?)}
    \label{TL:table: TEC}
\end{table}
%           meanE       se       meanRE      se
%              [,1]     [,2]      [,3]        [,4]
%single   113.28000 1.383763 0.2532031 0.003451487
%pool      64.00820 1.120386 0.1426567 0.002094115
%trans2    60.54188 1.064028 0.1351869 0.002318912

\cite{sun2022matrix} proposed a novel non-convex matrix completion method for the TEC dataset, incorporating additional data processing procedures. 
For instance, they applied a median filtration technique to reduce the missing rate. 
In contrast, we use the original TEC dataset without additional preprocessing to directly demonstrate the validity of our methods.
Given that the TEC dataset in their study and ours have different missing rates due to these preprocessing steps, we do not directly compare the performance of our methods with theirs.

\section{Discussion}\label{TL:sec:discussion}
In this work, we explore how to leverage additional source information to improve the estimation of the target matrix in the matrix completion problem. TransMC is developed with optimal error rates for scenarios where the set of informative sources is known. In the absence of such prior knowledge, S-TransMC provides a systematic approach to selecting informative sources with provable guarantees. Our theoretical findings are strongly supported by numerical simulations and real data analysis.

Several interesting directions remain for future research. Both TransMC and S-TransMC may not work well when some sources are informative and others are arbitrarily or adversarially contaminated. Developing adaptive and robust transfer learning algorithms is an important avenue for future exploration. Another interesting direction is to investigate the use of auxiliary data in the inference problem of matrix completion.

\section{Proofs}\label{TL:sec:proofs}
\subsection{Proof of Theorem \ref{TL:thm:rate}}
\subsubsection{Analysis of step 1 of algorithm 1}\label{TL:subsec:step1}
\noindent
In this subsection, we study the pooling step in TransMC Algorithm \ref{TL:Algorithm: 1}:
 \begin{equation*} 
	\tilde{A}=\mathop{\arg\min}_{\Vert A\Vert_{\infty}\leq a}\dfrac{1}{N}\sum_{k=0}^{K} \sum_{i=1}^{n_k} \Big(Y_{i}^{(k)}-\langle X_{i}^{(k)}, A\rangle  \Big)^2 +\lambda_1\Vert A\Vert_*.
\end{equation*}
Since the $ \At $ is the solution of (\ref{TL:trans1:step1}), we have
\begin{align*}
&\dfrac{1}{N}\sum_{k=0}^{K} \sum_{i=1}^{n_k} \Big(Y_{i}^{(k)}-\langle X_{i}^{(k)}, \tilde{A}\rangle  \Big)^2 +\lambda_1\Vert \tilde{A}\Vert_* \leq\dfrac{1}{N}\sum_{k=0}^{K} \sum_{i=1}^{n_k} \Big(Y_{i}^{(k)}-\langle X_{i}^{(k)}, \bar{A}\rangle  \Big)^2 +\lambda_1\Vert \bar{A}\Vert_*.
\end{align*}
We use the identity:
\begin{align*}
\left(\Yi-\langle \X,\At\rangle\right)^2 &=\left(\Yi-\langle\X,\Ab\rangle+\langle\X ,\Ab-\At\rangle\right)^2\\
&=\left(\Yi-\langle\X,\Ab\rangle\right)^2+2\left(\Yi-\langle\X, \Ab\rangle\right)\langle\X,\At-\Ab\rangle+\langle\X,\At-\Ab\rangle^2\\
&=\left(\Yi-\langle\X,\Ab\rangle\right)^2+2(\langle \X, A_k-\Ab\rangle\\
&+\xi_i^{(k)})\langle\X,\At-\Ab\rangle+\langle\X,\At-\Ab\rangle^2.
\end{align*}
It then follows that
\begin{align*}
\dfrac{1}{N}\sum_{k=0}^{K} \sum_{i=1}^{n_k}  \langle X_i^{(k)}, \tilde{A}-\bar{A}\rangle^2 &\leq \lambda_1(\Vert \bar{A}\Vert_*-\Vert \tilde{A}\Vert_*)-\dfrac{2}{N}\sum_{k=0}^{K} \sum_{i=1}^{n_k}\left( \xi_i^{(k)}+\langle X_i^{(k)},A_k-\bar{A}\rangle  \right)\langle X_i^{(k)},\tilde{A}-\bar{A}\rangle\\
&\leq \lambda_1(\Vert \bar{A}\Vert_*-\Vert \tilde{A}\Vert_*)+2(\Vert \Sigma_1\Vert +\Vert \Sigma_2\Vert )\Vert \tilde{A}-\bar{A}\Vert_* .
\end{align*}
where 
\begin{equation*}
    \Sigma_1= \dfrac{1}{N}\sum_{k=0}^{K} \sum_{i=1}^{n_k} \xi_i^{(k)} X_i^{(k)}
\end{equation*}
and 
\begin{equation*}
    \Sigma_2=\dfrac{1}{N} \sum_{k=0}^{K} \sum_{i=1}^{n_k} \langle X_i^{(k)}, A_k-\bar{A}\rangle X_i^{(k)}.
\end{equation*}In  the last step, we use $ |\langle A,B\rangle|\leq \Vert A\Vert \Vert B\Vert_* $ for two compatible matrices $A$ and $B$.

We set $ \lambda_1\geq 6(\Vert \Sigma_1\Vert +\Vert \Sigma_2\Vert )  $. Before we proceed, we propose two lemmas whose proofs are deferred to Section \ref{TL:subsec:sharp}.
%%%%%lemma 1
\begin{lemma} \label{TL:l1}
Suppose that $ N\geq m \log^4 d (\log d+\log N) $. Then, with probability exceeding $ 1-\dfrac{1}{d} $, the following bound holds   
\begin{equation*}
\Vert \Sigma_1 \Vert \leq C_1\sqrt{\dfrac{v^2}{Nm}} ,
\end{equation*}
where the constant $C_1$ is dependent on $L_1\sim L_3$.
\end{lemma}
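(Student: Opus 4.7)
The plan is to view $\Sigma_1=\sum_{k=0}^{K}\sum_{i=1}^{n_k} Z_{k,i}$ with $Z_{k,i}:=\frac{1}{N}\xi_i^{(k)} X_i^{(k)}$ as a sum of independent mean-zero random matrices (mean-zero since $\Eb[\xi_i^{(k)}\mid X_i^{(k)}]=0$), and then apply the sharp matrix concentration inequality of \cite{brailovskaya2024universality} rather than the classical matrix Bernstein inequality. The reason the classical Bernstein bound is insufficient is that it produces a factor $\sqrt{\log d}$ multiplying the matrix-variance parameter $\sigma$, which is exactly the extra logarithmic factor we wish to eliminate. The Brailovskaya--van Handel bound takes the schematic form
\begin{equation*}
\Vert \Sigma_1\Vert \leq c_0\,\sigma + c_1\, \sigma_*\sqrt{\log d} + c_2\, R\log d
\end{equation*}
with high probability, where the $\sqrt{\log d}$ now multiplies the refined parameter $\sigma_*$ (defined below) instead of $\sigma$. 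Under Assumption \ref{TL:Assumption: mask matrices}, $\sigma_*$ will be small enough that the $\sqrt{\log d}$ factor causes no harm.

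The three parameters I would compute are
\begin{align*}
\sigma^2 &= \Big\Vert\sum_{k,i}\Eb[Z_{k,i}Z_{k,i}^T]\Big\Vert \vee \Big\Vert\sum_{k,i}\Eb[Z_{k,i}^TZ_{k,i}]\Big\Vert,\\
\sigma_*^2 &= \sup_{\Vert u\Vert=\Vert w\Vert=1}\sum_{k,i}\Eb\big[(u^T Z_{k,i} w)^2\big],\\
R &= \max_{k,i}\Vert Z_{k,i}\Vert.
\end{align*}
Since $X_i^{(k)}(X_i^{(k)})^T=e_j e_j^T$ when $X_i^{(k)}=e_je_l^T$ and $\Eb[(\xi_i^{(k)})^2\mid X_i^{(k)}]=v^2$, a direct computation combined with Assumption \ref{TL:Assumption: mask matrices}(2) yields $\sigma \leq v\sqrt{L_1/(Nm)}$. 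For $\sigma_*$, expanding $(u^T X_i^{(k)} w)^2 = u_j^2 w_l^2$ on the event $X_i^{(k)}=e_je_l^T$ and using $\sum_{j,l} u_j^2 w_l^2 = 1$ together with Assumption \ref{TL:Assumption: mask matrices}(3) gives $\sigma_*^2 \leq v^2 L_2/(Nm\log^3 d)$ — this is where Condition (3), which was otherwise very mild, plays its decisive role. Finally, $\Vert Z_{k,i}\Vert = |\xi_i^{(k)}|/N$, so the sub-gaussianity of $\xi_i^{(k)}$ in Assumption \ref{TL:Assumption: noise} combined with a union bound over the $N$ summands gives $R\lesssim v\sqrt{\log(Nd)}/N$ on an event of probability at least $1-1/(2d)$.

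Plugging these parameters into the sharp inequality yields
\begin{equation*}
\sigma_*\sqrt{\log d}\lesssim v\sqrt{\frac{L_2}{Nm\log^2 d}}\lesssim v\sqrt{\frac{1}{Nm}},\qquad R\log d \lesssim \frac{v\sqrt{\log(Nd)}\,\log d}{N},
\end{equation*}
where the second quantity is dominated by $v\sqrt{1/(Nm)}$ precisely under the hypothesis $N\geq m\log^4 d (\log d+\log N)$. Combining these three contributions with $\sigma\leq v\sqrt{L_1/(Nm)}$ delivers $\Vert\Sigma_1\Vert\leq C_1\sqrt{v^2/(Nm)}$ with probability at least $1-1/d$. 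The main obstacle I anticipate is that the sharp inequality of \cite{brailovskaya2024universality} is formulated for bounded random matrices, whereas $\xi_i^{(k)}$ is only sub-gaussian; this is handled by conditioning on the high-probability event $\{\max_{k,i}|\xi_i^{(k)}|\leq Cv\sqrt{\log(Nd)}\}$, replacing $\xi_i^{(k)}$ by its truncation, and carefully controlling the small recentering bias induced by truncation so it does not inflate the bound (the recentering bias is at most exponentially small in $\log(Nd)$, hence negligible compared to $\sigma$). The entire argument is the matrix-completion adaptation of the single-task calculation in \cite{liu2024sbmc}, with the aggregation over datasets handled transparently because the three parameters scale additively in the data index $(k,i)$.
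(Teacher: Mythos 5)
Your overall strategy coincides with the paper's proof: view $\Sigma_1$ as a sum of independent mean-zero random matrices, truncate $\xi_i^{(k)}$ at a level of order $v\sqrt{\log N+\log d}$, compute the parameters in the Brailovskaya--van Handel bound, and show the truncation recentering bias is negligible. Your estimates of $\gamma(Q)$ (your $\sigma$), $\gamma_*(Q)$ (your $\sigma_*$), and $R(Q)$ all match the paper.

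However, your schematic version of the sharp inequality is not the statement that is actually invoked. Corollary~2.7 of \cite{brailovskaya2024universality}, as quoted in the paper, reads
\begin{equation*}
\Vert Q\Vert \;\lesssim\; \gamma(Q)\;+\;g(Q)^{1/2}\gamma(Q)^{1/2}\log^{3/4}d\;+\;\gamma_*(Q)\,t^{1/2}\;+\;R(Q)^{1/3}\gamma(Q)^{2/3}t^{2/3}\;+\;R(Q)\,t,
\end{equation*}
with $t\asymp\log d$, and this contains \emph{two} terms your form $c_0\sigma+c_1\sigma_*\sqrt{\log d}+c_2 R\log d$ omits. First, there is a fourth parameter $g(Q)=\Vert\operatorname{Cov}(Q)\Vert^{1/2}$, the operator norm of the covariance matrix of the entries of $Q$, which is not the same object as $\gamma_*(Q)$ and must be bounded separately (the paper shows $g(Q)\lesssim\sqrt{v^2/(Nm\log^3 d)}$ via a direct computation of the covariance). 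Second, and more consequentially, the cross term $R(Q)^{1/3}\gamma(Q)^{2/3}t^{2/3}$ is missing from your bound, and it is precisely this term that forces the hypothesis $N\gtrsim m\log^4 d\,(\log N+\log d)$. Your claim that $R\log d\lesssim v/\sqrt{Nm}$ ``precisely'' under this hypothesis is off: with $R\lesssim v\sqrt{\log(Nd)}/N$, the term $R\log d$ is already dominated once $N\gtrsim m\log^2 d\,(\log N+\log d)$, so if your schematic inequality were correct, a strictly weaker sample-size condition would suffice. The extra $\log^2 d$ is demanded by $R^{1/3}\gamma^{2/3}\log^{2/3}d\lesssim\gamma$, which rearranges to $R\log^2 d\lesssim\gamma$, i.e.\ $N\gtrsim m\log^4 d\,(\log N+\log d)$. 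Your proof as written therefore mislocates the source of the key condition; the rest of the argument, including the treatment of truncation bias, is sound.
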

	
	%%%%%%%%%%%%%%%%lemma2
\begin{lemma}\label{TL:l2}
Suppose   $ N\geq m \log^4 d (\log d+\log N) $ and condition (\ref{TL:Condition: thm1 tech2}) holds. Then,  with probability at least $ 1-\dfrac{1}{d}$, we have 
\begin{equation*}
\Vert \Sigma_2\Vert\leq C_2\sqrt{ \dfrac{a^2}{Nm}},
\end{equation*}
where $C_2$ is dependent on  $L1\sim L_3$.
\end{lemma}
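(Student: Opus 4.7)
\textbf{Proof plan for Lemma \ref{TL:l2}.}

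The plan is to write $\Sigma_2$ as the sum of its mean and the centered fluctuation, bound the mean directly via Assumption \ref{TL:Assumption:hetero source}, and then apply the sharp (log-free) matrix concentration inequality of Brailovskaya--van Handel to the centered part, in exactly the style used to prove Lemma \ref{TL:l1}. Writing $Z_{i,k} := \langle X_i^{(k)}, A_k-\bar A\rangle X_i^{(k)}$, independence across $(i,k)$ gives
\begin{equation*}
\mathbb{E}\Sigma_2 \;=\; \sum_{k=0}^{K}\alpha_k\, P^{(k)}\!\odot(A_k-\bar A),
\end{equation*}
whose operator norm is at most $L_4\sqrt{a^2/(Nm)}$ by \eqref{TL:Condition: thm1 tech2}. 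Thus it suffices to show that $\|\Sigma_2-\mathbb{E}\Sigma_2\|\lesssim \sqrt{a^2/(Nm)}$ with probability at least $1-d^{-1}$.

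First I would compute the variance parameters required by the concentration inequality. Since $X_i^{(k)}=e_j(m_1)e_l(m_2)^T$ implies $X_i^{(k)}(X_i^{(k)})^T=e_je_j^T$ and similarly for the transpose, a direct calculation gives
\begin{equation*}
\mathbb{E}\bigl[Z_{i,k}Z_{i,k}^T\bigr]=\sum_{j,l}P^{(k)}_{jl}(A_k-\bar A)^2_{jl}\, e_je_j^T,
\end{equation*}
so $\|\mathbb{E}[Z_{i,k}Z_{i,k}^T]\|\le 4a^2\max_j \mathcal{R}^{(k)}_j\le 4a^2L_1/m$ by Assumptions \ref{TL:standard:mc} and \ref{TL:Assumption: mask matrices}(2); the $Z_{i,k}^TZ_{i,k}$ direction is analogous via $\mathcal C^{(k)}_l$. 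Summing $\frac{1}{N^2}$-scaled copies over $i\le n_k$, $k\le K$ yields
\begin{equation*}
\sigma(\Sigma_2)^2 \;\le\; \frac{4L_1 a^2}{Nm}.
\end{equation*}
For the uniform bound I would use $\|Z_{i,k}\|\le 2a$ deterministically, giving $R\lesssim a/N$. This puts us in exactly the setting where the standard matrix Bernstein inequality would produce an undesirable $\sqrt{\log d}$ factor.

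To eliminate the logarithm, the next step is to invoke the Brailovskaya--van Handel sharp concentration inequality, as done in \cite{liu2024sbmc} for single-task matrix completion. That bound states, for a sum $S=\sum_\alpha Z_\alpha$ of independent centered matrices,
\begin{equation*}
\|S\|\;\le\; C\bigl(\sigma(S)+v(S)(\log d)^{3/4}+R(S)\log d\bigr)
\end{equation*}
with probability at least $1-d^{-1}$, where $v(S)^4$ is the fourth-moment/free-cumulant parameter controlling the gap to the semicircular prediction. I would compute $v(S)$ using the same routine as in the proof of Lemma \ref{TL:l1}: entrywise at most a constant multiple of $\max_{j,l} P^{(k)}_{jl}\cdot a^2/N$, which under Assumption \ref{TL:Assumption: mask matrices}(3) gives $v(S)^4\lesssim a^4/(N^2 m^2\log^3 d)$, so $v(S)(\log d)^{3/4}\lesssim a/\sqrt{Nm}\cdot(\log d)^{-3/4+3/4}/\sqrt{\log d}$. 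Together with $R\log d\lesssim a\log d/N$, the hypothesis $N\ge m\log^4 d(\log d+\log N)$ is exactly what is needed to absorb both the $v$-term and the $R\log d$-term into $\sigma(S)$, yielding $\|\Sigma_2-\mathbb E\Sigma_2\|\lesssim \sqrt{a^2/(Nm)}$. Combining with the mean bound closes the argument.

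The main obstacle is the bookkeeping around the higher-order parameter $v(S)$ in the sharp inequality: one must translate the covariance structure of the rank-one matrices $X_i^{(k)}(X_i^{(k)})^T$ into a usable bound on the relevant fourth-moment quantity, and then verify precisely that Assumption \ref{TL:Assumption: mask matrices}(3) (the $\log^{-3}d$ peak-probability bound) together with the sample-size condition \eqref{TL:Condition:thm1-1} is strong enough to make the $v(S)(\log d)^{3/4}$ correction negligible relative to $\sigma(S)$. This is the step that was done carefully in \cite{liu2024sbmc} for the single-task case, and the present proof should follow by replacing $n_0$ with $N$ and $\xi$-driven entries by the bounded $\langle X,A_k-\bar A\rangle$-driven entries.
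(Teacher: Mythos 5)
Your approach matches the paper's: decompose $\Sigma_2$ into its mean (bounded directly by condition \eqref{TL:Condition: thm1 tech2}) plus a centered fluctuation controlled via the sharp concentration inequality of \cite{brailovskaya2024universality}, with Assumption \ref{TL:Assumption: mask matrices}(3) and the sample-size hypothesis combining to absorb the correction terms into the $\sqrt{a^2/(Nm)}$ scale. One small caveat: the three-term form $\sigma+v\log^{3/4}d+R\log d$ you quote is not literally Corollary~2.7 of that paper, which involves four parameters $\gamma,\gamma_*,g,R$ and a correction of the form $g^{1/2}\gamma^{1/2}\log^{3/4}d+\gamma_* t^{1/2}+R^{1/3}\gamma^{2/3}t^{2/3}+Rt$; you would need to verify each of those terms is dominated by $\gamma$, but the same parameter estimates you sketch ($g,\gamma_*\lesssim\sqrt{a^2/(Nm\log^3 d)}$, $R\lesssim a/N$) indeed accomplish this under \eqref{TL:Condition:thm1-1}.
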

Therefore, combining the  two lemmas above, we can choose $ \lambda_1=6(C_1+C_2)\sqrt{\dfrac{\max (a^2,v^2)}{Nm} } $ where the constants $C_1,C_2$ are those in  Lemma \ref{TL:l1} and \ref{TL:l2}.
We  then obtain
\begin{align}\label{TL:pt11}
\dfrac{1}{N}\sum_{k=0}^{K} \sum_{i=1}^{n_k}  \langle X_i^{(k)}, \tilde{A}-\bar{A}\rangle^2&\leq \lambda_1(\Vert \tilde{A}\Vert_*-\Vert \bar{A}\Vert_*)+\dfrac{\lambda_1}{3}\Vert \tilde{A}-\bar{A}\Vert_*.
\end{align}
Since $ \Vert \bar{A}-A_0\Vert_*\leq h $, we have
\begin{equation*}
\Vert \bar{A}\Vert_* \leq \Vert A_0\Vert_* +h.
\end{equation*}
Note that 
\begin{align*}
		\Vert \At\Vert_*&=\Vert \At-A_0+A_0\Vert_*\\
		&= \Vert A_0+P_{A_0}(\tilde{A}-A_0)+P_{A_0}^{\perp}(\tilde{A}-A_0)\Vert_*\\
		&= \Vert A_0+P_{A_0}(\tilde{A}-A_0)+P_{A_0}^{\perp}(\tilde{A}-A_0)\Vert_*\\
         &\geq \|A_0+P_{A_0}^{\perp}(\tilde{A}-A_0)   \|_*+\|P_{A_0}(\tilde{A}-A_0)\|_*\\
		&\geq \Vert A_0\Vert_* +\Vert P_{A_0}^{\perp}(\At)\Vert_*-\Vert P_{A_0}(\At-\Ab)\Vert_*-h\\
		&\geq \Vert A_0\Vert_* +\Vert P_{A_0}^{\perp}(\At-\Ab)\Vert_*-\Vert  P_{A_0}(\At-\Ab)\Vert_*-2h,
\end{align*}
where we use $\|P^{\perp}_{A_0}(\bar{A}-A_0)\|\leq h$ and $ \|P_{A_0}(\bar {A}-A_0)\|_*\leq h$ in the last two lines. Combing the results above, we get
	\begin{align}\label{TL:pt12}
		\VAb-\VAt\leq\Vert  P_{A_0}(\At-\Ab)\Vert_*-\Vert P_{A_0}^{\perp}(\At-\Ab)\Vert_* +3h
	\end{align}
Now, we propose another simple lemma, whose proof is deferred to Section \ref{TL:subsec:restrictregion},
\begin{lemma}\label{TL:l3}
Suppose  $ \lambda_1\geq 6(\Vert \Sigma_1\Vert +\Vert \Sigma_2\Vert )  $. Then the  following inequality holds	
\begin{equation}\label{TL:pt13}
\Vert P_{A_0}^{\perp}(\At-\Ab)\Vert_*\leq  2\Vert  P_{A_0}(\At-\Ab)\Vert_* +4.5 h.
\end{equation}
\end{lemma}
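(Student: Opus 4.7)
The plan is to combine the basic inequality \eqref{TL:pt11} with the nuclear-norm decomposition \eqref{TL:pt12} and the triangle inequality to isolate $\Vert P_{A_0}^{\perp}(\tilde A - \bar A)\Vert_*$. The whole argument is essentially algebraic: no new probabilistic tools are needed beyond the choice $\lambda_1 \geq 6(\Vert \Sigma_1\Vert + \Vert \Sigma_2\Vert)$, which has already been used to derive \eqref{TL:pt11}.

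First, since the left-hand side of \eqref{TL:pt11} is non-negative, discarding it yields
\[
0 \;\leq\; \lambda_1\bigl(\Vert \bar A\Vert_* - \Vert \tilde A\Vert_*\bigr) + \tfrac{\lambda_1}{3}\Vert \tilde A - \bar A\Vert_*,
\]
which after cancelling $\lambda_1 > 0$ rearranges to $\Vert \tilde A\Vert_* - \Vert \bar A\Vert_* \leq \tfrac{1}{3}\Vert \tilde A - \bar A\Vert_*$. Plugging the lower bound from \eqref{TL:pt12} (rewritten as $\Vert \tilde A\Vert_* - \Vert \bar A\Vert_* \geq \Vert P_{A_0}^{\perp}(\tilde A - \bar A)\Vert_* - \Vert P_{A_0}(\tilde A - \bar A)\Vert_* - 3h$) into the left-hand side of this inequality, I obtain
\[
\Vert P_{A_0}^{\perp}(\tilde A - \bar A)\Vert_* - \Vert P_{A_0}(\tilde A - \bar A)\Vert_* - 3h \;\leq\; \tfrac{1}{3}\Vert \tilde A - \bar A\Vert_*.
\]

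Next, I apply the triangle inequality for the nuclear norm, $\Vert \tilde A - \bar A\Vert_* \leq \Vert P_{A_0}(\tilde A - \bar A)\Vert_* + \Vert P_{A_0}^{\perp}(\tilde A - \bar A)\Vert_*$, to the right-hand side. This gives
\[
\Vert P_{A_0}^{\perp}(\tilde A - \bar A)\Vert_* - \Vert P_{A_0}(\tilde A - \bar A)\Vert_* - 3h \;\leq\; \tfrac{1}{3}\bigl(\Vert P_{A_0}(\tilde A - \bar A)\Vert_* + \Vert P_{A_0}^{\perp}(\tilde A - \bar A)\Vert_*\bigr).
\]
Collecting the $P_{A_0}^{\perp}$ terms on the left and the $P_{A_0}$ terms on the right produces $\tfrac{2}{3}\Vert P_{A_0}^{\perp}(\tilde A - \bar A)\Vert_* \leq \tfrac{4}{3}\Vert P_{A_0}(\tilde A - \bar A)\Vert_* + 3h$, and multiplying through by $\tfrac{3}{2}$ yields exactly the claimed bound \eqref{TL:pt13}.

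There is no real obstacle here; the only point that requires some care is checking the direction of the inequality when moving between $\Vert \bar A\Vert_* - \Vert \tilde A\Vert_*$ and its negation, and making sure that the numerical constants $1/3$, $4.5$, etc., come out correctly. Because the decomposition $\tilde A - \bar A = P_{A_0}(\tilde A - \bar A) + P_{A_0}^{\perp}(\tilde A - \bar A)$ is exact and the nuclear norm obeys the triangle inequality for arbitrary decompositions, I do not need the stronger orthogonality identity $\Vert B_1 + B_2\Vert_* = \Vert B_1\Vert_* + \Vert B_2\Vert_*$ that holds when the row/column spaces of $B_1$ and $B_2$ are mutually orthogonal; the one-sided triangle inequality is sufficient.
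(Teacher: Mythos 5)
Your proof is correct and is essentially the paper's argument: both reduce to the inequality $\lambda_1\bigl(\Vert \bar A\Vert_* - \Vert \tilde A\Vert_*\bigr) + \tfrac{\lambda_1}{3}\Vert \tilde A - \bar A\Vert_* \geq 0$, combine it with \eqref{TL:pt12}, and finish with the triangle inequality $\Vert \tilde A - \bar A\Vert_* \leq \Vert P_{A_0}(\tilde A - \bar A)\Vert_* + \Vert P_{A_0}^{\perp}(\tilde A - \bar A)\Vert_*$. The paper re-derives that inequality from the convexity of the quadratic loss plus the optimality of $\tilde A$, whereas you obtain it by dropping the non-negative left-hand side of \eqref{TL:pt11}; for a quadratic loss these are the same step in different clothing (note you silently corrected what is evidently a sign typo in \eqref{TL:pt11} as printed, since consistency with the preceding display requires $\Vert \bar A\Vert_* - \Vert \tilde A\Vert_*$, which is the form you used).
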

The inequalities (\ref{TL:in:lowrkprojection}) and (\ref{TL:pt13}) yield that 
\begin{align}
  \notag  \|\At-\Ab\|_*&\leq 3\Vert  P_{A_0}(\At-\Ab)\Vert_* +4.5 h \\
\label{TL:in:1}    &\leq 3\sqrt{2}\cdot\sqrt{r}\|\At-\Ab\|_F+4.5h.
\end{align}
By (\ref{TL:pt11}), (\ref{TL:pt12}), (\ref{TL:pt13}), we get
\begin{equation}\label{TL:pt14}
\sumx\apprle \lambda_1\Vert  P_{A_0}(\At-\Ab)\Vert_*+\lambda_1h.
\end{equation}
To analyze the quantity
\begin{equation*}
    \sumx,
\end{equation*}
we require the following lemma.
\begin{lemma}\label{TL:l4}
Assume $ N\geq m\log^4 d $.  We have that  the following inequality holds, uniformly with probability  at least $ 1-\dfrac{1}{d} $,  
\begin{equation}\label{TL:in:RSC}
\dfrac{1}{N}\sumii \langle X_i^{(k)},A\rangle^2\geq \dfrac{3}{4}\Vert A\Vert_{\omega(F)}^2- C_1\Vert A\Vert_{\infty}^2\dfrac{\mu\log d}{N}-C_2\Vert A\Vert_{\infty}\Vert A\Vert_*\sqrt{\frac{1}{Nm}}
		\end{equation}
where $ C_1,C_2 $ are positive constants dependent on $L_2\sim L_3$.
\end{lemma}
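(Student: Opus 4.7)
The plan is to prove this restricted strong convexity inequality by a standard empirical-process argument: bound the supremum of the centered quadratic process by combining symmetrization, Talagrand contraction, a sharp operator-norm bound on a Rademacher sum of sampling matrices, and Talagrand's concentration inequality, then promote the result to hold uniformly in $A$ via peeling.

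First, I would define $\Sigma(A) := \frac{1}{N}\sum_{k,i}\bigl[\langle X_i^{(k)},A\rangle^2-\mathbb{E}\langle X_i^{(k)},A\rangle^2\bigr]$ and note that $\mathbb{E}\langle X_i^{(k)},A\rangle^2 = \|A\|_{w_k(F)}^2$, so that $\mathbb{E}\bigl[\tfrac{1}{N}\sum_{k,i}\langle X_i^{(k)},A\rangle^2\bigr]=\|A\|_{\omega(F)}^2$. The lemma will follow from a uniform upper bound on $|\Sigma(A)|$. Fix a class $\mathcal{F}(\alpha,R)=\{A:\|A\|_\infty\le\alpha,\ \|A\|_*\le R\}$. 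By the standard symmetrization inequality and Talagrand's contraction principle (applied to $t\mapsto t^2$, which is $2\alpha$-Lipschitz on $[-\alpha,\alpha]$ since $|\langle X_i^{(k)},A\rangle|\le\|A\|_\infty$),
\[
\mathbb{E}\sup_{A\in\mathcal{F}(\alpha,R)}|\Sigma(A)| \ \le\ 8\alpha\, R\cdot\mathbb{E}\Bigl\|\tfrac{1}{N}\sum_{k,i}\varepsilon_i^{(k)} X_i^{(k)}\Bigr\|,
\]
where $\varepsilon_i^{(k)}$ are i.i.d.\ Rademacher and the duality $|\langle Z,A\rangle|\le\|Z\|\,\|A\|_*$ has been used.

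Second, I would bound $\mathbb{E}\|\tfrac{1}{N}\sum_{k,i}\varepsilon_i^{(k)} X_i^{(k)}\|\lesssim\sqrt{\tfrac{1}{Nm}}$ using the sharp matrix concentration inequality from \cite{brailovskaya2024universality} exactly as already done in the proofs of Lemmas \ref{TL:l1} and \ref{TL:l2}; Assumption \ref{TL:Assumption: mask matrices}(3) together with (1)(2) gives that the per-entry second moments are at most $L_2/(m\log^3 d)$ and $L_1/m$ along rows/columns, which is what removes the stray $\log d$. Third, I would apply Talagrand's concentration (Bousquet's form) for the supremum of the bounded empirical process $\{\langle X_i^{(k)},A\rangle^2-\mathbb{E}\langle X_i^{(k)},A\rangle^2: A\in\mathcal{F}(\alpha,R)\}$; the functions are bounded by $\alpha^2$ and have variance at most $\alpha^2\|A\|_{\omega(F)}^2$ (since $\mathbb{E}\langle X,A\rangle^4\le\alpha^2\mathbb{E}\langle X,A\rangle^2$). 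With a deviation level $t\asymp\mu\log d$, this yields, on an event of probability at least $1-1/d$,
\[
\sup_{A\in\mathcal{F}(\alpha,R)}|\Sigma(A)|\ \lesssim\ \alpha R\sqrt{\tfrac{1}{Nm}} + \sqrt{\tfrac{\alpha^2\|A\|_{\omega(F)}^2\,\mu\log d}{N}} + \tfrac{\alpha^2\mu\log d}{N}.
\]
An application of $2ab\le\tfrac{1}{4}a^2+4b^2$ absorbs the middle term into $\tfrac14\|A\|_{\omega(F)}^2$ at the cost of inflating the last term, after which I substitute $\alpha=\|A\|_\infty$ and $R=\|A\|_*$ to match the form in the statement, with the $3/4$ factor coming from $1-1/4$.

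Finally, to make this bound hold uniformly in $A$ rather than within a single class $\mathcal{F}(\alpha,R)$, I would run a two-parameter peeling argument over dyadic scales of $\|A\|_\infty$ and $\|A\|_*$: partition $(0,\infty)^2$ into shells $\{2^{-j}\le\|A\|_\infty\le 2^{-j+1}\}\cap\{2^{-l}\le\|A\|_*\le 2^{-l+1}\}$ (truncated at a polynomially large cap since the condition $N\ge m\log^4 d(\log d+\log N)$ leaves slack for a union bound with failure probability $1/d$), apply the previous step at each shell, and take a union bound; the resulting extra factor is logarithmic in $N$ and is absorbed into constants using the assumed lower bound on $N$. The main obstacle I expect is Step 3, namely implementing Talagrand's inequality so that the variance term enters as $\|A\|_{\omega(F)}^2$ rather than as a crude $\alpha^2$ bound — without this refinement, one cannot obtain a clean $\tfrac34\|A\|_{\omega(F)}^2$ on the right-hand side and the desired oracle-style form would be lost. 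The careful use of \cite{brailovskaya2024universality} in Step 2 is what prevents an additional $\sqrt{\log d}$ factor from appearing next to $\sqrt{1/(Nm)}$, which is crucial for the overall minimax optimal rate in Theorem \ref{TL:thm:rate}.
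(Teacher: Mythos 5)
Your proposal follows the same path as the paper's proof: symmetrization plus Ledoux--Talagrand contraction to reduce the expected supremum to a Rademacher operator norm, the $\sqrt{1/(Nm)}$ Rademacher bound via \cite{brailovskaya2024universality} (Lemma \ref{TL:lemma:radecomplexity} in the paper), a functional Bernstein/Talagrand concentration inequality for bounded empirical processes to add the tail, absorption of the variance term into $\frac14\|A\|_{\omega(F)}^2$ via $2ab\le a^2/c^2+c^2b^2$, and a two-parameter dyadic peeling argument with a polynomial cap to make the bound uniform. The only cosmetic difference is that the paper first normalizes $\|A\|_{\omega(F)}=1$ so that the weak variance proxy becomes $\sigma^2\le\alpha^2$, and rescales at the end, whereas you carry $\|A\|_{\omega(F)}^2$ directly in the variance; these are equivalent. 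One small slip: your dyadic shells $\{2^{-j}\le\|A\|_{\infty}\le 2^{-j+1}\}$ only tile $(0,1]$, whereas under the paper's normalization $\|A\|_\infty$ can be as large as $\sqrt{\mu m_1 m_2}$, so the shells should run upward ($2^{k-1}\le\|A\|_\infty\le 2^k$ for $k=1,\ldots,H$ with $H\asymp\log(\mu d)$); this is purely notational and does not affect correctness.
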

By  (\ref{TL:pt14}) , (\ref{TL:in:RSC}) and  $ \|\At-\Ab\|_{\infty}\leq 2a$ , we  get
\begin{equation*}
\Vert\tilde{A}-\bar{A}\VertF^{2}\text{\ensuremath{\apprle}}\Vert\diffA\Vert_{1}\sqrt{\dfrac{a^{2}}{Nm}}+\dfrac{\mu a^{2}\log d}{N}+\lambda_{1}\Vert P_{A_0}(\At-\Ab)\Vert_{F}+\lambda_{1}h.
\end{equation*}
Due to (\ref{TL:in:1}), we obtain
\begin{align*}
\Vert\tilde{A}-\bar{A}\VertF^{2}&\text{\ensuremath{\apprle}}\left(\sqrt{r}\|\diffA\|_F+h\right)\sqrt{\dfrac{a^{2}}{Nm}}+\dfrac{\mu a^{2}\log d}{N}+\lambda_{1}\sqrt{r}\Vert \At-\Ab\Vert_{F}+\lambda_{1}h\\
&\apprle\|\diffA\|_F\left(\sqrt{\frac{a^2r}{Nm}}+\sqrt{r}\lambda_1\right)+h\sqrt{\frac{a^2}{Nm}}+\lambda_{1}h+\dfrac{\mu a^{2}\log d}{N}.
\end{align*}
Plug  in $ \lambda_1\sim \sqrt{\dfrac{\mav}{Nm}} $, we get $ \sqrt{r}\lambda_1\sim \sqrt{\dfrac{\mav r}{Nm}} $. Hence,
\begin{equation}\notag
\Vert\tilde{A}-\bar{A}\VertF^{2}\apprle \|\diffA\|_F\sqrt{\frac{\mav r}{Nm}}+h\sqrt{\frac{\mav}{Nm}}+\dfrac{\mu a^{2}\log d}{N}.
\end{equation}
By assumption $ P^{(k)}_{ij} \geq \frac{1}{\mu m_1m_2}$, we obtain
\begin{align*}
\dfrac{\Vert\tilde{A}-\bar{A}\Vert_{F}^{2}}{\mu m_{1}m_{2}}\leq\Vert\tilde{A}-\bar{A}\VertF^{2}
\end{align*}
and 
\begin{equation}\label{TL:in:2}
\dfrac{\Vert\tilde{A}-\bar{A}\Vert_{F}^{2}}{\mu m_{1}m_{2}}\leq \|\diffA\VertF^2\leq \frac{1}{4}\frac{\|\diffA\|^2_F}{\mu m_1m_2}+C_1\frac{\mu \mav rM}{N}+C_2 \left(h\sqrt{\frac{\mav}{Nm}}+\dfrac{\mu a^{2}\log d}{N}\right),
\end{equation} 
where $C_1$ and $C_2$ are properly chosen constants.
And we finally get 
\begin{equation}\label{TL:in:AbAtFro}
\dfrac{\Vert\tilde{A}-\bar{A}\Vert_{F}^{2}}{ m_{1}m_{2}}\lesssim \frac{\mu^2\mav rM}{N}+\sqrt{\frac{h^2}{m_1m_2}\cdot\frac{\mu^2\mav M}{N}}+\dfrac{\mu^2 a^{2}\log d}{N}.
\end{equation}
Since $ \log d\leq  \log 2M\leq M $,  $ \frac{\mu^2 \mav \log  d}{N} $ can be absorbed into $  \frac{\mu^2 \mav rM}{N}$. By (\ref{TL:in:2}) and $ \frac{\Vert\tilde{A}-\bar{A}\Vert_{F}^{2}}{\mu m_{1}m_{2}}\leq\Vert\tilde{A}-\bar{A}\VertF^{2} $, we obtain 
\begin{align}
\label{TL:in:AtAbOmegaFro}\|\diffA\VertF^2\apprle \frac{\mu \mav rM}{N}+\sqrt{\frac{h^2}{m_1m_2}\cdot\frac{\mav M}{N}}+\dfrac{\mu a^{2}\log d}{N}.
\end{align}
Also, by (\ref{TL:in:AbAtFro}) and (\ref{TL:in:1}), we get
\begin{align}\notag
 \frac{\|\diffA\|_*}{\sqrt{m_1m_2}}&\lesssim \frac{h}{\sqrt{m_1m_2}}+r\sqrt{\frac{\mu^2\mav M}{N}}+\sqrt{r}\left(\frac{h^2}{m_1m_2}\cdot\frac{\mu^2\mav M}{N}\right)^{1/4}\\
 \label{TL:in:AtAbnuclear}&\lesssim \dfrac{h}{\sqrt{m_1m_2}}+r\sqrt{\dfrac{\mu^2 \mav M}{N}},
\end{align}
where we apply $ 2ab\leq a^2+b^2 $ to $ \sqrt{r}\left(\dfrac{h^2}{m_1m_2}\cdot\dfrac{\mu^2\mav M}{N}\right)^{1/4} $.

\subsubsection{Analysis of Step 2 of Algorithm 1}\label{TL:subsec:step2}
In this subsection, we analyze Step 2 of Algorithm 2
\begin{equation*}
	\hat{\Delta}=\mathop{\arg \min}_{\Vert D+\At\Vert_{\infty}\leq a} \dfrac{1}{n_0}\sum_{i=1}^{n_0} \Big(   Y_{i}^{((0))}-\langle X_{i}^{(0)}, D+\tilde{A}\rangle \Big)^2+\lambda_2\Vert D \Vert_*,
\end{equation*} 
and conclude the error rate for $\hat{A}$. Using similar arguments at the beginning of Subsection \ref{TL:subsec:step1}, we arrive at 
\begin{align*}
\dfrac{1}{n_0}\sumi \langle X_i^{(0)},\hat{\Delta}-\Delta\rangle^2&\leq \lambda_2(\Vert \Delta\Vert_*-\Vert \hat{\Delta}\Vert_*)-\dfrac{2}{n_0}\sumi \left(\xi_i^{(0)}-\langle X_i^{(0)}, \tilde{A}-\bar{A} \rangle\right)\left(\langle X_i^{(0)}, \hat{\Delta}-\Delta\rangle\right)\\
		%%%%
		&\leq \lambda_2(\Vert\Delta\Vert_*-\Vert \hat{\Delta}\Vert_*)+2\left\Vert\Sigma_3\right\Vert \Vert\hat{\Delta}-\Delta\Vert_*\\
		%%%%
		&+ \dfrac{1}{4n_0}\sumi \langle X_i^{(0)},\hat{\Delta}-\Delta\rangle^2 +\dfrac{4}{n_0}\sumi\langle X_i^{(0)},\tilde{A}-\bar{A}\rangle^2 .
	\end{align*}
where  $ \Sigma_3=\dfrac{1}{n_0}\sumi\xi_i^{(0)}X_i^{(0)} $. Hence,
\begin{equation*}
 \dfrac{3}{4n_0}\sumi \langle X_i^{(0)},\hat{\Delta}-\Delta\rangle^2    \apprle    \lambda_2\left(\Vert\Delta\Vert_*-\Vert \hat{\Delta}\Vert_*\right)     +      2\left\Vert\Sigma_3 \right\Vert \Vert\hat{\Delta}-\Delta\Vert_*        +    \dfrac{4}{n_0}\sumi\langle X_i^{(0)},\tilde{A}-\bar{A}\rangle^2 .
\end{equation*}
We  take $ \lambda_2\geq 4\Vert \Sigma_3\Vert $ .  By lemma \ref{TL:l1}, when $ n_0 $ satisfies $ n_0\geq m\log^4 d(\log d+ \log n_0) $,  we have, with probability at least $ 1-\dfrac{1}{d} $,
\begin{equation*}
\Vert\Sigma_3\Vert\lesssim \sqrt{\dfrac{v^2}{n_0m}}.
\end{equation*}
Then,  we   take $ \lambda_2\sim\sqrt{\frac{\mav}{n_0m}} $ and  get
	\begin{align}
\notag\dfrac{3}{4n_{0}}\sumi\langle X_{i}^{0},\hat{\Delta}-\Delta\rangle^{2}&\leq\lambda_{2}(\Vert\Delta\Vert_{1}-\Vert\hat{\Delta}\Vert_{1})+\frac{\lambda_{2}}{4}(\Vert\hat{\Delta}-\Delta\Vert_{1})+\dfrac{1}{4n_{0}}\sumi\langle X_{i}^{0},\hat{\Delta}-\Delta\rangle^{2}\\
\notag&\leq\lambda_{2}(2\Vert\Delta\Vert_{1}-\Vert\hat{\Delta}-\Delta\Vert_{1})+\frac{\lambda_{2}}{4}(\Vert\hat{\Delta}-\Delta\Vert_{1})+\dfrac{1}{4n_{0}}\sumi\langle X_{i}^{0},\tilde{A}-\bar{A}\rangle^{2}\\
\notag&\leq2\lambda_{2}\Vert\Delta\Vert_{1}-\frac{3}{4}\lambda_{2}\Vert\hat{\Delta}-\Delta\Vert_{1}+\dfrac{1}{4n_{0}}\sumi\langle X_{i}^{0},\tilde{A}-\bar{A}\rangle^{2}\\
\label{TL:pt19}   &\apprle\lambda_{2}\Vert\Delta\Vert_{1}-\lambda_{2}\Vert\hat{\Delta}-\Delta\Vert_{1}+\dfrac{1}{n_{0}}\sumi\langle X_{i}^{0},\tilde{A}-\bar{A}\rangle^{2}
\end{align}
We first estimate $ \dfrac{1}{n_{0}}\sumi\langle X_{i}^{0},\tilde{A}-\bar{A}\rangle^{2} $. It holds that 
\begin{equation*}
\notag\dfrac{1}{n_{0}}\sumi\langle X_{i}^{0},\tilde{A}-\bar{A}\rangle^{2}\leq\Vert\diffA\VertF^{2}+\left|\dfrac{1}{n_{0}}\sumi\langle X_{i}^{0},\tilde{A}-\bar{A}\rangle^{2}-\Vert\diffA\VertF\right|.
\end{equation*} 
By   (\ref{TL:in:AtAbOmegaFro}), (\ref{TL:in:AtAbnuclear})  and (\ref{TL:pl42}), it can be obtained 
\begin{align}
\notag\left|\dfrac{1}{n_{0}}\sumi\langle X_{i}^{0},\tilde{A}-\bar{A}\rangle^{2}-\Vert\diffA\VertF\right| &\apprle \Vert\diffA\Vert_{1}\sqrt{\dfrac{a^2}{n_{0}m}}+\|\diffA\VertF\sqrt{\dfrac{\mu a^2\log d}{n_{0}}}+\frac{\mu a^2\log d}{n_0}\\
 \notag&\apprle \frac{\mu \mav rM}{N}+\sqrt{\frac{h^2}{m_1m_2}\cdot\frac{\mav M}{N}}+\frac{\mu a^2 \log  d}{n_0}\\
\label{TL:in:part1}& + \left(r\sqrt{\frac{\mu^2 \mav M }{N}}+\frac{h}{\sqrt{m_1m_2}}\right)\sqrt{\frac{a^2}{n_0m}}.
\end{align}
We denote 
\begin{align*}
 \epsilon_1&=    \dfrac{\mu \mav rM}{N}+\sqrt{\dfrac{h^2}{m_1m_2}\cdot\dfrac{\mav M}{N}}+\dfrac{\mu a^2 \log  d}{n_0}\\
 &+ \left(r\sqrt{\dfrac{\mu^2 \mav^2 M }{N}}+\dfrac{h}{\sqrt{m_1m_2}}\right)\sqrt{\dfrac{a^2}{n_0m}}.
\end{align*}
By (\ref{TL:pt19}), (\ref{TL:in:part1}) and $ \|\Delta\|_*\leq h $, we have 
\begin{equation}
		\label{TL:pt111} \Vert \hat{\Delta}-\Delta\Vert_*\apprle h+\dfrac{\epsilon_1}{\lambda_2}.
\end{equation}
Finally, with (\ref{TL:pt111}), we apply lemma \ref{TL:l4} to $ \frac{1}{n_0}\sumi \langle X_i^0,\hat{\Delta}-\Delta\rangle^2 $ again and get
\begin{equation*}
	\dfrac{1}{n_{0}}\sumi\langle X_{i}^{0},\hat{\Delta}-\Delta\rangle^{2}\geq\dfrac{3}{4}\Vert\hat{\Delta}-\Delta\VertF^{2}-c_1\dfrac{\mu a^2\log d}{n_0}-c_2 \Vert\hat{\Delta}-\Delta\Vert_{1}\sqrt{\dfrac{a^2}{n_0m}}.
\end{equation*} 
The above, together with  (\ref{TL:pt19}), (\ref{TL:in:part1}), (\ref{TL:pt111}), implies,
\begin{equation}
\|\hat{\Delta}-\Delta\VertF^2\apprle \lambda_2h+\epsilon_1 + \frac{\mu a^2 \log d}{n_0}+\left(h+\frac{\epsilon_1}{\lambda_2}\right)\cdot\sqrt{\frac{a^2}{n_0m}}. \label{TL:in:DhDOmegaFro1}
\end{equation}
Recall that $ \lambda_2\sim \sqrt{\dfrac{\mav }{n_0m}} $. Thus, $ \dfrac{\epsilon_1}{\lambda_2}\sqrt{\dfrac{a^2}{n_0m}}=\epsilon_1 \sqrt{\dfrac{ca^2}{\mav}}\apprle \epsilon_1 $. And (\ref{TL:in:DhDOmegaFro1}) can be written as
\begin{equation*}
\|\hat{\Delta}-\Delta\VertF^2\apprle \epsilon_1+\frac{\mu a^2 \log  d}{n_0}+ \sqrt{\frac{h^2}{m_1m_2}\cdot\frac{\mav M}{n_0}}.
\end{equation*}
By $ P^{(0)}_{ij}\geq \dfrac{1}{\mu m_1m_2} $, we get
\begin{align*}
\frac{\|\hat{\Delta}-\Delta\Vert_F^2}{m_1m_2}&\apprle \mu \epsilon_1+\frac{\mu^2 a^2 \log  d}{n_0}+ \sqrt{\frac{h^2}{m_1m_2}\cdot\frac{\mu ^2\mav M}{n_0}}\\
&\apprle \dfrac{\mu^2 \mav rM}{N}+\dfrac{\mu^2 a^2 \log  d}{n_0}+ \left(r\sqrt{\dfrac{\mu^2 \mav^2 M }{N}}+\dfrac{h}{\sqrt{m_1m_2}}\right)\sqrt{\dfrac{\mu^2a^2}{n_0m}} \\
&+ \sqrt{\frac{h^2}{m_1m_2}\cdot\frac{\mu ^2\mav M}{n_0}}+\sqrt{\frac{h^2}{m_1m_2}\cdot\frac{\mu ^2\mav M}{N}}.
\end{align*}
Recall that the error rate $ \|\diffA\|_F^2 $ satisfies
\begin{equation*}
\dfrac{\Vert\tilde{A}-\bar{A}\Vert_{F}^{2}}{ m_{1}m_{2}}\apprle \frac{\mu^2\mav rM}{N}+\sqrt{\frac{h^2}{m_1m_2}\cdot\frac{\mu^2\mav M}{N}}.
\end{equation*}
Combining the previous two inequalities,  we obtain 
\begin{align}
\notag\frac{\|\Ah_T-A_0\|_F^2}{m_1m_2}&\leq \frac{2\|\diffA\|_F^2}{m_1m_2}+\frac{2\|\hat{\Delta}-\Delta\|_F^2}{m_1m_2}\\
\notag&\apprle \frac{\mu^2\mav rM}{N}+\dfrac{\mu^2 a^2 \log  d}{n_0}\\
\notag& + \sqrt{\frac{h^2}{m_1m_2}\cdot\frac{\mu ^2\mav M}{n_0}}+\sqrt{\frac{h^2}{m_1m_2}\cdot\frac{\mu^2\mav M}{N}} \\
&+\left(r\sqrt{\dfrac{\mu^2 \mav^2 M }{N}}+\dfrac{h}{\sqrt{m_1m_2}}\right)\sqrt{\dfrac{\mu^2a^2}{n_0m}}. \label{TL:in:AhAFro}
\end{align}
Given  $ \frac{n_0}{N}\geq \frac{a^2\log d}{\mav rM} $ (condition (\ref{TL:Condition: thm1 tech2})), $ \frac{\mu^2 a^2 \log  d}{n_0}\leq \frac{\mu^2 \mav rM}{N} $. Thus, $ \frac{\mu^2 \mav rM}{N} $ is the best rate that we can expect. It is clear that 
\begin{equation*}
    \sqrt{\dfrac{h^2}{m_1m_2}\cdot\dfrac{\mu ^2\mav M}{n_0}}\leq \sqrt{\dfrac{h^2}{m_1m_2}\cdot\dfrac{\mu^2\mav M}{N}}.
\end{equation*}
 Given $ \frac{n_0}{N}\geq \frac{a^2}{\mav m_1m_2} $, we have
 \begin{equation*}
   r\sqrt{\frac{\mu^2 a^2}{n_0m}\cdot\frac{\mu^2 \mav M}{N}}\leq \dfrac{\mu^2\mav rM}{N}.   
 \end{equation*}
 Given a mild condition $ m \geq r\log d  $, $ \frac{\log  d}{rM}\geq  \frac{1}{m_1m_2} $. In words, under $ \frac{n_0}{N}\geq \frac{a^2\log d}{\mav rM } $ and $ m\geq r\log d $, (\ref{TL:in:AhAFro}) can be written as 
\begin{equation*}
\dfrac{\|\Ah_T- A_0\|_F^2}{m_1m_2}\apprle \frac{\mu^2 \mav rM}{N}+\sqrt{\frac{h^2}{m_1m_2}\cdot\dfrac{\mu^2\mav }{n_0}},
\end{equation*}
where $\sqrt{\frac{h^2}{m_1m_2}\cdot\frac{\mu^2\mav }{n_0}}  $ is denoted by $ C_h $ in the statement of Theorem \ref{TL:thm:rate}. 
For a matrix $ A=U\Sigma V^T $, it holds that $ \Vert A\Vert_F^2=\sum \sigma_i^2\leq \left(\sum \sigma_i\right)^2=\|A\|_*^2 $.  Recalling (\ref{TL:pt111}), we get
\begin{equation*}
\frac{\|\hat{\Delta}-\Delta\|_F^2}{m_1m_2}\leq \frac{\|\hat{\Delta}-\Delta\|^2_1}{m_1m_2}\apprle \frac{h^2}{m_1m_2}+\frac{\epsilon_1^2}{m_1m_2}.
\end{equation*}
Since $ \frac{\epsilon_1^2}{m_1m_2} $ is very small, $ \frac{h^2}{m_1m_2} $ is the dominant term. Thus, it also holds that
\begin{equation*}
\frac{\|\Ah_T-A_0\|_F^2}{m_1m_2}\apprle \frac{\mu^2 \mav rM}{N}+\frac{h^2}{m_1m_2}.
\end{equation*}
So far, we conclude 
\begin{equation*}
\frac{\|\Ah-A_0\|_F^2}{m_1m_2}\apprle \frac{\mu^2 \mav rM}{N}+\frac{h^2}{m_1m_2}\wedge C_h.
\end{equation*}

\subsection{Proof of Theorem \ref{TL:thm:lower}}
We use Fano's method to derive the information lower bound. The basic of this method can be found in  Chapter 15, \cite{wainwright2019high}.
We assume that $ \xi_i^{(k)} $'s are independent and identically distributed with $\mathcal{N}(0,v)$, $\xi_i^{(k)}$'s are independent of $X_i^{(k)}$'s, and $ X_i^{(k)} $'s have uniform distributions, namely $ P^{(k)}_{jl}=\dfrac{1}{m_1m_2}, \forall k=0,1,\cdots,K $. We denote the elements,  by $\omega=(A_0,A_1,...,A_K)$,  in the parameter space $ \Omega_h=\{A_k,k=0,1,\cdots,K; \text{rank}(A_0)\leq r, \Vert A_k-A_0\Vert_*\leq h\} $. For different points $\omega_1,\omega_2,\cdots,\omega_q$,  we denote $\omega_i=(A_0^i,A_1^i,\cdots,A_K^i)$, $i=1,2,\cdots,q$. We define a semi-metric $\|\cdot\|_s$ on $\Omega_h$: $ \Vert \omega_i-\omega_j\Vert_s=\Vert A^i_0-A_0^j\Vert_F $. Consider a $\delta$-separate set $ \{\omega_1,\omega_2,...,\omega_q \} $ in $ \Omega_h $, such that $ \Vert \omega_i-\omega_j\Vert_s\geq \delta $ for $ i\neq j $. Suppose we choose an index $V$ uniformly at random in $\{ 1,2,\cdots,q\}$.  By Fano's method (see   \cite{wainwright2019high}, Chapter 15 Proposition 15.12, or  \cite{negahban2012unified}, proof of Theorem 3),
\begin{align*}
\notag		\sup_{\omega\in \Omega_h} \mathbb P\left(\Vert\hat{\omega}-\omega\Vert_s\geq \dfrac{\delta}{2}\right)&=\sup_{\omega\in \Omega_h}\mathbb P\left(\Vert\hat{A}-A_0\Vert_F\geq 
		 \dfrac{\delta}{2}\right)\\
\notag		 &\geq \mathbb P(\hat{V}\neq V).
\end{align*}
Conditioning on $X:=\{X^{(k)}, k=0,1,\cdots,K\}$, Fano's inequality yields
\begin{equation*}
\mathbb P(\hat{V}\neq V| X)\geq 1-\dfrac{\tbinom{q}{2}^{-1}\sum_{j\neq l} D(\omega_j|\omega_l)+\log 2}{\log q},
\end{equation*}
where $ D(\omega_j|\omega_l) $ denotes the Kullback-Leibler divergence between the distributions of $ Y|X,\omega_j $ and $ Y|X,\omega_l$. Since the noises are gaussian, $Y_i^{(k)}|X,\omega_j $ has distribution $\mathcal{N}(\langle X_i^{(k)}, A^j_k\rangle, v)$. The Kullback-Leibler divergence between the two distributions is given by 
\begin{equation*}
D(\omega_j|\omega_l)=\frac{1}{2v^2}\sum_{k=0}^{K}\sum_{i=1}^{n_k} \langle X_i^{(k)}, A_k^{j}-A_k^{l}\rangle^2.
\end{equation*}
Since $X_i^{(k)}$ are distributed uniformly, taking expectation with respect to $X$, the right side of the above equation becomes
\begin{equation*}
 \dfrac{1}{2v^2m_1m_2}\sum_{k=0}^K n_k\Vert A^j_k-A^l_k\Vert_F^2 .
\end{equation*}
Then, we obtain
\begin{equation*}
\mathbb P(\hat{V}\neq V)\geq 1-\dfrac{\binom{q}{2}^{-1}\dfrac{1}{2v^2m_1m_2}\sum_{j\neq l}\sum_{k=0}^K n_k\Vert A^j_k-A^l_k\Vert_F^2+\log 2}{\log q}.
\end{equation*}
Combining this with the original inequality, we have
\begin{equation}\label{TL:in:lowerbd1}
\sup_{\omega\in \Omega_h}\mathbb P\left(\Vert\hat{A}-A_0\Vert_F\geq 
\dfrac{\delta}{2}\right)\geq 1-\dfrac{\binom{q}{2}^{-1}\dfrac{1}{2v^2m_1m_2}\sum_{j\neq l}\sum_{k=0}^K n_k\Vert A^j_k-A^l_k\Vert_F^2+\log 2}{\log q}.
\end{equation}
Now we need the following lemma from  \cite{negahban2012unified} for the construction of $\delta$-separate sets.
\begin{lemma}[Lemma 2 in  \cite{negahban2012restricted}]\label{TL:l5}
Let $ M\geq 10 $ and let $ \delta >0 $. Then for $ r\in\{1,2,\cdots,m\} $, there exists a set of M-dimensional matrices $ \{A_0^1,...,A_0^q\}  $ with cardinality $ q=\frac{1}{4}\exp\left(\frac{rM}{128}\right) $ such that each matrix has rank $ r $, and 
\begin{align}
\label{TL:pt22}			\Vert A_0^l-A_0^k\Vert_F=\delta,\\
\label{TL:pt23}			\Vert A_0^l-A_0^k\Vert_F\geq \delta.
\end{align}
\end{lemma}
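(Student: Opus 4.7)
The plan is a classical packing argument based on embedding a Varshamov--Gilbert binary code into the space of low-rank matrices. Without loss of generality take $m_2 = M$, and construct candidates as block matrices
\[
A_0^j = c \begin{pmatrix} B^j \\ 0 \end{pmatrix} \in \mathbb{R}^{m_1 \times M},
\]
where $B^j \in \{-1,+1\}^{r \times M}$ occupies the top $r$ rows, the remaining rows are zero, and $c > 0$ is a scaling constant to be tuned. Each such matrix has rank at most $r$, Frobenius norm $\|A_0^j\|_F^2 = c^2 rM$, and for every pair $j \neq k$,
\[
\|A_0^j - A_0^k\|_F^2 = 4 c^2 \, d_H(B^j, B^k),
\]
where $d_H$ denotes Hamming distance between the $rM$-dimensional vectorizations.

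The combinatorial heart of the argument is the Varshamov--Gilbert lemma, which yields a subset of $\{-1,+1\}^{rM}$ of cardinality at least $\tfrac{1}{4} \exp(rM/128)$ whose pairwise Hamming distances are all bounded below by a fixed positive fraction of $rM$ (a standard volume bound on Hamming balls, or equivalently a greedy removal argument, gives this with room to spare). Calibrating $c \asymp \delta / \sqrt{rM}$ normalises both $\|A_0^j\|_F$ and the pairwise distances to the scale $\delta$, so that the Hamming lower bound delivers the separation $\|A_0^j - A_0^k\|_F \geq \delta$, while the construction simultaneously controls the Frobenius norm as required by the statement (with constants absorbed into the definition of $\delta$ as is standard in minimax packings). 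The hypothesis $M \geq 10$ enters precisely to absorb the additive $\log 4$ term into the exponential and to ensure that $rM/128$ dominates the small correction terms coming from the codebook construction.

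The main obstacle I anticipate is that the block embedding only guarantees rank \emph{at most} $r$, whereas the lemma asks for rank exactly $r$. The cleanest resolution is to freeze the first $r$ columns of $B^j$ to a fixed full-rank $r \times r$ $\pm 1$ matrix (for instance $I_r$ shifted into $\{-1,+1\}$ entries) and to apply Varshamov--Gilbert only on the remaining $r(M-r)$ coordinates. This forces rank exactly $r$ deterministically and loses at most a factor $2^{r^2}$ in the codebook size, which is more than compensated by the $e^{rM/128}$ growth as soon as $M \geq 10$ and $r \leq m$. After this adjustment, the rest of the proof is routine bookkeeping converting Hamming separations into the claimed Frobenius norm bounds \eqref{TL:pt22}--\eqref{TL:pt23}.
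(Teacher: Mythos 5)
The paper never proves this lemma: it is imported verbatim (citation to Negahban--Wainwright) and used as a black box in the proof of Theorem \ref{TL:thm:lower}, so there is no internal argument to compare against. Your Varshamov--Gilbert block construction is exactly the standard route by which such packing lemmas are established in that literature, and its core is sound: scaling $c\asymp \delta/\sqrt{rM}$, the identity $\|A_0^j-A_0^k\|_F^2=4c^2 d_H(B^j,B^k)$, and a Gilbert--Varshamov code give matrices of rank at most $r$, Frobenius norm of order $\delta$, nuclear norm at most $\sqrt{r}$ times that, and pairwise separation of order $\delta$ -- which is all that the applications in Cases 1--4 actually use. Two caveats are worth recording. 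First, as literally written, \eqref{TL:pt22} asks that \emph{all pairwise distances equal} $\delta$; no set of cardinality $\tfrac14\exp(rM/128)$ can satisfy this in the relevant regimes, since an equidistant set in a Euclidean space of dimension $m_1m_2$ has at most $m_1m_2+1$ points. This is a transcription typo in the paper (the intended condition, as the subsequent use via $\|A_0^j\|_*\le\sqrt r\,\delta$ shows, is $\|A_0^j\|_F=\delta$ for each $j$ together with \eqref{TL:pt23}), so it is not a gap in your argument; note only that to get $\|A_0^j\|_F=\delta$ and separation $\ge\delta$ simultaneously you need the Varshamov--Gilbert bound at relative Hamming distance $1/4$, whose rate still comfortably exceeds $1/128$, rather than the usual distance-$rM/8$ version.

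Second, your patch for ``rank exactly $r$'' is both unnecessary and quantitatively off. It is unnecessary because the parameter space $\Omega_h$ only imposes $\operatorname{rank}(A_0)\le r$, so rank at most $r$ suffices downstream (the lemma's wording is loose on this point). It is quantitatively off because freezing an $r\times r$ block and coding only on the remaining $r(M-r)$ coordinates does not ``lose at most a factor $2^{r^2}$'': it replaces the exponent $\asymp rM$ of the codebook by $\asymp r(M-r)$, which dominates the required $rM/128$ only when $M-r$ is at least a fixed fraction of $M$, and yields nothing at all in the square case $m_1=m_2$ with $r=m=M$, which the statement permits. If exact rank were genuinely needed, a cleaner fix is to add an arbitrarily small fixed rank-$r$ matrix to every codeword, which perturbs all the norm and separation bounds by negligible amounts while forcing full rank $r$.
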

We consider different cases for $  \dfrac{v^2rM}{N}+\dfrac{v^2rM}{n_0}\wedge C_h\wedge \dfrac{h^2}{m_1m_2}$, where $ C_h=\sqrt{\dfrac{v^2M}{n_0}\cdot \dfrac{h^2}{m_1m_2}} $. We additionally assume $rM\geq 512\log 2$.\\
\textbf{Case 1}: If $\sqrt{\dfrac{h^{2}}{m_{1}m_{2}}}\geq r\sqrt{\dfrac{v^{2}M}{n_{0}}}$,
	we have $\dfrac{h^{2}}{m_{1}m_{2}}\geq\dfrac{v^{2}rM}{n_{0}}$ and
	$C_{h}\geq\dfrac{v^{2}rM}{n_{0}}$. Then, $\dfrac{v^{2}rM}{n_{0}}$
	is the dominant term. By Lemma \ref{TL:l5}, we can construct $\omega_{1},\omega_{2},...,\omega_{q}$
	such that $A_{0}^{1},A_{0}^{2},...,A_{0}^{q}$ satisfying (\ref{TL:pt22})
	and (\ref{TL:pt23}). Take $\delta^{2}=\dfrac{v^{2}rM}{512n_{0}}\cdot m_{1}m_{2}$.
	Since $\Vert A_{0}^{i}\Vert_{1}\leq\sqrt{r}\delta\leq h$, we can
	let $\omega_{j}=(A_{0}^{j},0,...,0)$, $\forall j=1,2,\cdots q$ so
	that $\omega_{j}\in\Omega_{h}$. Thus, $\sum_{k=0}^{K}n_{k}\|A_{k}^{j}-A_{k}^{l}\|_{F}^{2}=n_{0}\|A_{0}^{j}-A_{0}^{l}\|_{F}^{2}\leq2n_{0}\delta^{2}$.
	Plugging $\delta$ in (\ref{TL:in:lowerbd1}), we have 
	\begin{align*}
		\sup_{\omega\in\Omega_{h}} \mathbb P\left(\Vert\hat{A}-A_{0}\Vert_{F}\geq\dfrac{\delta}{2}\right) & =\sup_{\omega\in\Omega_{h}}\mathbb P\left(\frac{\Vert\hat{A}-A_{0}\Vert_{F}^{2}}{m_{1}m_{2}}\geq\frac{cv^{2}rM}{n_{0}}\right)\\
		& \geq1-\dfrac{\dfrac{n_{0}}{m_{1}m_{2}v^{2}}\delta^{2}+\log2}{rM/128}\\
		& \geq\frac{3}{4}-\frac{128n_{0}}{v^{2}rMm_{1}m_{2}}\cdot\frac{m_{1}m_{2}v^{2}rM}{512n_{0}}\\
		& \geq\frac{1}{4}.
	\end{align*}
\textbf{\large{}Case 2}: If $\sqrt{\dfrac{v^{2}M}{n_{0}}}\leq\sqrt{\dfrac{h^{2}}{m_{1}m_{2}}}\leq r\sqrt{\dfrac{v^{2}M}{n_{0}}}$,
	$C_{h}$ is the dominant term. Applying lemma \ref{TL:l5} to construct
	the $\delta$-separate set satisfying (\ref{TL:pt22}) and (\ref{TL:pt22}),
	we take $r'=\sqrt{\dfrac{h^{2}}{m_{1}m_{2}}\cdot\dfrac{n_{0}}{v^{2}M}}\leq r$
	and $\delta^{2}=\dfrac{\sqrt{m_{1}m_{2}}}{512}\cdot\sqrt{h^{2}\cdot\dfrac{v^{2}M}{n_{0}}}$.
	Take $\omega_{j}=(A_{0}^{j},0,...,0)$. Since $\Vert A_{0}^{j}\Vert_{1}\leq\sqrt{r'}\delta=\dfrac{h}{512}\leq h$,
	we have $\omega_{j}\in\Omega_{h},\forall j=1,2,\cdots,q$. Similarly,
	$\sum_{k=0}^{K}n_{k}\|A_{k}^{j}-A_{k}^{l}\|_{F}^{2}=n_{0}\|A_{0}^{j}-A_{0}^{l}\|_{F}^{2}\leq2n_{0}\delta^{2}$.
	Plugging in (\ref{TL:in:lowerbd1}), We get 
	\begin{align*}
		\sup_{\omega\in\Omega_{h}}\mathbb P\left(\Vert\hat{A}-A_{0}\Vert_{F}\geq\dfrac{\delta}{2}\right) & =\sup_{\omega\in\Omega_{h}}\mathbb P\left(\frac{\Vert\hat{A}-A_{0}\Vert_{F}^{2}}{m_{1}m_{2}}\geq c\cdot C_{h}\right)\\
		& \geq1-\dfrac{\dfrac{n_{0}}{m_{1}m_{2}v^{2}}\delta^{2}+\log2}{r'M/128}\\
		& \geq\frac{3}{4}-\frac{128}{512}\cdot\frac{n_{0}}{r'Mm_{1}m_{2}}\cdot\sqrt{m_{1}m_{2}}\cdot\sqrt{h^{2}\cdot\dfrac{v^{2}M}{n_{0}}}\\
		& \geq\frac{3}{4}-\frac{128}{512}\cdot\sqrt{\dfrac{h^{2}}{m_{1}m_{2}}\cdot\dfrac{n_{0}}{v^{2}M}}\cdot\frac{1}{r'}\\
		& \geq\frac{1}{4}.
	\end{align*}
\textbf{Case 3}: If $\dfrac{h^{2}}{m_{1}m_{2}}\leq\dfrac{v^{2}M}{n_{0}}$,
	$\dfrac{h^{2}}{m_{1}m_{2}}$ is the dominant term. We can take $r'=1$
	and $\delta^{2}=\dfrac{h^{2}}{512}$. After the construction of the
	$\delta$-separate sets with Lemma \ref{TL:l5}, let $\omega_{j}=(A_{0}^{j},0,...,0),\forall j=1,2,\cdots,q$.
	Since $\|A_{0}^{j}\|_{1}=\delta\leq h$, $\omega_{j}\in\Omega_{h},\forall j=1,2,\cdots,q.$
	By the same argument, $\sum_{k=0}^{K}n_{k}\|A_{k}^{j}-A_{k}^{l}\|_{F}^{2}=n_{0}\|A_{0}^{j}-A_{0}^{l}\|_{F}^{2}\leq2n_{0}\delta^{2}$.
	Plugging in (\ref{TL:in:lowerbd1}), we have 
	\begin{align*}
		\sup_{\omega\in\Omega_{h}}\mathbb
  P\left(\Vert\hat{A}-A_{0}\Vert_{F}\geq\dfrac{\delta}{2}\right) & =\sup_{\omega\in\Omega_{h}}\mathbb P\left(\frac{\Vert\hat{A}-A_{0}\Vert_{F}^{2}}{m_{1}m_{2}}\geq c\cdot\frac{h^{2}}{m_{1}m_{2}}\right)\\
		& \geq1-\dfrac{\dfrac{1}{v^{2}m_{1}m_{2}}n_{0}\delta^{2}+\log2}{M/128}\\
		& \geq\frac{3}{4}-\frac{128}{512}\cdot\frac{n_{0}}{v^{2}M}\cdot\frac{h^{2}}{m_{1}m_{2}}\\
		& \geq\dfrac{1}{4}.
	\end{align*}
\textbf{Case 4}: If $\dfrac{h^{2}}{m_{1}m_{2}}\leq\dfrac{v^{2}rM}{N}$,
	$\dfrac{v^{2}rM}{N}$ is the dominant term. Take $\delta^{2}=\dfrac{v^{2}rM}{512N}\cdot m_{1}m_{2}$.
	By Lemma \ref{TL:l5}, there exists a $\delta$-separate set satisfying
	(\ref{TL:pt22}) and (\ref{TL:pt22}). Let $\omega_{j}=(A_{0}^{j},A_{0}^{j},...,A_{0}^{j})$
	for all $j=1,2,\cdots,q$. Obviously, $\|A_{l}^{j}-A_{k}^{j}\|_{1}=\|A_{0}^{j}-A_{0}^{j}\|_{1}=0$,
	then $\omega_{j}\in\Omega_{h},\forall j=1,2,\cdots,q.$  In this case, $\sum_{k=0}^{K}n_{k}\|A_{k}^{j}-A_{k}^{l}\|_{F}^{2}=\sum_{k=0}^{K}n_{k}\|A_{0}^{j}-A_{0}^{l}\|_{F}^{2}\leq2N\delta^{2}$. Plugging in
	(\ref{TL:in:lowerbd1}) , we obtain 
	\begin{align*}
		\sup_{\omega\in\Omega_{h}}\mathbb P\left(\Vert\hat{A}-A_{0}\Vert_{F}\geq\dfrac{\delta}{2}\right) & =\sup_{\omega\in\Omega_{h}}\mathbb P\left(\frac{\Vert\hat{A}-A_{0}\Vert_{F}^{2}}{m_{1}m_{2}}\geq c\cdot\frac{v^{2}rM}{m_{1}m_{2}}\right)\\
		& \geq1-\dfrac{\dfrac{N}{m_{1}m_{2}v^{2}}\delta^{2}+\log2}{rM/128}\\
		& \geq\frac{3}{4}-\frac{128}{512}\cdot\frac{N}{v^{2}rMm_{1}m_{2}}\cdot\frac{m_{1}m_{2}v^{2}rM}{N}\\
		& \geq\frac{1}{4}.
	\end{align*}

\subsection{Proof of theorem \ref{TL:thm:selection}}
 We introduce the classical concentration inequalities for sub-exponential and sub-gaussian random variables for reader's convenience. 
\begin{lemma}[Theorem 2.6.2 and Theorem 2.8.1 in \cite{vershynin2018high}]
Suppose $\xi_i, 1\leq i\leq n$ are independent  random variables with $\|\xi_i\|_{\psi_2}\leq v$ and $E\xi_{i}=0$. We have the following inequality:
\begin{equation}\label{TL:in:sumsubgau}
P\left(\left|\sum_{i=1}^{n} \xi_{i} \right|>t\right)\leq 2\exp\left(\frac{-ct^2}{nv^2}\right).
\end{equation}
Suppose $\xi_i, 1\leq i\leq n$ are independent sub-exponential random with $\|\xi_{i}\|_{\psi_1}\leq v$. We have
\begin{equation}\label{TL:in:sumsubexp}
P\left(\left|\sum_{i=1}^{n}\xi_{i}\right|>t\right) \leq 2\exp\left(\frac{-ct^2}{nv^2+vt}\right).
\end{equation}
\end{lemma}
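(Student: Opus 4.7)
The plan is to prove both inequalities by the classical Chernoff / exponential-moment method, using the equivalence between $\psi_\alpha$-norm bounds and moment generating function (MGF) bounds.

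For the sub-gaussian tail, I would first recall the standard characterization that $\|\xi_i\|_{\psi_2}\leq v$ implies the MGF bound $\mathbb{E}\exp(\lambda\xi_i)\leq \exp(c_0 \lambda^2 v^2)$ for all $\lambda\in\mathbb{R}$ and some absolute constant $c_0>0$. By independence and the zero-mean property this yields $\mathbb{E}\exp(\lambda\sum_i\xi_i)\leq \exp(c_0 n\lambda^2 v^2)$. Markov's inequality gives $\mathbb{P}(\sum_i\xi_i>t)\leq \exp(-\lambda t + c_0 n \lambda^2 v^2)$; optimizing at $\lambda^\ast=t/(2c_0nv^2)$ produces $\exp(-t^2/(4c_0 nv^2))$. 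Repeating the argument with $-\xi_i$ in place of $\xi_i$ and applying a union bound over the two tails gives the stated two-sided inequality with $c=1/(4c_0)$.

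For the sub-exponential tail, the corresponding equivalence is that $\|\xi_i\|_{\psi_1}\leq v$ implies $\mathbb{E}\exp(\lambda\xi_i)\leq \exp(c_0 \lambda^2 v^2)$ only in the restricted range $|\lambda|\leq 1/(c_0 v)$. Independence again upgrades this to the product MGF bound over the same range, and the Chernoff bound now yields $\mathbb{P}(\sum_i\xi_i>t)\leq \exp(-\lambda t + c_0 n\lambda^2 v^2)$ for every $\lambda\in[0,1/(c_0 v)]$. The unconstrained optimum is $\lambda^\ast=t/(2c_0 nv^2)$; when $\lambda^\ast\leq 1/(c_0 v)$, equivalently $t\leq 2nv$, we remain in the Gaussian regime and recover $\exp(-c t^2/(nv^2))$. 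When $t>2nv$ the boundary choice $\lambda=1/(c_0 v)$ is optimal and gives $\exp(-c t/v)$. Combining the two regimes and noting that the denominator $nv^2+vt$ is comparable to $nv^2$ in the first regime and to $vt$ in the second yields the single unified bound $2\exp(-ct^2/(nv^2+vt))$, with the factor of $2$ coming from the union over the two tails.

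The only nontrivial ingredient is the MGF characterization of $\psi_\alpha$ random variables; this is standard background (see, for example, Propositions 2.5.2 and 2.7.1 in \cite{vershynin2018high}) and I would invoke it without reproof. The main obstacle is purely bookkeeping: tracking absolute constants across the constrained/unconstrained optimization in the sub-exponential case and verifying that the two regime-specific bounds can be compressed into the single denominator $nv^2+vt$ up to an absolute constant. This is straightforward once the case split $t\leq 2nv$ versus $t>2nv$ is fixed.
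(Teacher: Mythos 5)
Your argument is correct and is exactly the standard Chernoff/MGF proof underlying the two cited results (general Hoeffding, Theorem 2.6.2, and Bernstein, Theorem 2.8.1, in \cite{vershynin2018high}); the paper itself offers no proof, invoking these theorems by citation, so your write-up simply reconstructs the canonical argument, including the correct case split $t\leq 2nv$ versus $t>2nv$ and the observation that $\min\bigl(t^2/(nv^2),\,t/v\bigr)$ is comparable to $t^2/(nv^2+vt)$. The only point worth flagging is that the restricted-range MGF bound in the sub-exponential step requires the $\xi_i$ to be centered, an assumption the lemma states only in the sub-gaussian part but which is implicitly in force (the paper applies the bound to the centered variables $(\xi_i^{(0)})^2-v^2$), so you should state it explicitly.
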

By (\ref{TL:in:AbAtFro}),  we  obtain: (1)for $ k\in \mathcal{S}^c $, with probability at least $1-\frac{2}{d}$, it holds
\begin{align*}
\dfrac{\Vert\Ah_{k}-A_{k}\Vert_{F}^{2}}{m_{1}m_{2}}\leq C_k\frac{\mu^2\max(a^2,v^2)r'M}{n_k};
\end{align*}
(2) for $ k\in \mathcal{S}$, with probability at least $1-\frac{1}{d}$, it holds
\begin{align*}
\dfrac{\Vert\Ah_{k}-A_{0}\Vert_{F}^{2}}{m_{1}m_{2}}\leq C_k\lb\frac{\mu^2\max(a^2,v^2)rM}{n_k}+\frac{h^2}{m_1m_2}\rb.
\end{align*}	
For $ k\in [K]$, we write
\begin{align*}
	\Lc_k&=\dfrac{1}{n_0}\sum_{i=1}^{n_0}\left(Y^{(0)}_i-\langle X^{(0)}_i,\Ah_k \rangle \right)^2\\
	&=\dfrac{1}{n_0}\sum_{i=1}^{n_0} \left( \langle X_i^{(0)},\Ah_k-A_0\rangle^2+2\langle X^{(0)}_i,\Ah_k-A_0\rangle\xi_i^{(0)}+(\xi_i^{(0)})^2  \right)  \\
	&= \dfrac{\Vert \Ah_k-A_0\Vert_F^2}{m_1m_2}+\eta_{1,k}+\eta_{2,k}+\dfrac{1}{n_0}\sum_{i=1}^{n_0}(\xi_i^{(0)})^2,
\end{align*}
where
\begin{equation*}
  \eta_{1,k}=\frac{1}{n_0}\sum_{i=1}^{n_0}\langle X_i^{(0)},\Ah_k-A_0\rangle^2- \frac{\Vert\Ah_k-A_0\Vert_F^2}{m_1m_2},  
\end{equation*}
and 
\begin{equation*}
    \eta_{2,k}= \frac{2}{n_0}\sum_{i=1}^{n_0}\langle X^{(0)}_i,\Ah_k-A_0\rangle\xi_i^{(0)}.
\end{equation*}
For $ k=0 $, we write
\begin{align*}
{L}_0^{[j]}(\Ah_0^{[j]})&=\dfrac{1}{n_{0}/J}\sum_{i=1}^{n_{0}/J}\left(Y_{i}^{(0),[j]}-\langle X_{i}^{(0),[j]},\Ah_0^{[j]}\rangle\right)^{2}\\
&=\dfrac{1}{n_{0}/J}\sum_{i=1}^{n_{0}/J}\left(\langle X_{i}^{(0),[j]},\Ah_0^{[j]}-A_{0}\rangle^{2}+2\langle X_{i}^{(0),[j]},\Ah^{[j]}_0-A_{0}\rangle\xi_{i}^{(0),[j]}+(\xi_{i}^{(0),[j]})^{2}\right)\\
&=\dfrac{\Vert\Ah_0^{[j]}-A_{0}\Vert_{F}^{2}}{m_{1}m_{2}}+\eta_{3,j}+\eta_{4,j}+\dfrac{1}{n_{0}/J}\sum_{i=1}^{n_{0}/J}(\xi_{i}^{(0),[j]})^{2}
\end{align*}
where
\begin{equation*}
     \eta_{3,j}=\dfrac{1}{n_{0}/J}\sum_{i=1}^{n_{0}/J}\langle X_{i}^{(0),[j]},\Ah_0^{[j]}-A_{0}\rangle^{2}-\dfrac{\Vert\Ah_0^{[j]}-A_{0}\Vert_{F}^{2}}{m_{1}m_{2}},
\end{equation*} and
\begin{equation*}
   \eta_{4,j}=\dfrac{2}{n_{0}/J}\sum_{i=1}^{n_{0}/J}\langle X_{i}^{(0),[j]},\Ah_0^{[j]}-A_{0}\rangle\xi_{i}^{(0),[j]}. 
\end{equation*}
$\Lc_0$ then can be expressed as:
\begin{align*}
 \Lc_0&=\frac{1}{J}\sum_{j=1}^JL_0^{[j]}(\Ah_0^{[j]})   \\
 &=\frac{1}{J}\sum_{j=1}^J \frac{\|\Ah_0^{[j]}-A_0\|_F^2}{m_1m_2}+\frac{1}{J}\sum_{j=1}^J (\eta_{3,j}+\eta_{4,j})+\frac{1}{n_0}\sum_{i=1}^{n_0}(\xi_{i}^{(0)})^2.
\end{align*}
Note that $ \eta_{1,k} $ are the sum of bounded random variables and every single term in this sum is bounded by $  \frac{a^2}{n_0}$ and mean zero.  
Moreover,  $\eta_{3,j} $ are sums of bounded random variables with bounds $ \frac{a^2}{n_0/J} $ and zero mean. By (\ref{TL:in:sumsubgau}), we have that  the following holds with probability at least $1-\frac{1}{n_0}$ respectively for each $k$ and $j$,
\begin{align*}
&|\eta_{1,k}|\leq C_{0,1} a^2\sqrt{\frac{\log n_0}{n_0}},\\
&|\eta_{3,j}|\leq C_{0,1} a^2\sqrt{\dfrac{J\log n_0}{n_0}},
\end{align*}
where $C_{0,1}$ remains the same and is chosen properly.
Note that $\eta_{2,k} $ and $ \eta_{4,j} $ are sums of independent random variables with zero mean and sub-gaussian norm bounded by $\frac{2av}{n_0}$ and $ \frac{2av}{n_0/J} $. Then,  (\ref{TL:in:sumsubgau}) yields that with probability at least $ 1-\frac{1}{n_0} $, it holds for each $k$ and $j$
\begin{align*}
&|\eta_{2,k}|\leq C_{0,2} av\sqrt{\frac{\log n_0}{n_0}},\\
&|\eta_{4,j}|\leq C_{0,2} av\sqrt{\dfrac{J\log n_0}{n_0}},
\end{align*}
where $C_{0,2}$ is chosen properly and remains the same in each inequality.
 
Since $(\xi_i^{(k)})^2$ are independent sub-exponential random variables with $\|(\xi_i^{(k)})^2\|_{\psi_1}\leq L^2_3 v^2$, we obtain by (\ref{TL:in:sumsubexp}) that the following hold with probability $1-\frac{1}{n_0}$
 \begin{equation*}
     \left | \sum^{n_0}_{i=1} ((\xi_i^{(0)})^2-v^2)\right | \leq C_{0,3} v^2\sqrt{\frac{\log n_0}{n_0}}
 \end{equation*}
 and 
\begin{align*}
	\left | \sum_{i=1}^{n_0/J} ((\xi_i^{(0),[j]})^2-v^2)\right |\leq C_{0,3}v^2\sqrt{\frac{J\log n_0}{n_0}}.
\end{align*} 
 We obtain
\begin{align*}
\left |L^{[j]}(\Ah^{[j]}_0)-v^2\right |&\leq \dfrac{\Vert \Ah^{[j]}_0-A_0\Vert_F^2}{m_1m_2}+|\eta_{3,j}|+|\eta_{4,j}|+\left |\dfrac{1}{n_0/J} \sum_{i=1}^{n_0/J}(\xi_i^{(0),[j]})^2-v^2\right |\\
&\leq C_0\frac{\mu^2\max(a^2,v^2)rM}{\frac{J-1}{J}n_0}+(C_{0,1}+C_{0,2}+C_{0,3})\max(a^2,v^2)\sqrt{\frac{J\log n_0}{n_0}}
\end{align*}
and similarly
\begin{align*}
|\Lc_0-v^2| &\leq \frac{1}{J}\sum_{j}^{J}\dfrac{\Vert \Ah^{[j]}_0-A_0\Vert_F^2}{m_1m_2}+ \frac{1}{J}\sum_{j=1}^{J} (|\eta_{3,j}|+|\eta_{4,j}|) + \left |\sum_{i=1}^{n_0} (\xi_i^{(0)})^2-v^2\right |\\
&\leq C_0 \dfrac{\mu^2\max(a^2,v^2)rM}{\frac{J-1}{J}n_0}+(C_{0,1}+C_{0,2}+C_{0,3})\max(a^2,v^2)\sqrt{\frac{J\log n_0}{n_0}}.
\end{align*}
 Denote 
 \begin{equation*}
\Gamma_0=\dfrac{\mu^2\max(a^2,v^2)rM}{\frac{J-1}{J}n_0}+\max(a^2,v^2)\sqrt{\frac{J\log n_0}{n_0}}.
 \end{equation*}
 Thus, we have
\begin{align*}
	\hat{\sigma}&=\left(\dfrac{1}{J}\sum_{j=1}^J(\Lc_0^{[j]}-\Lc_0)^2\right)^{1/2}\\
    &=\lb\frac{1}{J}\sum_{j=1}^J (\Lc_0^{[j]})^2-\Lc_0^2   \rb^{1/2}\\
	&\leq \left(\dfrac{2}{J}\sum_{j=1}^J \left((\Lc^{[j]}_0-v^2)^2+(\Lc_0-v^2)^2\right)\right)^{1/2}\\
	&\apprle \frac{\mu^2\max(a^2,v^2)rM}{\frac{J-1}{J}n_0}+\max(a^2,v^2)\sqrt{\frac{J\log n_0}{n_0}}.
\end{align*}
For $ k\in \mathcal{S} $,  with probability at least $ 1-\frac{1}{d}-\frac{1}{n_0} $, we  have the following bound
\begin{align*}
\Lc_{k}-\Lc_{0}&\leq \dfrac{\Vert\Ah_{k}-A_{0}\Vert_{F}^{2}}{m_{1}m_{2}}+|\eta_1|+|\eta_2|+\frac{1}{J}\sum_{j=1}^J|\eta_{3,j}|+\frac{1}{J}\sum_{j=1}^J|\eta_{4,j}|\\
&\leq C_k\lb\dfrac{\mu^2\max(a^{2},v^{2})rM}{n_{k}}+\frac{h^2}{m_1m_2}\rb+ 2(C_{0,1}+C_{0,2})\max(a^{2},v^{2})\sqrt{\dfrac{J\log n_{0}}{n_{0}}}.
\end{align*}
Define for $k\in \Sc$,
\begin{equation*}
\Gamma_k= \frac{\mu^2\max(a^{2},v^{2})rM}{n_{k}}+\frac{h^2}{m_1m_2}+\max(a^{2},v^{2})\sqrt{\dfrac{J\log n_{0}}{n_{0}}}.
\end{equation*} 
Given $\Gamma_k=o(1)$, we obtain $ \Gamma_k \leq \Ct_0\epsilon_0\leq \tilde{C}_0(\hat{\sigma}\vee\epsilon_0).$ Hence,  $ k $ is included in $ \hat{\Sc} $.
For $ k\in \Sc^c $, we have
\begin{align*}
\Lc_k-\Lc_0&= \dfrac{\Vert \Ah_k-A_0\Vert_F^2}{m_1m_2}+\eta_{1,k}+\eta_{2,k}-\left( \frac{1}{J}\sum_{j=1}^J \frac{\|\Ah_0^{[j]}-A_0\|_F^2}{m_1m_2}+\frac{1}{J}\sum_{j=1}^J (\eta_{3,j}+\eta_{4,j})\right)\\
&\geq \frac{\|A_k-A_0\|_F^2}{2m_1m_2}-\frac{\|\Ah_k-A_k\|_F^2}{m_1m_2}-|\eta_{1,k}|-|\eta_{2,k}|-\frac{1}{J}\sum_{j=1}^J \frac{\|\Ah_0^{[j]}-A_0\|_F^2}{m_1m_2}-\frac{1}{J}\sum_{j=1}^J (|\eta_{3,j}|+|\eta_{4,j}|) \\
&>\frac{\|A_k-A_0\|_F^2}{2m_1m_2}-C_{0,4}\left(\frac{\mu^2\max(a^2,v^2)r'M}{n_k}
+\frac{J\mu^2\max(a^2,v^2)rM}{n_0}+\max(a^2,v^2)\sqrt{\frac{J\log n_0}{n_0}} \right),
\end{align*}
where $C_3$ is a constant dependent on $C_k, C_1,C_2$.
Denote
\begin{equation*}
\Upsilon_{k}=\frac{\mu^2\max(a^2,v^2)r'M}{n_k}
+\frac{\mu^2\max(a^2,v^2)rM}{n_0/J}+\max(a^2,v^2)\sqrt{\frac{J\log n_0}{n_0}}.
\end{equation*}
When $\Upsilon_k$ and $\Gamma_0$ are  sufficiently small and Assumption  \ref{TL:Assumption:  selection conditions} holds,  we obtain
\begin{equation*}
\Lc_k-\Lc_0> \Ct_0(\hat{\sigma}\vee \varepsilon_0).
\end{equation*}
Hence, $ k $ is excluded from $ \hat{\mathcal{S}} $. Combining the above arguments, uniformly with probability at least $ 1-\dfrac{K}{d}-\dfrac{K}{n_0} $, we have
\begin{equation*}
\hat{\mathcal{S}}=\mathcal{S}.
\end{equation*}

\subsection{Technical lemmas}
\subsubsection{Sharp spectral norm analysis} \label{TL:subsec:sharp}
We begin by introducing sharp concentration  inequalities in   \cite{brailovskaya2024universality}. 
Consider a random matrix in: $ Q=\sum_{i=1}^nQ_i \in\mathbb{R}^{m_1\times m_2}$, where $ Q_i $'s are independent random matrices with mean zero. We introduce the following notations:
\begin{align*}
\gamma(Q) &:=\left(\max (\Vert \Eb [QQ^T]\Vert, \Vert \Eb[Q^TQ]\Vert)\right)^{1/2} \\
\gamma_{*}(Q) &:=\sup _{\|x\|=\|y\|=1} \left(\Eb[(x^TQy)^2]\right)^{1/2}, \\
g(Q) &:=\|\operatorname{Cov}(Q)\|^{1/2} \\
R(Q) &:= \Big\Vert\max _{1 \leq i \leq n} \Vert Q_{i}\Vert \Big\Vert_{\infty}
\end{align*}
where $ \operatorname{Cov}(Q) \in \Rb^{m_1m_2\times m_1m_2} $ is the covariance matrix of all the entries of $ Q $.
\begin{theorem}[Corollary 2.7 in \citep{brailovskaya2024universality}]\label{TL:thm4}
If $ Q_i $'s are independent random matrices with mean zero, there exists universal constant $C_1, C_2$ such that
\begin{align}\label{TL:in:vanconcentration}
\mathbb P\Bigg( \Vert Q\Vert \geq 2\gamma(Q)+
			C_1\Big(g(Q)^{1/2} \gamma(Q)^{1/2}\log^{3/4} d+\gamma_{*}(Q) t^{1/2}+R(Q)^{1/3} \gamma(Q)^{2/3} t^{2/3}+R(Q) t\Big)\Bigg)\leq de^{-t},
		\end{align}
and  
\begin{align}\label{TL:in:vanexpectation}
 \Eb\|Q\| \leq 2\gamma(Q) +
	 C_2\left(\gamma(Q)^{1/2} g(Q)^{1/2}\log^{3/4} d+R(Q)^{1/3} g(Q)^{2/3}\log^{2/3} d+R(Q) \log d\right) .
\end{align}	
\end{theorem}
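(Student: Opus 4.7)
Since the statement is a verbatim restatement of Corollary 2.7 of \cite{brailovskaya2024universality}, the proof plan reduces to invoking that result once the parameters $\gamma$, $\gamma_*$, $g$, $R$ are matched with their definitions there. It is still worth sketching the strategy, since it explains the origin of every term on the right-hand sides of \eqref{TL:in:vanconcentration} and \eqref{TL:in:vanexpectation}.

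The overall approach is a two-step comparison. First, pass from $Q$ to a Gaussian surrogate: let $G=\sum_{i=1}^{n} G_i$, where each $G_i$ is a mean-zero Gaussian matrix with $\mathrm{Cov}(G_i)=\mathrm{Cov}(Q_i)$, so that $\gamma(G)=\gamma(Q)$, $\gamma_*(G)=\gamma_*(Q)$, and $g(G)=g(Q)$. For this Gaussian model, a sharp spectral-norm estimate of the form $\mathbb{E}\|G\|\le 2\gamma(G)+C\,\gamma(G)^{1/2}g(G)^{1/2}\log^{3/4}d$ is available; the leading constant $2$ traces back to a free-probabilistic identity for the covariance structure, and the $\log^{3/4}d$ correction to a non-commutative Khintchine-type calculation. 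Second, bound the universality gap $|\mathbb{E}\|Q\|-\mathbb{E}\|G\||$ by a Lindeberg swap inside a smoothed proxy for $\|\cdot\|$, such as $\tfrac{1}{\beta}\log\mathrm{tr}\,e^{\beta M}$ or a resolvent integral. Gaussian integration by parts handles the $G_i$ summands exactly through the first two moments, and a third-order Taylor expansion handles the $Q_i$ summands; after cancellation the remainder is controlled by the mixed parameters $R(Q)^{1/3}g(Q)^{2/3}\log^{2/3}d+R(Q)\log d$, matching the non-Gaussian remainder in \eqref{TL:in:vanexpectation}. The tail version \eqref{TL:in:vanconcentration} then follows by combining this expectation estimate (after a soft truncation at scale $R(Q)$) with Borell--TIS concentration for the Gaussian part, which yields the subgaussian term $\gamma_*(Q)t^{1/2}$, and a matrix Bernstein-type deviation on scale $R(Q)$ for the bounded part, which yields the $R(Q)^{1/3}\gamma(Q)^{2/3}t^{2/3}+R(Q)t$ cascade.

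The hard part is the universality swap: the smoothing scale in the trace-polynomial proxy must be tuned so that none of the bookkeeping destroys either the leading constant $2$ in front of $\gamma(Q)$ or the sharp $\log^{3/4}d$ exponent in the correction. This requires replacing crude operator-norm estimates for Gaussian derivatives with free-probabilistic identities, and carefully balancing the smoothing scale against $R(Q)$. Since the consequences we actually need downstream (Lemmas \ref{TL:l1}, \ref{TL:l2}, and \ref{TL:l4}) treat the inequality as a black box, in the present paper the proof is simply to cite Corollary 2.7 of \cite{brailovskaya2024universality} after matching the definitions of $\gamma(Q)$, $\gamma_*(Q)$, $g(Q)$, and $R(Q)$.
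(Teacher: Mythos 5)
Your proposal correctly recognizes that Theorem~\ref{TL:thm4} is stated as a direct citation of Corollary~2.7 in \cite{brailovskaya2024universality}, with the parameters $\gamma(Q)$, $\gamma_*(Q)$, $g(Q)$, $R(Q)$ matched to their definitions there; this is exactly what the paper does, as it offers no independent proof. Your accompanying sketch of the Gaussian-surrogate, Lindeberg-swap, and Borell--TIS machinery behind that corollary is accurate, but it is supplementary background rather than part of the argument actually made in this paper.
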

\begin{remark}
In \cite{brailovskaya2024universality},Theorem \ref{TL:thm4} is presented for square matrices (not necessarily self-adjoint)  due to notional convenience, and can be easily extended to rectangle matrices in $\Rb^{m_1\times m_2}$.   
See Remark 2.1 in \cite{brailovskaya2024universality}.
\end{remark}	
%%%%%%%%%%%%proof of lemma 1
\begin{proof}[Proof of Lemma \ref{TL:l1}]
Consider $Q_j=\frac{1}{n}\xik\X$,  for $j=\sum_{l=1}^{k-1}n_l+i$.
Note $\|Q_j\|=|\xik|$.
Since Theorem \ref{TL:thm4} requires $ \|Q_j\| $ to be bounded, we  have to truncate  $\xik$.
Let $\tau$ be a positive constant.
Define 
\begin{equation*}
  \xb^{(k)}_i=\xi^{(k)}_i\ind(|\xi^{(k)}_i|\leq \tau)-\Eb[\xi^{(k)}_i\ind(|\xi^{(k)}_i|\leq \tau)|\X].  
\end{equation*}
Given $\X$, $\xbk$ is mean-zero and $\Eb[(\xbk)^2]\leq CL_3v^2$ by Assumption \ref{TL:Assumption: noise}.
Now we  update the definition of $Q$ as
\begin{equation*}
    Q=\dfrac{1}{N}\sum_{i=j}^N Q_j=\dfrac{1}{N}\sumii \xb^{(k)}_i X^{(k)}_i,~j\in [N].
\end{equation*} 
Our main task is to bound the four parameters $\gamma(Q),\gamma_*(Q),g(Q),R(Q)$.\\
(1) Bounding $\gamma(Q)$:\\
Note $ \Eb [(\xbk )^2X_i^{(k)}(X_i^{(k)})^T] $ is a diagonal matrix whose elements on the diagonal line are 
\begin{equation*}
 \sum_{q=1}^{m_2} \Eb[(\xbk)^2]P^{(k)}_{pq} \leq CL_3v^2 \sum_{q=1}^{m_2}P^{(k)}_{pq}\leq \frac{CL_1L_3v^2}{m}, ~p\in[m_1],
\end{equation*}
where $C$ is a universal constant.
This leads to
\begin{equation*}
  \|Q_jQ_j^T\|\leq \frac{1}{N^2}\cdot\frac{CL_1L_3v^2}{m}. 
\end{equation*}
Since $Q_j$ are independent random matrices with zero mean, we have
\begin{align*}
 \|\Eb[QQ^T]\|&=\|\sum_{j=1}^N\Eb[Q_iQ_i^T]\|\\
 &\leq   \frac{CL_1L_3v^2}{Nm}.
\end{align*}
Similarly, we can obtain
\begin{equation*}
   \|\Eb[Q^TQ]\|\leq  \frac{CL_1L_3v^2}{Nm}.
\end{equation*}
Thus, we conclude
\begin{equation*}
  \gamma(Q)\leq C\sqrt{\frac{L_1L_3v^2}{Nm}},  
\end{equation*}
where $C$ is a universal constant.\\
(2) Bounding $\gamma_*(Q)$:\\
For $ y\in\mathbb{R}^{m_1}, z\in\mathbb{R}^{m_2} $ such that $ \Vert y\Vert_2=\Vert z\Vert_2=1 $, we have
\begin{align*}
\Eb[(y^TQ_jz)^2]&=\Eb[(y^T\xbk\X z)^2]\cdot\frac{1}{N^2}\\
			&\leq CL_3v^2\sum_{p,q}P^{(k)}_{pq} y_p^2z^2_q\cdot\frac{1}{N^2}\\
			&\leq CL_3v^2\left(\max_{p\in [m_1]\atop q\in[m_2]} P^{(k)}_{pq}\right) \sum_{p,q} y_p^2z_q^2  \cdot\frac{1}{N^2}\\
			&\leq \dfrac{CL_2L_3v^2}{m\log^3 d}\cdot\frac{1}{N^2},
		\end{align*}
where we use $\sum_{i\in[m_1],j\in [m_2]}y^2_iz^2_j=(\sum_{i=1}^{m_1}y_i^2)\cdot(\sum_{j=1}^{m_2}z_j^2)=1$. 
Since $ Q_i $'s are independent and mean-zero, we have
\begin{equation*}
    \Eb[(y^TQz)^2]=\sum_{j=1}^{N}\Eb[\left(y^TQ_jz\right)^2]\leq \dfrac{CL_2L_3v^2}{Nm\log^3 d}.
\end{equation*}
This implies
\begin{equation*}
    \gamma_*(Q)\leq C\sqrt{\dfrac{L_2L_3v^2}{Nm\log^3 d}},
\end{equation*}
where $C$ is a universal constant.\\
(3) Bounding $g(Q)$:\\
$ g(Q)\apprle\sqrt{\dfrac{v^2}{Nm\log^{3}d}} $.\\
Define $ \ind_{(p,q)}=\textbf{1}(\X=e_p(m_1)e^T_q(m_2)) $, where we omit the dependence on (i,k) to ease the notation.
Note
\begin{align*}
    &\Eb[\xbk\ind_{(p,q)}]=0,\\    &\Eb[(\xbk)^2\textbf{1}_{(p_1,q_1)}\textbf{1}_{(p_2,q_2)}]=0, (p_1,q_1)\neq(p_2,q_2).
\end{align*}
Thus, we have
\begin{equation*}
\cov\lb \xbk\X(p_1,q_1),\xbk\X(p_2,q_2)\rb=
			\begin{cases}
				\Eb[(\xbk)^2|\X=e_{p_1}e^T_{q_1}]P^{(k)}_{ij} ~~\text{for}~~(p_1,q_1)=(p_2,q_2),\\
				0 ~~~\text{for}~~(p_1,q_1)\neq(p_2,q_2).
			\end{cases}
\end{equation*}
Thus, we have
\begin{equation*}
  \|\cov(\xbk\X)\|\leq \frac{CL_2L_3v^2}{m\log^3 d},
\end{equation*}
where $C$ is a universal constant.
Since $ Q_j $'s are independent, $ \operatorname{Cov}(Q)=\dfrac{1}{N^2}\sumii \operatorname{Cov}(\xt_i^{(k)}X^{(k)}_i)$.
This leads to 
\begin{equation*}
\|Q\|\leq \frac{CL_2L_3v^2}{Nm\log^3 d}.
\end{equation*}
Immediately, we deduce
\begin{equation*}
   g(Q)\leq C \sqrt{\frac{L_2L_3v^2}{Nm\log^3 d}}, 
\end{equation*}
where $C$ is a universal constant.\\
(4) Bounding $R(Q)$:\\
It is easy to see that $|\xbk|\leq \tau$. 
Thus, we obtain
\begin{equation*}
  R(Q)\leq \frac{2\tau}{N}.  
\end{equation*}
Plugging  the four estimates (1)$\sim$(4) above   into  (\ref{TL:in:vanconcentration}), we have 
\begin{align*}
P\Bigg( \Vert Q\Vert &\gtrsim \sqrt{\dfrac{v^2}{Nm}}
		+\sqrt{\dfrac{v^2}{Nm}}\dfrac{1}{\log^{3/4} d}\cdot\log^{3/4} d+\sqrt{\dfrac{v^2}{Nm}} \dfrac{1}{\log^{3/4} }\cdot\log^{1/2} d\\
		&+\left(\dfrac{\tau}{N}\right)^{1/3} \left(\dfrac{v^2}{Nm\log^3 d}\right)^{1/3} \log^{2/3} d+\dfrac{\tau\log d}{N}\Big)\Bigg)\leq \dfrac{1}{d^2},
		\end{align*}
where the constant is dependent on $L_1\sim L_3$. And this implies
\begin{equation}\label{TL:pl12}
P\Bigg( \Vert Q\Vert \gtrsim \sqrt{\dfrac{v^2}{Nm}}
			+\left(\dfrac{\tau}{N}\right)^{1/3} \left(\dfrac{v^2}{Nm\log^3 d}\right)^{2/3}\log^{1/3}d+\dfrac{\tau\log d}{N}\Bigg)\leq \dfrac{1}{d^2}.
\end{equation} 
By (\ref{TL:inequality: max subgaussian}), there exists a constant $C$ such that
\begin{equation*}
\Pb\left(\max_{i\in[n_k],0\leq k\leq K}|\xik|\geq Cv\left( \log^{1/2}N+ \log^{1/2} d\right)\right)\leq \frac{1}{d^2}.  
\end{equation*}
We take $\tau=C v(\log^{1/2}N+ \log^{1/2} d)$. 
The following conditions
\begin{align*}
\left(\frac{v\log^2 d\cdot(\log^{1/2}N+ \log^{1/2} d) }{N}\right)^{1/3}\cdot\left(\frac{v^2}{Nm}\right)^{1/3}\leq \sqrt{\dfrac{v^2}{Nm}}, \\
\dfrac{v\log d\cdot(\log^{1/2}N+ \log^{1/2} d)}{N}\leq \sqrt{\dfrac{v^2}{Nm}}.
\end{align*}
lead to
\begin{equation*}
\sqrt{\dfrac{v^2}{Nm}}
			+\left(\dfrac{\tau}{N}\right)^{1/3} \left(\dfrac{v^2}{Nm\log^3 d}\right)^{2/3}\log^{1/3}d+\dfrac{\tau\log d}{N}\lesssim \sqrt{\frac{v^2}{Nm}}.    
\end{equation*}
The above conditions imply
\begin{align*}
    N&\geq    m \log^4 d \cdot (\log N+\log d),\\
    N&\geq  m\log^2 d\cdot (\log N+\log d).
\end{align*}
On $ \{\max_{i\in[n_k],0\leq k\leq K } |\xik|\leq \tau\}$, 
\begin{align*}
\left\|\frac{1}{N}\sumii\xik\X \right\|&=\left\| Q+\frac{1}{N}\sumii\Eb[\xi_i\textbf{1}(|\xik|\leq\tau)|\X]\X\right\|\\&\leq \|Q\|+\left\|\frac{1}{N}\sumii\Eb[\xi_i\textbf{1}(|\xik|>\tau)|\X]\X\right\|\\
&\leq \|Q\|+\frac{1}{N}\sumii|\Eb[\xik\textbf{1}(|\xik|>\tau)|\X]|,
\end{align*}
where we use $\Eb[\xik\ind(\xik\leq \tau)|\X]=\Eb[\xik\ind(\xik> \tau)|\X]$ since $\Eb[\xik|\X]=0$.
For $\tau=C v(\log^{1/2}N+ \log^{1/2} d)$ and a random variable $Y$ with $\|Y\|_{\psi_2}\leq L_3\sigma$ , we can find a suitable  constant $C$ dependent on $L_3$ such that the following holds
\begin{align*}
 \Eb[ Y\ind(|Y|>\tau)  ]&\leq \Eb[|Y|\ind(|Y|>\tau)]\\
 &\leq \Eb^{1/2}[Y^2]P^{1/2}(|Y|>\tau)\\
 &\leq v \exp\left(-2\max(\log^{1/2 }N, \log^{1/2}d)^2\right)\\
 &\leq \frac{v}{N}\leq v \sqrt{\frac{1}{Nm}},
\end{align*}
given $N\geq m$.
Combining the results, we obtain
\begin{align*}
\Pb\left( \left\| \frac{1}{N}\sumii \xik\X \right\|\geq Cv \sqrt\frac{1}{Nm}        \right) &\leq \Pb\left( \left\| \frac{1}{N}\sumii \xik\X\right\|\geq Cv \sqrt\frac{1}{Nm},\max_{i\in[n_k]\atop 0\leq k\leq K} |\xi_i|\leq \tau\right)\\
&+\Pb\left(\max_{i\in [n_k]\atop 0\leq k\leq K} |\xik|> \tau\right)  \\
&\leq \frac{1}{d},
\end{align*}
given $N\geq    m \log^4 d \cdot (\log N+\log d)$.
In particular, for sub-Gaussian noise,  we have that
\begin{equation*}
 \Pb\left( \lnorm\frac{1}{N }\sumii\xik\X\rnorm \leq Cv\sqrt\frac{1}{nm} \right)\geq 1-\frac{1}{d},   
\end{equation*}
given $N\geq m\log^4 d \cdot(\log N+ \log d)$. So far, we conclude Lemma \ref{TL:l1}.		
\end{proof}

%%%%%%%%%%proof of lemma 2
\begin{proof}[Proof of Lemma \ref{TL:l2}] 
Recall $f_{k}(A_{k})=\Eb[\langle X_{1}^{(k)},A_{k}\rangle X_{1}^{(k)}]=\left(a_{ij}^{(k)}P_{ij}^{(k)}\right)_{i\in[m_1],j\in[m_2]}$. 
Since $\sum(P_{ij}^{(k)})^{2}\leq\max P_{ij}^{(k)}\cdot\sum P_{ij}^{(k)}=\max P_{ij}^{(k)}$ and $\Vert A_{k}\Vert_{\infty}\leq a$, we obtain 
\begin{equation}\label{TL:pl21}
	\Vert P^{(k)}\odot(A_k)\Vert_F^2=\sum_{i,j} (a^{(k)}_{ij}P^{(k)}_{ij})^2\leq \sum_{i,j}a^2 (P^{(k)}_{ij})^2\leq \dfrac{L_2a^2}{m\log^3 d}.
\end{equation}
Consider 
\[
Q=\sum_{j=1}^{N}Q_{j}=\dfrac{1}{N}\sum_{k=1}^{k}\sum_{i=1}^{n_{k}}\left(\langle X_{i}^{(k)},A_{k}\rangle X_{i}^{(k)}-f_{k}(A_{k})\right).
\]
Now we bound the four parameters $\gamma(Q),\gamma_*(Q),g(Q),R(Q)$.\\
(1) Bounding $\gamma(Q)$:\\
 Since $Q_{j}$'s are independent with zero mean, $\Eb[QQ^{T}]=\sum_{j=1}^{N}\Eb [Q_{j}Q_{j}^{T}]$.
For  $j$ such that $Q_{j}=\frac{1}{N}(\langle X_{i}^{(k)},A_{k}\rangle \X-f_{k}(A_{k}))$,
we have
\begin{equation*}
    \Eb[Q_{j}Q_{j}^{T}]=\frac{1}{N^2}\Eb[(\langle X_{1}^{(k)},A_{k}\rangle)^{2}\cdot X_{1}^{(k)}(X_{1}^{(k)})^{T}]-\frac{1}{N^2}f_{k}(A_{k})f_{k}(A_{k})^{T}.
\end{equation*}
Note that $\Eb[(\langle X_{1}^{(k)},A_{k}\rangle)^{2}X_{1}^{(k)}(X_{1}^{(k)})^{T}]$
is a diagonal matrix with elements 
\begin{equation*}
\sum_{j=1}^{m_{2}}(a_{ij}^{(k)})^{2}P_{ij}^{(k)}\leq\dfrac{L_1a^{2}}{m}            
\end{equation*}
on the diagonal line. 
Note that $\Eb[(\langle X_{1}^{(k)},A_{k}\rangle)^{2}X_{1}^{(k)}(X_{1}^{(k)})^{T}]$ is a positive definite matrix. We have
\begin{equation*}
\left\Vert \Eb [Q_{j}Q_{j}^{T}]\right\Vert \leq\left\Vert E(\langle X_{i}^{(k)},A_{k}\rangle)^{2}X_{i}^{(k)}X_{i}^{(k)T}\right\Vert \leq\frac{1}{N^2}\cdot\frac{L_1a^{2}}{m}. 
\end{equation*}
Hence, $\Vert \Eb[QQ^{T}]\Vert\leq\dfrac{L_1a^{2}}{Nm}.$ Similarly we have
$\Vert \Eb[Q^{T}Q]\Vert\leq\dfrac{L_1a^{2}}{Nm}$.
Then, we obtain
\begin{equation*}
  \gamma(Q)\leq\sqrt{\dfrac{L_1a^{2}}{Nm}}.  
\end{equation*}\\
(2) Bounding $\gamma_*(Q)$:\\
Since $Q_{j}$ are independent with mean zero, for $y\in\mathbb{R}^{m_{1}},z\in\mathbb{R}^{m_{2}}$
such that $\Vert y\Vert_2=\Vert z\Vert_2=1$, 
\begin{align*}				\Eb[(y^{T}Qz)^{2}]=\sum_{j=1}^{N}\Eb[(y^{T}Q_{j}z)^{2}].
\end{align*}
Consider $Q_{j}=\frac{1}{N}(\langle X_{i}^{(k)},A_{k}\rangle X_{i}^{(k)}-f_{k}(A_{k}))$.
Recall that $\mathbf{1}_{(p,q)}=\ind(X_{i}^{(k)}=e_{p}e_{q})$. We get
\begin{align*}
\Eb[(y^{T}Q_{j}z)^{2}] & =\frac{1}{N^2}\Eb\left(\sum_{p,q}(y_{p}a_{pq}^{(k)}\mathbf{1}_{(p,q)}z_{q}-y_{p}f_{k}(A_{k})_{pq}y_{q})\right)^{2}\\
&\leq2\Eb \left(\sum_{p,q}y_{p}a_{pq}^{(k)}\mathbf{1}_{(p,q)}y_{q}\right)^{2}+2\left(\sum_{p,q}y_{p}f_{k}(A_{k})_{pq}y_{q}\right)^{2}\\
&=2\Eb\left(\sum_{p,q}y_{p}a_{pq}^{(k)}\mathbf{1}_{(p,q)}y_{q}\right)^{2}+2\left(\sum_{p,q}y_pa_{pq}^{(k)}P_{pq}^{(k)}z_q\right)^{2}\\
				& \stackrel{\text{(i)}}{\leq}2\left(\sum_{ij}y_{p}^{2}y_{q}^{2}(a_{pq}^{(k)})^{2}P_{pq}^{(k)}\right)+2\left(\sum_{p,q}y_{p}^{2}y_{q}^{2}\right)\left(\sum_{p,q}(a_{pq}^{(k)})^{2}(P_{pq}^{(k)})^{2}\right)\\
				& \leq\dfrac{2c_{1}a^{2}}{m\log^{3}d}+\dfrac{2c_{2}a^{2}}{m\log^{3}d}\\
				& \apprle\dfrac{a^{2}}{m\log^{3}d},
			\end{align*}
where in step (i) we use Cathy-Schwartz inequality, $(a_{pq}^{(k)})^{2}P^{(k)}_{pq}\leq\dfrac{L_2a^2}{m\log^{3}d}$,
and $\sum_{p,q}y_{p}^{2}y_{q}^{2}=1$. Thus,  we obtain
\begin{equation*}
  \Eb[(y^{T}Qz)^{2}]\leq\dfrac{L_2a^{2}}{Nm\log^{3}d}  
\end{equation*}
and
\begin{equation*}
    \gamma_{*}(Q)\leq\sqrt{\dfrac{L_2a^{2}}{Nm\log^{3}d}}.
\end{equation*}
(3) Bounding $g(Q)$:\\
Let $Q_{j}=\frac{1}{N}(\langle X_{i}^{(k)},A_{k}\rangle X_{i}^{(k)}-f_{k}(A_{k}))$.
We compute $\cov(Q_{j})$ first.
The elements of $Q_{j}$ can be written as $\frac{1}{N}(a_{pq}^{(k)}\ind_{(p,q)}-f_{k}(A_{k})_{pq})$.
We have, for elements $Q_{j}(p_{1},q_{1})$ and $Q_{j}(p_{2},q_{2})$, 
\begin{align*}
N^2\cdot\cov(Q_{j}(p_{1},q_{1}),Q_{j}(p_{2},q_{2}))=\begin{cases}
(a_{pq}^{(k)})^{2}P_{pq}^{(k)}-(a_{pq}^{(k)})^{2}(P_{pq}^{(k)})^{2},~\text{if}~(p_{1}q_{1})=(p_{2}q_{2})=(p,q),\\
-a_{p_{1}q_{1}}^{(k)}a_{p_{2}q_{2}}^{(k)}P_{p_{1}q_{1}}^{(k)}P_{p_{2}q_{2}}^{(k)},~\text{if}~(p_{1},q_{1})\neq(p_{2},q_{2}).
\end{cases}
\end{align*}
Denote $N^2\cdot\cov(Q_j)$  by $W_{1}+W_{2}\in\mathbb{R}^{m_{1}m_{2}\times m_{1}m_{2}}$ where $W_{1}$ is the diagonal matrix whose elements on the diagonal line
are $(a_{pq}^{(k)})^{2}P_{pq}^{(k)}$. 
Thus, $\Vert W_{1}\Vert\leq\dfrac{L_2a^{2}}{m\log^{3}d}$.
$W_{2}$ is the matrix whose entries are $-a_{p_{1}q_{1}}^{(k)}a_{p_{2}q_{2}}^{(k)}P_{p_{1}q_{2}}^{(k)}P_{p_{2}q_{2}}^{(k)}$.
Then, we have
\begin{align*}
				\Vert W_{2}\Vert^{2} & \leq\Vert W_{2}\Vert_{F}^{2}\\
				& \leq\sum_{(p_{1},q_{1}),(p_{2},q_{2})}(a_{p_{1}q_{1}}^{(k)}a_{p_{2}q_{2}}^{(k)})^{2}(P_{p_{1}q_{1}}^{(k)}P_{p_{2}q_{2}}^{(k)})^{2}\\
				& =\left(\sum_{p,q}(a_{pq}^{(k)}P_{pq}^{(k)})^{2}\right)^{2}\\
				& \leq a^{4}\max(P_{pq}^{(k)})^{2}\left(\sum_{p,q} P^{(k)}_{pq}\right)^{2}\\
				& \leq\dfrac{L_2^2a^{4}}{m^{2}\log^{6}d}.
			\end{align*}
Thus $N^2\Vert Q_{j}\Vert\leq\Vert W_{1}\Vert+\Vert W_{2}\Vert\leq\dfrac{L_2a^{2}}{m\log^{3}d}$.
Since $Q_{l}$'s are independent, $\cov(Q)=\sum_{j=1}^{N}\operatorname{Cov}(Q_{j})$.
Then, we conclude
\begin{equation*}
 \Vert\operatorname{Cov}(Q)\Vert\leq\dfrac{L_2a^{2}}{Nm\log^{3}d},               
 \end{equation*}
 which implies
 \begin{equation*}
g(Q)\apprle\sqrt{\dfrac{a^{2}}{Nm\log^{3}d}}.
 \end{equation*}
(4) Bounding $R(Q)$:\\
Since $\left|\langle X_{i}^{(k)},A_{k}\rangle\right|\leq a$
and $\|f_{k}(A_{k})\|\leq\|f_{k}(A_{k})\|_{F}\leq a$ by (\ref{TL:pl21}), we conclude
\begin{equation*}
  R(Q)\leq\dfrac{a}{N}.  
\end{equation*}
Plug  the  estimates above for the four parameters into (\ref{TL:in:vanconcentration}) and
take $t=3\log d$. We obtain that with probability
at least $1-\dfrac{1}{d^{2}}$,  the following holds
\begin{align*}
				\Vert Q\Vert & \apprle\sqrt{\dfrac{a^{2}}{Nm}}+\sqrt{\dfrac{a^{2}}{Nm}}\frac{1}{\log^{3/4}d}\log^{3/4}d+\sqrt{\dfrac{a^{2}}{Nm\log^{3}d}}\log^{1/2}d\\
				& +\left(\dfrac{a}{N}\right)^{1/3}\left(\dfrac{a^{2}}{Nm\log^{3}d}\right)^{1/3}\log^{2/3}d+\dfrac{a\log d}{N}.
\end{align*}
In a simplified form, with probability at lest $1-\dfrac{1}{d^{2}}$, it holds that
\begin{equation*}
   \Vert Q\Vert\lesssim\sqrt{\dfrac{a^{2}}{Nm}}+\left(\dfrac{a}{N}\right)^{1/3}\left(\dfrac{a^{2}}{Nm}\right)^{1/3}+\dfrac{a\log d}{N}. 
\end{equation*}
The condition
\begin{align*}
 N\geq m.
\end{align*}
implies 
\begin{equation*}
    \left(\dfrac{a}{N}\right)^{1/3}\left(\dfrac{a^{2}}{Nm}\right)^{1/3}\leq\sqrt{\dfrac{a^{2}}{Nm}}.
\end{equation*}
And the condition
\begin{align*}
 N\geq m\log^{2}d.
\end{align*}
 implies
\begin{equation*}
  \dfrac{a\log d}{N}\leq\sqrt{\dfrac{a^{2}}{Nm}}.  
\end{equation*}
We see that condition (\ref{TL:Condition:thm1-1}) can cover the two conditions on $ N $ in the above.
Now we consider $ Q=\dfrac{1}{N}\sumii \langle X^{(k)}_{i},A_k-\Ab\rangle X^{(k)}_{i} $. $ \Vert A_k-\Ab\Vert_{\infty}\leq 2a $ holds. If it is given that
\begin{equation}\label{TL:pl13}
\dfrac{1}{N}\left\Vert\sumii P^{(k)}\odot(A_k-\Ab)\right\Vert\leq \sqrt{\dfrac{ca^2}{Nm}},
\end{equation}
and condition (\ref{TL:Condition:thm1-1}) is satisfied, we have 
\begin{align*}
&\left\Vert\dfrac{1}{N}\sumii \langle X_i^{(k)},A_k-\Ab\rangle X^{(k)}_i\right\Vert\\
&\leq \left\Vert\dfrac{1}{N}\sumii  \left(\langle X_i^{(k)},A_k-\Ab\rangle X^{(k)}_i-P^{(k)}\odot(A_k-\Ab)\right)\right\Vert+\left\Vert\dfrac{1}{N}\sumii P^{(k)}\odot(A_k-\Ab)\right\Vert\apprle \sqrt{\dfrac{a^2}{Nm}}
\end{align*}
with probability at least $ 1-\dfrac{1}{d} $.
Here the constant covered in $\lesssim$ is dependent on $L_2$.
\end{proof}

\begin{lemma}\label{TL:lemma:radecomplexity}
Consider 
\begin{equation*}
Q=\dfrac{1}{N}\sumii\theta^{(k)}_i X_i^{(k)}.
\end{equation*}
where $ \theta_i^{(k)}  $ are independent Rademacher random variables and independent of $ X^{(k)} $. Under assumption 2 and condition (\ref{TL:Condition:thm1-1}),  we have 
\begin{equation*}
\Eb\|Q\|\leq C\sqrt{\frac{1}{Nm}},
\end{equation*}
where $C$ is dependent on $L1$ and $L_2$.
\end{lemma}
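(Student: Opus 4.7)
The plan is to apply the expectation bound \eqref{TL:in:vanexpectation} of Theorem \ref{TL:thm4} to $Q=\sum_{j=1}^N Q_j$ with $Q_j=\frac{1}{N}\theta_i^{(k)}X_i^{(k)}$, reusing the parameter estimates from the proofs of Lemmas \ref{TL:l1} and \ref{TL:l2}. Because the Rademacher variables are bounded by $1$, no truncation step is needed, so the argument is strictly simpler than the one for $\Sigma_1$: there is no $\max_{i,k}|\xi_i^{(k)}|$ tail to control, and the noise level $v$ is replaced by $1$ throughout.

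First I would estimate the four parameters. Since $(\theta_i^{(k)})^2=1$, we have $Q_jQ_j^T=\frac{1}{N^2}X_i^{(k)}(X_i^{(k)})^T$, so $\Eb[Q_jQ_j^T]$ is diagonal with entries $\mathcal R^{(k)}_p/N^2\le L_1/(N^2m)$ by Assumption \ref{TL:Assumption: mask matrices}(2); summing over $j$ gives $\|\Eb[QQ^T]\|\le L_1/(Nm)$, and likewise for $\Eb[Q^TQ]$, so $\gamma(Q)\lesssim \sqrt{1/(Nm)}$. For $g(Q)$, viewing $\theta_i^{(k)}X_i^{(k)}$ as a vector in $\Rb^{m_1m_2}$, its covariance is diagonal with entries $P^{(k)}_{pq}$, so by Assumption \ref{TL:Assumption: mask matrices}(3), $\|\cov(Q)\|\le L_2/(Nm\log^3 d)$ and $g(Q)\lesssim \sqrt{1/(Nm\log^3 d)}$. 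Finally $R(Q)\le 1/N$ since $\|\theta_i^{(k)}X_i^{(k)}\|=1$.

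Substituting into \eqref{TL:in:vanexpectation} yields four contributions. The leading term $2\gamma(Q)$ already matches the target rate $\sqrt{1/(Nm)}$. The mixed term $\gamma(Q)^{1/2}g(Q)^{1/2}\log^{3/4}d$ is of the same order, since the $\log^{3/4}d$ exactly cancels the $\log^{-3/4}d$ hidden in $g(Q)^{1/2}$: this is precisely the mechanism from Lemma \ref{TL:l2} that exploits Assumption \ref{TL:Assumption: mask matrices}(3). The remaining terms $R(Q)^{1/3}g(Q)^{2/3}\log^{2/3}d$ and $R(Q)\log d$ simplify, after cancellations, to $(Nm)^{-1/3}(N\log d)^{-1/3}\cdot\log^{1/3}d$ type expressions that are absorbed into $\sqrt{1/(Nm)}$ via the same two elementary inequalities used at the end of the proof of Lemma \ref{TL:l2}, both of which reduce to $N\gtrsim m\log^2 d$ and are comfortably implied by condition \eqref{TL:Condition:thm1-1}.

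I do not anticipate any genuine obstacle: the only delicate point is making sure Assumption \ref{TL:Assumption: mask matrices}(3) is invoked to save the $\log^{3/4}d$ factor in $g(Q)$, exactly as in Lemma \ref{TL:l2}. Everything else is a direct transcription of the parameter computations done there, with $a$ replaced by $1$ and with the expectation form \eqref{TL:in:vanexpectation} of Theorem \ref{TL:thm4} replacing the high-probability form.
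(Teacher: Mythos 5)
Your proposal is correct and follows essentially the same route as the paper: compute $\gamma(Q)$, $g(Q)$, $R(Q)$ by the same reasoning as in Lemmas \ref{TL:l1}--\ref{TL:l2} (with $v$ or $a$ replaced by $1$, and with no truncation needed since Rademacher variables are bounded), plug them into the expectation bound \eqref{TL:in:vanexpectation}, and absorb the subleading terms under condition \eqref{TL:Condition:thm1-1}. A minor stylistic point: the paper also records $\gamma_*(Q)$ even though, as you correctly note, only $\gamma$, $g$, $R$ appear in \eqref{TL:in:vanexpectation}; and your claim that both absorption inequalities ``reduce to $N\gtrsim m\log^2 d$'' is a slight overstatement (the $R^{1/3}g^{2/3}\log^{2/3}d$ term needs only the weaker $N\gtrsim m/\log^2 d$), but since \eqref{TL:Condition:thm1-1} dominates both, the conclusion stands.
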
	
\begin{proof}[Proof of Lemma \ref{TL:lemma:radecomplexity}]
Note that $ \Eb[\theta_i^{(k)}X_i^{(k)}]=0 $. We can follow the sample procedure in the proof of Lemma \ref{TL:l1} with $ v=1 $.
Thus, we have:
\begin{align*}
\gamma(Q) \leq\sqrt{\dfrac{L_1}{Nm}},\\
\gamma_*(Q)\leq\sqrt{\dfrac{L_2}{Nm\log^{3} d}},\\
g(Q)\leq\sqrt{\dfrac{L_2}{Nm\log^{3}d}},\\
R(Q)\leq \dfrac{1}{N}.
\end{align*}
Plug  the four estimates into (\ref{TL:in:vanexpectation}).
Then, under condition (\ref{TL:Condition:thm1-1}), we have
\begin{equation*}
\Eb\Vert Q\Vert\apprle \sqrt{\frac{1}{Nm}}.
\end{equation*}
\end{proof}	
\subsubsection{Proof of Lemma \ref{TL:l3}}	\label{TL:subsec:restrictregion}
\begin{proof}[Proof of lemma \ref{TL:l3}]
Recall  $ \Sigma_1= \dfrac{1}{N}\sum_{k=0}^{K} \sum_{i=1}^{n_k} \xi_i^{(k)} X_i^{(k)}$ and $\Sigma_2=\dfrac{1}{N} \sum_{k=0}^{K} \sum_{i=1}^{n_k} \langle X_i^{(k)}, A_k-\bar{A}\rangle X_i^{(k)} $.
By the convexity of quadratic functions,
\begin{align*}
\sumo-\dfrac{1}{N}\sum_{k=0}^{K} \sum_{i=1}^{n_k} \Big(Y_{i}^{k}-\langle X_{i}^{k},\Ab\rangle  \Big)^2&=2\langle\Sigma_1+\Sigma_2,\At-\Ab\rangle \\
&\geq-2(\Vert \Sigma_1\Vert+\Vert\Sigma_2\Vert)\Vert \At-\Ab\Vert_*.
\end{align*}
The left side of the above inequality is less than
\begin{equation*}
\lambda_1(\Vert \tilde{A}\Vert_*-\Vert. \bar{A}\Vert_*).
\end{equation*}
Using the above results  and (\ref{TL:pt12}) and recalling $ 2(\Vert \Sigma_1\Vert_*+\Vert\Sigma_2\Vert_*)\leq \dfrac{\lambda_1}{3} $, we get
\begin{align*}
\lambda_1\Big(  \Vert  P_{A_0}(\At-\Ab)\Vert_*-\Vert P_{A_0}^{\perp}(\At-\Ab)\Vert_* +3h \Big)+\dfrac{\lambda_1}{3}\Vert \At-\Ab\Vert_*\geq 0.
\end{align*}
Hence, we obtain
\begin{equation*}
\Vert P_{A_0}^{\perp}(\At-\Ab)\Vert_*\leq  2\Vert  P_{A_0}(\At-\Ab)\Vert_* +4.5 h.
\end{equation*}

\end{proof}

\subsubsection{Proof of Lemma \ref{TL:l4}}
\begin{proof}[Proof of Lemma \ref{TL:l4}]
 We begin by introducing a classical concentration inequality for empirical processes.
\begin{lemma}[Theorem 3.27 in \citep{wainwright2019high}]\label{TL:lem}
Let $ Z_i, 1\leq i\leq n $, be independent but not necessarily identically distributed random variables taking values in measurable spaces $ \mathcal{Z}_i, 1\leq i\leq n $. $ \mathscr{F} $ is a function class. Consider the random variable
\begin{equation*}
Z=\sup_{\fb\in \mathscr{F}}\frac{1}{n}\sum_{i=1}^{n}f(Z_i).
\end{equation*}
If $ \Vert f\Vert_{\infty}\leq b $ for all $ f\in\mathscr{F} $, we have the following inequality, for $ \delta>0 $,
\begin{equation}\label{TL:in:empirical}
P(Z\geq c_0\Eb[Z]+c_1\sigma \sqrt{t}+c_2bt)\leq \exp\left(-nt\right)
\end{equation}
where $ \sigma^2=\sup_{\fb\in \mathscr{F}} \frac{1}{n}\sum_{i=1}^{n} \Eb[f^2(Z_i)] $.
\end{lemma}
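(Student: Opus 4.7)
The plan is to derive this Talagrand-type concentration via the entropy method (Herbst argument) and then re-parametrize the resulting bound so that the tail level $\exp(-nt)$ emerges. Since $Z=\sup_f \frac{1}{n}\sum_i f(Z_i)$ is a function of independent variables $(Z_1,\ldots,Z_n)$ with bounded differences $2b/n$, the natural route is first to establish a Bousquet-type one-sided inequality around $\mathbb{E}Z$, and then to absorb the resulting ``mixed'' term $b\,\mathbb{E}Z$ that appears in Bousquet's bound into a multiplicative constant $c_0>1$ on the mean.

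First, I would apply the tensorization of entropy to $e^{\lambda n Z}$, combined with Ledoux's modified log-Sobolev inequality and a supremum-through argument in which the maximizing $f\in\mathscr{F}$ is frozen before the one-dimensional entropy bound is invoked. This yields
\[
\log \mathbb{E} e^{\lambda n (Z-\mathbb{E}Z)} \leq \frac{\lambda^2 n (\sigma^2 + 2b\,\mathbb{E}Z)}{1-\lambda b}, \qquad 0<\lambda < 1/b,
\]
and Markov's inequality followed by optimization in $\lambda$ delivers the Bernstein-shape bound
\[
\mathbb{P}(Z \geq \mathbb{E}Z + s) \leq \exp\left(-\frac{n s^2}{2(\sigma^2 + 2 b \,\mathbb{E}Z) + 2 b s / 3}\right).
\]

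Second, I would invert this into an $s$-bound at tail level $\exp(-nt)$. Using $\sqrt{a+c}\leq \sqrt{a}+\sqrt{c}$ gives $s \leq \sqrt{2\sigma^2 t} + 2\sqrt{b\,\mathbb{E}Z\cdot t} + \tfrac{2}{3} b t$. The cross term $2\sqrt{b\,\mathbb{E}Z\cdot t}$ is then split via AM-GM, $2\sqrt{b\,\mathbb{E}Z\cdot t} \leq \delta\, \mathbb{E}Z + \delta^{-1} b t$, for any $\delta>0$, which is exactly what produces the $c_0\,\mathbb{E}Z$ coefficient with $c_0 = 1+\delta > 1$ plus an extra $\delta^{-1} b t$ contribution that is absorbed into $c_2 b t$. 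Collecting terms matches the stated form $c_0 \mathbb{E}Z + c_1 \sigma \sqrt{t} + c_2 b t$ with tail $\exp(-nt)$, and the constants $c_0,c_1,c_2$ are fixed by the choice of $\delta$ and the numerical factors above.

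The main obstacle is the first stage: proving Bousquet's inequality in the supremum setting requires a delicate variational argument (an envelope or extremum step, ensuring that the Lipschitz constant of $Z$ in each coordinate can be controlled by the square function of the \emph{maximizer}, not merely $b^2$) to make the modified log-Sobolev machinery produce the weak variance $\sigma^2$ rather than the pessimistic $b^2$. Tracking sharp constants through this step is delicate. In a self-contained treatment I would cite Bousquet's inequality as a black-box input and devote the body of the proof to the symmetrization inequality relating $\mathbb{E}Z$ to the Rademacher average and the final algebraic re-parametrization, where the slack $c_0>1$ is precisely what enables the clean separation of the sub-Gaussian $\sigma\sqrt{t}$ regime from the sub-exponential $bt$ regime.
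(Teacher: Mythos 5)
This lemma is quoted in the paper as a black-box citation (Theorem~3.27 of \citet{wainwright2019high}); the paper supplies no proof of its own, so there is no internal argument for you to match. Your sketch nonetheless gives a correct account of how such a bound is derived, and the algebra in your second stage checks out: solving the Bernstein-shaped quadratic for the deviation $s$ at tail level $\exp(-nt)$ gives $s \leq \tfrac{2}{3}bt + \sqrt{2t}\,\sigma + 2\sqrt{bt\,\Eb Z}$, and the AM--GM split $2\sqrt{bt\,\Eb Z}\leq \delta\,\Eb Z + \delta^{-1}bt$ produces exactly the $(1+\delta)\Eb Z$ coefficient and absorbs the remainder into the linear $bt$ term, yielding $c_0=1+\delta$, $c_1=\sqrt{2}$, $c_2=2/3+1/\delta$. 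This also explains the free parameter ``$\delta>0$'' appearing in the paper's statement even though it is not visible in the displayed bound: it is precisely the tuning knob that trades off $c_0$ against $c_2$. Your identification of the hard step --- that bounded differences alone would only deliver the pessimistic worst-case $b^2$ variance, and that the self-bounding/variational (``envelope'') argument in the modified log-Sobolev approach is what replaces it with the weak variance $\sigma^2$ --- is accurate; that is the content of Bousquet's refinement of Talagrand's inequality, and citing it as an input, as Wainwright himself does, is the standard choice. So there is no gap. The only caveat worth recording is that Bousquet's theorem is ordinarily stated under a one-sidedness/centering hypothesis (typically $\Eb f(Z_i)=0$ or $f\geq 0$) rather than merely $\|f\|_\infty\leq b$; when you instantiate it you should either center the class or verify that the version you invoke (e.g.\ Klein--Rio) applies under the stated uniform-boundedness alone, since otherwise the variance proxy must be taken as $\sup_f \frac{1}{n}\sum_i \var(f(Z_i))$ rather than $\sup_f\frac{1}{n}\sum_i \Eb f^2(Z_i)$; the two coincide only after centering. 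This does not affect the downstream use in Lemma~\ref{TL:l4}, where the class is effectively centered by subtracting $\|A\|^2_{w(F)}$.
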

Without loss of generality, we can assume $ \Vert A\VertF=1$. For a pair of fixed parameters $ (\alpha,\rho) $, define:
\begin{equation*}
S(\alpha,\rho)=\{ A\in\mathbb{R}^{m_1\times m_2}; \Vert A\Vert_{\omega(F)}=1, \Vert A\Vert_{\infty}\leq \alpha, \Vert A\Vert_*\leq \rho\}.
\end{equation*}
Introducing the convenient short hand notation $ F_A(X)=\langle X,A\rangle^2 $, we define
\begin{equation*}%%%definition of Z(alpha, rho)
Z(\alpha,\rho)=\sup_{A\in S(\alpha,\rho)}\left| \dfrac{1}{N}\sumii F_A(X^{(k)}_i)-\Vert A\VertF^2\right|.
\end{equation*}
For all $ A\in S(\alpha,\rho) $, $ |F_A(X_i^{(k)})|\leq \alpha^2 $ for all $ i,k $.
Thus we can obtain 
\begin{align*}
\operatorname{Var}\left(F_A(X_i^{(k)})\right)&=F_A(X_i^{(k)})^2-\left(EF_A(X_i^{(k)})\right)^2\\
&=\left(F_A(X_i^{(k)})\right)^2-\left(\Vert A\Vert^2_{\omega_k(F)}\right)^2\\
&\leq \left(F_A(X_i^{(k)})\right)^2\leq \alpha^2 \Vert A\Vert^2_{\omega_k(F)}
\end{align*}
Thus we have $ \sigma^2\leq \frac{1}{N}\sumii \alpha^2\Vert A\Vert^2_{\omega_k(F)} \leq \alpha^2$. Applying the concentration inequality (\ref{TL:in:empirical}) for the empirical processes with $ t= \dfrac{2\mu \log d}{N} $,  we get
\begin{align*}
P\left(Z(\alpha,\rho)\geq c_0\Eb [Z(\alpha,\rho)]+c_1\alpha\sqrt{\frac{2\mu \log d}{N}}+c_2\frac{2\alpha^2\mu\log d}{N} \right)\leq \exp\left(-2\mu\log d\right)\leq \frac{1}{d^{2\mu}}
\end{align*}
Now we begin to estimate $ \Eb [Z(\alpha,\rho)] $. By Rademacher symmetrization and Ledoux-Talagrand contraction (see Proposition 4.11 in \cite{wainwright2019high}) , we get
\begin{equation*}
\Eb [Z(\alpha,\rho)]\leq 2 \Eb \sup_{A\in S(\alpha,\rho)}\Big|\dfrac{1}{N}\sumii\theta^{(k)}_i\langle X_i^{(k)},A\rangle^2\Big|\leq 4\alpha \Eb \sup_{A\in S(\alpha,\rho)}\Big|\dfrac{1}{N}\sumii\theta^{(k)}_i\langle X_i^{(k)},A\rangle\Big|
\end{equation*}
where $ \epsilon_i $  are independent symmetric $\pm 1$ random variables. Since $ \Vert A\Vert_*\leq \rho $, 
\begin{equation*}
\Eb \sup_{A\in S(\alpha,\rho)}\Big|\dfrac{1}{N}\sumii\theta^{(k)}_i\langle X_i^{(k)},A\rangle\Big|\leq \rho \Eb \left\Vert \dfrac{1}{N}\sumii\theta^{(k)}_iX_i^{(k)}\right\Vert
\end{equation*}
By Lemma \ref{TL:lemma:radecomplexity} , if N satisfies (\ref{TL:Condition: thm1 tech1}), we have
\begin{equation*}
\Eb \left\Vert \dfrac{1}{N}\sum_{i=1}^n\theta^{(k)}_iX^{(k)}_i\right\Vert \apprle \sqrt{\dfrac{1}{Nm}}
\end{equation*}
Thus, we have
\begin{equation}\label{TL:in:tailbounds_arho}
P\left(Z(\alpha,\rho)\geq c_0\alpha\rho\sqrt{\frac{1}{Nm}}+c_1\alpha\sqrt{\frac{\mu\log d}{N}}+c_2\frac{\mu \alpha^2\log d}{N} \right)\leq \frac{1}{d^{2\mu}}.
\end{equation}
 Then we use the peeling argument to prove that the following:
\begin{equation}\label{TL:pl41}
\left|\dfrac{1}{N}\sumii \langle X_i^k,A\rangle^2-\Vert A\VertF^2\right| \leq 4c_0\Vert A\Vert_{\infty}\Vert A\Vert_*\sqrt{\dfrac{1}{Nm}}+ 2c_1\Vert A \Vert_{\infty}\sqrt{\dfrac{\mu \log d}{N}}+4c_2\frac{\mu\|A\|^2_{\infty}\log d}{N}
\end{equation}
holds uniformly with probability at least $ 1-\dfrac{1}{d} $. Let $\mathbb{B}_{\omega(F)}(1)$ denote matrices in $\mathbb{R}^{m_1 \times m_2}$ with $ \Vert \cdot\VertF=1 $, and let $\mathcal{E}$ be the event that the bound (\ref{TL:pl41}) is violated for some $A \in \mathbb{B}_F(1)$. For integers $(k,l)$,  define 
\begin{equation*}
\mathbb{S}_{k,l}=\left\{A \in \mathbb{B}_{\omega(F)}(1) ; 2^{k-1}\leq\|A\|_{\infty}\leq 2^k  ~\text{and}~ 2^{l-1} \leq\|A\|_*\leq 2^{l}\right\},
\end{equation*}
and let $\mathcal{E}_{k, l}$ be the event that the bound (\ref{TL:pl41}) is violated for some $A \in \mathbb{S}_{k,l}$. It can be seen that
\begin{equation*}
\mathcal{E} \subseteq \bigcup_{1\leq k\leq H \atop 1\leq l\leq H} \mathcal{E}_{kl}, \quad \text { where } H=\lceil2 \log \mu  d\rceil .
\end{equation*}
Indeed, for any matrix $A \in B_{\omega(F)}(1) $, we have
\begin{equation*}
\Vert A\Vert_*\leq \sqrt{m}\Vert A\Vert_F\leq \sqrt{\mu M}m ~\text{and }~ \|A\|_{\infty}\leq \Vert A\Vert_F\leq \sqrt{\mu m_1m_2}
\end{equation*}
since $ \Vert A\Vert_{\omega(F)}^2=1\geq \frac{\|A\|^2_{F}}{\mu m_1m_2} $.
Then we can assume $ \Vert A\Vert_*\in[0,\sqrt{\mu}d^2] $ and $ \|A\|_{\infty}\in[0, \sqrt{\mu} d] $ without loss of generality. Thus, if there exists a matrix $A$ of Frobenius norm one that violates the bound (\ref{TL:pl41}), then it must belong to some set $\mathbb{S}_{k,l}$, with $H=\lceil 3\log \mu d\rceil$.
Next, for  $\rho=2^{l}, \alpha=2^{k}$, define the event
\begin{equation*}
\widetilde{\mathcal{E}}_{k, l}:=\left\{Z(\alpha, \rho) \geq 4c_0\alpha\rho\sqrt{\dfrac{1}{Nm}}+ 2c_1\alpha\sqrt{\dfrac{\mu \log d}{N}}+4c_2\frac{\mu\alpha^2\log d}{N}\right\}.
\end{equation*}
We claim that $\mathcal{E}_{k, l} \subseteq \widetilde{\mathcal{E}}_{k, l}$. Indeed, if event $\mathcal{E}_{k, l}$ occurs, then there must exist some $A \in \mathbb{S}_{k, l}$ such that
\begin{align*}
\left|\frac{1}{N} \sumii \langle X_i^k,A\rangle^2-\Vert A\VertF^2  \right| & \geq 4c_0 \|A\|_{\infty } \|A\|_*\sqrt{\dfrac{1}{Nm}}  + 2c_1 \Vert A\Vert_{\infty}\sqrt{\dfrac{\mu\log d}{N}}+4c_2\dfrac{\Vert A\Vert^2_{\infty}\mu \log d}{N}\\
& \geq 4c_02^{k-1}2^{l-1}  \sqrt{\frac{1}{Nm}}+ 2c_1 2^{k-1}\sqrt{\dfrac{\mu\log d}{N}}+4c_2\frac{2^{2(k-1)}\mu\log d}{N} \\
&\geq c_0 2^k2^l\sqrt{\frac{1}{Nm}}+c_1 2^{k}\sqrt{\dfrac{\mu\log d}{N}}+c_2\frac{2^{2k}\mu\log d}{N},
\end{align*}
showing that $\widetilde{\mathcal{E}}_{k, l}$ occurs.
Putting together the pieces, we have
\begin{equation*}
P(\mathcal{E}) \stackrel{\text { (i) }}{\leq} \sum_{1\leq k\leq H\atop 1\leq l\leq H} P\left(\widetilde{\mathcal{E}}_{k, l}\right) \stackrel{\text { (ii) }}{\leq}  \frac{H^2}{d^{2\mu}} \leq \frac{2\log \mu d}{d^{2\mu}}\leq \dfrac{1}{d},
\end{equation*}
where inequality (i) follows from the union bound applied to the inclusion $\mathcal{E} \subseteq \bigcup_{k, \ell=1}^H \widetilde{\mathcal{E}}_{k, l}$; inequality (ii) is a consequence of the  tail bound (\ref{TL:in:tailbounds_arho}); and inequality (iii) follows since $\log  H^2= 2\log 3 \log \mu d \leq \mu\log d$ for $ d $ sufficiently large. So far, we have proved (\ref{TL:pl41}).

 In the above argument, we normalize the matrix by its $ \|\cdot\|_{\omega(F)} $ norm. Namely we consider $ \dot{A}=\frac{A}{\|A\|_{\omega(F)}} $. Thus the infinity norm and the nuclear norm in (\ref{TL:pl41}) are $ \frac{\Vert A\Vert_{\infty}}{\|A\|_{\omega(F)}} $ and $ \frac{\Vert A\Vert_{1}}{\|A\|_{\omega(F)}} $. Multiplying $ \|A\|_{\omega(F)} $, we have that with probability at least $\dfrac{1}{d}$,
\begin{align}
\label{TL:pl42}\left|\dfrac{1}{N}\sumii \langle X_i^k,A\rangle^2-\Vert A\VertF^2\right| &\leq 4c_0\Vert A\Vert_{\infty}\Vert A\Vert_*\sqrt{\dfrac{1}{Nm}}+ 2c_1\Vert A \Vert_{\infty}\|A\|_{\omega(F)}\sqrt{\dfrac{\mu\log d}{N}}+4c_2\frac{\|A\|^2_{\infty}\mu\log d}{N}
\end{align}
(\ref{TL:pl42}) implies
\begin{align}
\notag \dfrac{1}{N}\sumii \langle X_i^k,A\rangle^2&\stackrel{(i)}{\geq} \|A\VertF^2-4c_0\Vert A\Vert_{\infty}\Vert A\Vert_*\sqrt{\dfrac{1}{Nm}}-\frac{\|A\VertF^2}{4}-(8c_1^2+4c_2)\frac{\mu \|A\|^2_{\infty}\log d}{N}\\
&=\frac{3}{4}\|A\VertF^2-4c_0\Vert A\Vert_{\infty}\Vert A\Vert_*\sqrt{\dfrac{1}{Nm}}-(8c_1^2+4c_2)\frac{\mu \|A\|^2_{\infty}\log d}{N}
\end{align}
where in step (i) we apply the inequality $ 2ab\leq \dfrac{a^2}{c^2}+c^2b^2 $ to 
$ 2c_1\Vert A \Vert_{\infty}\|A\VertF\sqrt{\dfrac{\log d}{N}} $.
\end{proof}

\subsubsection{ Concentration Inequalities for the Maximum of Independent Random Variables}\label{TL:subsec:max}
Given $\|X\|_{\psi_\alpha}\leq v$, it holds for $t>0$ 
\begin{equation*}
    \Pb(|X|\geq v t)\leq 2\exp(-t^{\alpha}). 
\end{equation*}
For a metric space $ (T,\rho) $, a sequence of its subsets $ \{T_m\}_{m\geq 0} $ is called an admissible sequence if $ |T_0|=1 $ and $ |T_m|\leq 2^{2^m} $. 
For $ 0<\alpha<\infty $, the functional $ \gamma_\alpha(T,d) $ is defined by 
\begin{equation*}
\gamma_\alpha(T,
\rho)=\inf_{\{T_n\}^{\infty}_{n=1}}\sup_{t\in T} \sum_{n=0}^{\infty} 2^{n/\alpha}\rho(t,T_n)
\end{equation*}
where the infimum is taken over all possible admissible sequences. If a random process $ \{X_t, t\in T\} $ satisfies
\begin{equation*}
\Pb(|X_t-X_s|\geq t\rho(t,s))\leq 2\exp(-t^{\alpha}),
\end{equation*}
the following theorem holds.

\begin{theorem}[Theorem 3.2 in  \cite{dirksen2015tail}]
There exist constant $ C_{1,\alpha}, C_{2,\alpha} $ such that
\begin{equation}\label{TL:in:dirksen}
\mathbb{P}\left(\sup _{t \in T}|X_t-X_{t_0}| \geq e^{1 / \alpha}\left(C_{1,\alpha}\gamma_\alpha(T, 
\rho)+t C_{2,\alpha} \Delta_\rho(T)\right)\right) \leq \exp \left(-t^\alpha / \alpha\right)
\end{equation}
where $ \Delta_{\rho}(T)=\sup_{t,s\in T} \rho(t,s) $ and $t\geq 0$.
\end{theorem}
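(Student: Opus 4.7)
The plan is to apply Talagrand's generic chaining to the process $\{X_t : t \in T\}$, exploiting the admissible-sequence structure built into the definition of $\gamma_\alpha(T,\rho)$. First, fix an admissible sequence $\{T_n\}_{n\geq 0}$ of $T$ that nearly attains the infimum defining $\gamma_\alpha(T,\rho)$, say up to a factor of $2$, with $|T_0|=1$ (we can arrange $T_0=\{t_0\}$; otherwise, bound the extra increment separately via the hypothesis). For each $s\in T$ and each $n\geq 0$, let $\pi_n(s)\in T_n$ be a nearest point to $s$ in $T_n$ under $\rho$. After reducing to a countable dense subset to justify convergence, one has the chaining identity
\begin{equation*}
X_s - X_{t_0} \;=\; \sum_{n\geq 1}\bigl(X_{\pi_n(s)}-X_{\pi_{n-1}(s)}\bigr).
\end{equation*}

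Next, I would apply the $\psi_\alpha$-type increment tail bound link by link with thresholds chosen so that a union bound over all at most $|T_n|\cdot|T_{n-1}|\leq 2^{2^{n+1}}$ possible edges $(\pi_n(s),\pi_{n-1}(s))$ still leaves summable failure probability. Concretely, set $u_n^\alpha = \alpha\bigl(2^{n+1}\log 2 + t^\alpha/\alpha\bigr)$, so that $u_n\leq C_\alpha(2^{n/\alpha}+t)$ and the event
\begin{equation*}
\mathcal{E}_n \;=\; \bigcup_{(a,b)\in T_n\times T_{n-1}}\bigl\{\,|X_a-X_b|\;>\;u_n\,\rho(a,b)\,\bigr\}
\end{equation*}
has probability at most $2\,|T_n|\,|T_{n-1}|\exp(-u_n^\alpha)\leq c\,2^{-2^n}\exp(-t^\alpha/\alpha)$. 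Summing over $n\geq 1$ gives total failure probability bounded by $\exp(-t^\alpha/\alpha)$ after absorbing absolute constants into the prefactor $e^{1/\alpha}$ in the statement.

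On the complement of $\bigcup_n\mathcal{E}_n$, I would control the chain by splitting $u_n$ into its ``entropy piece'' $C_\alpha 2^{n/\alpha}$ and its ``deviation piece'' $C_\alpha t$. For the entropy contribution, the triangle inequality $\rho(\pi_n(s),\pi_{n-1}(s))\leq \rho(s,\pi_n(s))+\rho(s,\pi_{n-1}(s))$ rewrites the chained sum in terms of $\sum_n 2^{n/\alpha}\rho(s,\pi_n(s))$, which is at most $C_{1,\alpha}\gamma_\alpha(T,\rho)$ by the near-optimality of the chosen admissible sequence, uniformly in $s$. For the deviation piece, the sum $t\sum_n \rho(\pi_n(s),\pi_{n-1}(s))$ must be shown to be at most $t\,C_{2,\alpha}\Delta_\rho(T)$. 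One way is to observe that the chain can be terminated at the first $N=N(s)$ with $\pi_N(s)=s$ in the separable reduction, and then to re-derive the tail portion by a parallel, single-level argument: replace the growing entropy shift by a uniform tail shift of size $t$ and use that the total path-length contribution telescopes, with each chunk capped by $\Delta_\rho(T)$ via the uniform diameter.

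The main obstacle is precisely this last bookkeeping for the deviation term. A naive telescoping of $\sum_n \rho(\pi_n(s),\pi_{n-1}(s))$ only yields a path-length bound, which can exceed $\Delta_\rho(T)$ if one is careless; the cleanest remedy is the two-parameter choice of $u_n$ above together with the two-step chaining described, where the entropy part feeds $\gamma_\alpha$ through the triangle inequality and the deviation part collapses to a single level at which $\rho$ is trivially bounded by $\Delta_\rho(T)$. Verifying that the resulting constants $C_{1,\alpha}$ and $C_{2,\alpha}$ depend only on $\alpha$, independent of $T$ and $t$, then gives the claimed bound.
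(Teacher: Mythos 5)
The paper itself does not prove this statement; it is quoted verbatim from Dirksen (2015), so your attempt has to be judged against the known generic-chaining proof. Your entropy bookkeeping is fine: the chaining identity, the choice $u_n^\alpha \asymp 2^{n+1}+t^\alpha$, the union bound over at most $2^{2^{n+1}}$ links per level, and the reduction of $\sum_n 2^{n/\alpha}\rho(\pi_n(s),\pi_{n-1}(s))$ to $C_\alpha\gamma_\alpha(T,\rho)$ via the triangle inequality are all standard and correct (up to $\alpha$-dependent constants that need retuning when $\alpha<1$). The genuine gap is exactly the deviation term, and your proposed remedy does not close it. With the threshold $u_n\leq C_\alpha(2^{n/\alpha}+t)$ at \emph{every} level, the good-event bound is
\begin{equation*}
|X_s-X_{t_0}|\;\leq\;C_\alpha\sum_{n\geq1}2^{n/\alpha}\rho(\pi_n(s),\pi_{n-1}(s))\;+\;C_\alpha\,t\sum_{n\geq1}\rho(\pi_n(s),\pi_{n-1}(s)),
\end{equation*}
and the second sum is a path length: the only uniform bound available for it is $\sum_n\rho(s,T_n)\lesssim\gamma_\alpha(T,\rho)$ (since $2^{n/\alpha}\geq1$), not $\Delta_\rho(T)$. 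So your argument proves $C_\alpha(1+t)\gamma_\alpha$, which is strictly weaker than the stated $C_{1,\alpha}\gamma_\alpha+C_{2,\alpha}t\Delta_\rho(T)$ whenever $\Delta_\rho(T)\ll\gamma_\alpha(T,\rho)$. Your sketched fix ("collapse the deviation to a single level, each chunk capped by $\Delta_\rho(T)$") is not an argument: if the $+t$ shift is removed from all but one level, the remaining entropy-only levels have union-bound failure probability of constant order, not $e^{-t^\alpha/\alpha}$, and if the $+t$ is kept everywhere you are back to the path-length problem.

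The missing idea in Dirksen's proof is a $t$-dependent truncation of the chain. Let $j$ be the largest integer with $2^j\leq t^\alpha$ and split $X_s-X_{t_0}=(X_{\pi_j(s)}-X_{t_0})+\sum_{n>j}(X_{\pi_n(s)}-X_{\pi_{n-1}(s)})$. The head is a \emph{single} increment of length at most $\Delta_\rho(T)$; applying the increment tail with threshold $C_\alpha t$ and a union bound over the at most $|T_j|\leq2^{2^j}\leq2^{t^\alpha}$ possible values of $\pi_j(s)$ costs $2^{t^\alpha}e^{-(C_\alpha t)^\alpha}\leq e^{-t^\alpha/\alpha}$ for $C_\alpha$ large, and contributes $C_\alpha t\,\Delta_\rho(T)$ to the bound. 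For the tail levels $n>j$ you use the pure entropy thresholds $C_\alpha 2^{n/\alpha}$ (no $+t$), so their contribution is $C_\alpha\gamma_\alpha(T,\rho)$, and the union bound is now affordable at the $e^{-t^\alpha/\alpha}$ scale precisely because the sum $\sum_{n>j}2^{2^{n+1}}e^{-c_\alpha 2^n}\lesssim e^{-c2^j}\lesssim e^{-ct^\alpha}$ starts at level $j\asymp\log_2 t^\alpha$. This level-$j$ cut is what converts the deviation factor from $\gamma_\alpha$ to $\Delta_\rho(T)$; without it, the theorem as stated is not reached.
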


Let $\{\xi_i\}_{i=1}^n $ be a sequence of independent $ \psi_\alpha $ random variables with $ \Vert \xi_i\Vert_{\psi_\alpha}\leq v $.
By Lemma A.3 in  \cite{gotze2021concentration}, we have the following triangle inequality,  
\begin{equation*}
	\Vert X+Y\Vert_{\psi_\alpha}\leq K_{\alpha}(\Vert X\Vert_{\psi_\alpha}+\Vert Y\Vert_{\psi_\alpha} ),
\end{equation*}
where $ K_{\alpha}=2^{1/\alpha} $ for $ \alpha\in (0,1) $ and $ K_{\alpha}=1 $ for $ \alpha\geq1 $.
We  define the trivial metric on $ T=\{1,2,\cdots,n\} $,
\begin{equation*}
\rho(s,t)=\begin{cases}
2K_{\alpha}v, t\neq s,\\
0,t=s.
\end{cases}
\end{equation*}
With this metric, we  construct an admissible sequence $ \{T_n\} $. Note that there exists $\tilde{m} $ such that $ 2^{2^{\tilde{m}}}\leq n\leq 2^{2^{\tilde{m}+1}} $.
For $ m\leq \tilde{m}+1 $, take $ T_m=\{1\}$. 
For $ m>\tilde{m}+1 $, take $ T_m=T $. 
Then, we obtain 
\begin{equation*}
\sup_{t\in T}\sum_{m=0}^{\infty}\rho(t,T_n)=\sup_{t\in T} \sum_{m=0}^{\tilde{m}+1} 2^{m/\alpha}\rho(t,T_m)=2K_{\alpha}v \sum_{m=0}^{\tilde{m}+1} 2^{m/\alpha}=v C_{\alpha}2^{\tilde{m}/\alpha},
\end{equation*}
where $C_\alpha$ is a constant only dependent on $\alpha$. 
Noting that  $ 2^{\tilde{m}} \lesssim \log n $, we  obtain
\begin{equation*}
\gamma_{\alpha}(T,\rho)\leq C_{\alpha}v (\log n)^{1/\alpha}.
\end{equation*}
Note 
\begin{equation*}
\sup_{i\in\{1,\cdots,n\}}|\xi_i| \leq |\xi_1|+   \sup_{i\in\{1,\cdots,n\}}|\xi_i-\xi_1|.
\end{equation*}
By (\ref{TL:in:dirksen}) we get
\begin{align*}
 &\Pb\left(\sup_{i\in\{1,\cdots,n\}}|\xi_i|\geq e^{1/\alpha}\left(C_{1,\alpha}v(\log n)^{1/\alpha}+ (C_{2,\alpha}+1) v t\right)\right) \\
 \leq&\Pb\left(\sup_{i\in\{1,\cdots,n\}}|\xi_i-\xi_1|\geq e^{1/\alpha}\left(C_{1,\alpha}v(\log n)^{1/\alpha}+ C_{2,\alpha} v t\right)\right)+\Pb\left(|\xi_1|\geq e^{1/\alpha}v t\right)\\
 \leq &\exp(-t^{\alpha}/\alpha)+\frac{2}{e}\exp(-t^\alpha/\alpha)\leq 2\exp(-t^\alpha/\alpha) ,
\end{align*}
where we use  $\Pb(X+Y\geq a+b)\leq \Pb(X\geq a)+\Pb(Y\geq b)$. 
In a cleaner form, we present the concentration inequality
\begin{equation*}
 \Pb\left(\sup_{i\in\{1,\cdots,n\}}|\xi_i|\geq C_{1,\alpha} v(\log n)^{1/\alpha}+ C_{2,\alpha} v t^{1/\alpha}\right) \leq 2\exp(-t)  
\end{equation*}
As a consequence, if $ \{\xi_i\}_{i=1}^n $ is a sequence of  sub-exponential random variables satisfying $ \Vert \xi_i\Vert_{\psi_2}\leq v $, there exist constants $ C_1, C_2 $ such that
\begin{equation}\label{TL:inequality: max subgaussian}
\Pb\left(\sup_{i\in\{1,\ldots,n\}} |\xi_i|\geq C_1 v\sqrt{\log n}+C_2v\sqrt{t}\right)\leq 2\exp(-t).
\end{equation}

\bibliography{ref}

\end{document}